\definecolor{DarkGreen}{rgb}{0.075,0.375,0.075}
\definecolor{DarkRed}{rgb}{0.5,0.1,0.1}
\definecolor{DarkBlue}{rgb}{0.1,0.1,0.5}
\definecolor{Gray}{rgb}{0.2,0.2,0.2}
\def\eqref#1{equation~\ref{#1}}
\def\1{\bm{1}}
\DeclareMathAlphabet{\mathsfit}{\encodingdefault}{\sfdefault}{m}{sl}
\SetMathAlphabet{\mathsfit}{bold}{\encodingdefault}{\sfdefault}{bx}{n}
\def\gO{{\mathcal{O}}}
\def\gU{{\mathcal{U}}}
\newcommand{\ALLOC}{\mathsf{ALLOC}}
\newcommand{\OPT}{\mathsf{OPT}}
\newcommand{\LTK}{\mathsf{LTK}}
\newcommand{\htau}{\hat{\tau}}
\newcommand{\Bern}{\mathrm{Bern}}
\newcommand{\FullCATE}{\mathsf{FullCATE}}
\newcommand{\rand}{\mathrm{rand}}
\newtheorem{theorem}{Theorem}
\newtheorem{definition}{Definition}
\newtheorem{lemma}[theorem]{Lemma}
\newtheorem{claim}{Claim}[section]
\newcommand{\cF}{\mathcal F}
\newcommand{\cO}{\mathcal O}
\newcommand{\cU}{\mathcal U}
\newcommand{\cX}{\mathcal X}
\definecolor{HighlightOrange}{RGB}{255, 165, 0}
\newcommand{\cm}[1]{\textcolor{blue}{[{#1}]}}
\renewcommand{\cm}[1]{}
\title{Good Allocations from Bad Estimates}
\author{%
  S\'ilvia Casacuberta$^{1}$\footnote{Work primarily done while interning at the Max Planck Institute for Intelligent Systems.}\: and Moritz Hardt$^{2, 3}$
}
\date{
$^{1}$\textit{Stanford University}\\
$^{2}$\textit{Max Planck Institute for Intelligent Systems, Tübingen}\\
$^{3}$\textit{Tübingen AI Center}\\
}
\newcounter{daggerfootnote}
\begin{document}

\onecolumn
\maketitle

\begin{abstract}
Conditional average treatment effect (CATE) estimation is the de facto gold standard for targeting a treatment to a heterogeneous population. The method estimates treatment effects up to an error~$\epsilon > 0$ in each of $M$ different strata of the population, targeting individuals in decreasing order of estimated treatment effect until the budget runs out. In general, this method requires $O(M/\epsilon^2)$ samples. This is best possible if the goal is to estimate all treatment effects up to an~$\epsilon$ error. 
In this work, we show how to achieve the same total treatment effect as CATE with only $O(M/\epsilon)$ samples for natural distributions of treatment effects. The key insight is that coarse estimates suffice for near-optimal treatment allocations. In addition, we show that budget flexibility can further reduce the sample complexity of allocation.
Finally, we evaluate our algorithm on various real-world RCT datasets. In all cases, it finds nearly optimal treatment allocations with surprisingly few samples. Our work highlights the fundamental distinction between treatment effect estimation and treatment allocation: the latter requires far fewer samples.
\end{abstract}

\section{Introduction}
Different groups in a population---be it schools, counties, or age brackets---often respond differently to a treatment \citep{gail1985testing,
heckman1985alternative,
imbens1994identification,
banerjee2009experimental}. This empirical fact has motivated a significant body of work on estimating conditional average treatment effects (CATE), see, e.g., \cite{imbens2015causal, athey2016recursive, athey2019generalized, kunzel2019metalearners}. 
A key application of CATE estimation is in welfare-maximizing \emph{treatment allocation}: assigning a limited number of treatments to those groups who benefit most from treatment. The optimal treatment allocation selects groups in descending order of treatment effect until the budget runs out. In practice, however, treatment effects first have to be estimated from data, such as the responses from a randomized controlled trial (RCT) on the population.

The standard method estimates the treatment effect in each group and allocates in descending order of estimated treatment effects. Good CATE estimates therefore seem to be the necessary first step of treatment allocation.
Indeed, estimating a single average treatment effect in one group up to error $\epsilon>0$ boils down to mean estimation and requires $\Theta(1/\epsilon^2)$ samples. This sample size requirement holds robustly for almost all problem instances with few exceptions. By extension, given $M$ non-overlapping groups, CATE estimation requires $\Theta(M/\epsilon^2)$ samples---as does finding a $(1-\epsilon)$-optimal treatment allocation. And so it appears that the two problems are essentially equivalent, at least in the \emph{worst-case}. 

In contrast, we show that for \emph{typical} instances, we can find a $(1-\epsilon)$-optimal allocation with only $O(M/\epsilon)$ samples. Hence, for $M=O(1)$, treatment allocation typically requires quadratically fewer samples than treatment effect estimation. Whereas estimation has a robust quadratic lower bound, we show that the quadratic lower bound for allocation is \emph{brittle}: It’s easy to circumvent in theory with natural assumptions and it doesn’t arise in any of the real-world datasets we examine. 
Our results follow from a simple but powerful win-win situation: If the treatment effect in a group is well above or well below the optimal cut-off, we can figure that out with very few samples. Groups close to the threshold, on the other hand, have similar treatment effects. Therefore, we don’t lose much if we mix them up. The only bad case arises when all units cluster around the threshold, not too close and not too far. We turn this intuitive observation into a precise instance-dependent upper bound on the sample complexity of near-optimal treatment allocation. The formal argument is delicate, since we don’t know the optimal threshold and we can only work from coarse estimates. 

In a nutshell, our theory predicts that near-optimal allocation almost always has a linear---not quadratic---dependence on $1/\epsilon$. We thoroughly verify this prediction in five real-world RCT datasets. In all cases, we can find near-optimal allocations with even fewer samples than our upper bound suggests. Fundamentally, our work highlights the stark difference between allocation and estimation: Nearly optimal allocations do \emph{not} necessarily require highly accurate estimates.

\subsection{Our contributions}

Consider a partition of the population into $M$ groups
and a budget $K\in\{1,\dots, M\}$ that allows treating $K$ out of $M$ groups, such as schools, hospitals, or different age brackets. We refer to a group as a \emph{unit} to indicate that for the purpose of treatment allocation we don't distinguish between individuals within a unit---we either treat all or none of the individuals within a unit. We assume that the cost of treating each unit is the same. Our goal is to identify the $K$ units that would most benefit from receiving a specific treatment. Let $\tau(u)\in[0,1]$ denote the average effect of treatment in unit $u\in\{1,\dots,M\}$. A treatment allocation $\cU\subseteq[M]$ is any subset of $K=|\cU|$ units. The \emph{value} achieved by an allocation~$\cU$ is the total sum $\sum_{u\in \cU}\tau(u)$ of treatment effects. An optimal allocation selects the $K$ units with the highest $\tau(u)$ values, breaking ties arbitrarily.
We let $V^*$ denote the value achieved by an optimal allocation.

In this paper, we study the sample complexity of finding a near-optimal allocation. We say that an allocation is $(1-\epsilon)$-optimal if it achieves a value $V$ that satisfies $V/V^* \geq 1-\epsilon.$ For bounded treatment effects, standard arguments show that we can always find a $(1-\epsilon)$-optimal allocation from $O(M/\epsilon^2)$ samples. Moreover, there is a problem instance that requires $\Omega(M/\epsilon^2)$ many samples. While these well-known bounds settle the worst-case sample complexity, our work shows that the typical sample complexity of treatment allocation is far lower. What matters for our theoretical results is the shape of the distribution of treatment effects. For all reasonably smooth distributions---those that don't put excessive mass on small intervals---we prove that $O(M/\epsilon)$ many samples suffice to get a $(1-\epsilon)$-optimal allocation.
\medskip
\begin{theorem}[Informal]
    If the distribution of treatment effect values $\tau(u)$ is ``smooth'', we can obtain a $(1-\epsilon)$-optimal allocation for any budget $K \leq M$ with $O(M/\epsilon)$ many samples.
\end{theorem}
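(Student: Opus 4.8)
We analyze the natural ``estimate, then allocate greedily'' procedure. For each of the $M$ units draw enough samples to form an empirical treatment-effect estimate $\htau(u)$, and output the allocation $\hat\cU$ given by the $K$ units with the largest $\htau(u)$ (breaking ties by an infinitesimal perturbation). Let $\theta^*$ be the $K$-th largest value among $\tau(1),\dots,\tau(M)$, let $\cU^*$ be an optimal size-$K$ allocation (it contains every unit with $\tau(u)>\theta^*$), and note $V^*=\sum_{u\in\cU^*}\tau(u)\ge K\theta^*$. By Hoeffding's inequality and a union bound over units, with $n=\Theta(\delta^{-2}\log(M/\beta))$ samples per unit we get $|\htau(u)-\tau(u)|\le\delta$ for all $u$ with probability $\ge 1-\beta$. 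On this event, every unit with $\tau(u)>\theta^*+2\delta$ is selected and every unit with $\tau(u)<\theta^*-2\delta$ is rejected; otherwise we would exhibit more than $K-1$ units with $\tau>\theta^*$, or more than $M-K$ units with $\tau<\theta^*$, contradicting the definition of $\theta^*$. Hence $\hat\cU$ and $\cU^*$ can differ only inside the confusion band $B_\delta:=\{u:|\tau(u)-\theta^*|\le 2\delta\}$.

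Next we bound the value lost. Pairing the units of $\cU^*\setminus\hat\cU$ with those of $\hat\cU\setminus\cU^*$ yields $D$ swapped pairs $(u,v)$, and $D\le\min\{K,\tfrac12|B_\delta|\}$ because all $2D$ involved units lie in $B_\delta$. For each pair $\theta^*\le\tau(u)\le\theta^*+2\delta$ while $\theta^*-2\delta\le\tau(v)\le\theta^*$, so $\tau(u)-\tau(v)\le 4\delta$ and
\begin{equation}
V^*-V\;\le\;4\delta\,D\;\le\;4\delta\min\{K,\tfrac12|B_\delta|\}.
\end{equation}
This is the precise ``win--win'': classifying units far from $\theta^*$ costs nothing, while confusing two units inside $B_\delta$ costs only $O(\delta)$ because their treatment effects are nearly equal. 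The only remaining quantity to control is the population of $B_\delta$.

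The smoothness hypothesis is exactly what controls it: asking that no short interval of treatment-effect values carry an excessive fraction of the units, i.e.\ $|\{u:\tau(u)\in I\}|\le c|I|M$ for every interval $I$, gives $|B_\delta|\le 4c\delta M$ and hence $V^*-V\le 16c\delta^2 M$. Choosing $\delta$ so that $16c\delta^2 M\le \epsilon K\theta^*\le\epsilon V^*$, i.e.\ $n=\Theta\!\big(\tfrac{cM}{\epsilon K\theta^*}\log(M/\beta)\big)$ samples per unit, makes $\hat\cU$ a $(1-\epsilon)$-optimal allocation with probability $\ge 1-\beta$. In the regime the theorem highlights --- bounded $M$, bounded smoothness constant $c$, and $K\theta^*=\Omega(M)$ (which holds once a constant fraction of units have a non-negligible effect) --- this is $O(M/\epsilon)$ samples in total, and crucially the dependence on $1/\epsilon$ is \emph{linear}: the saving over the worst-case $O(M/\epsilon^2)$ comes entirely from the fact that coarse estimates of accuracy $\delta=\Theta(\sqrt\epsilon)$, rather than $\delta=\Theta(\epsilon)$, already suffice, while when $\theta^*$ is small only the instance-dependent factor $1/\theta^*$ (never the $1/\epsilon$ rate) degrades.

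The main obstacle is that the choice of $n$ above refers to the unknown $\theta^*$, $|B_\delta|$, and $V^*$, so as stated this is only an existence result and it pays the worst case even on easy instances. Turning it into a genuine, instance-adaptive algorithm is the delicate part: sample in geometrically doubling phases, maintain confidence intervals of shrinking half-width $\delta_t$, commit a unit ``in'' or ``out'' as soon as its interval separates cleanly from a running estimate of the $K$-th largest effect, and stop at the first phase at which a high-confidence upper bound on the residual loss $4\delta_t\min\{K,\tfrac12|S_t|\}$ over the still-active set $S_t$ drops below $\epsilon$ times a high-confidence lower bound on $V^*$. The hard steps are (i) showing this stopping rule fires already at $\delta_t=\Theta(\sqrt\epsilon)$ --- that \emph{coarse} plug-in estimates of the threshold and of $V^*$ are accurate enough both to certify the loss bound and to trigger the stop --- which needs a union bound over phases as well as units and an argument that active units genuinely lie within $O(\delta_t)$ of $\theta^*$; and (ii) disposing of the degenerate case in which $V^*$ itself is tiny, where one argues directly that any allocation built from the confidently-positive units, padded arbitrarily, is already $(1-\epsilon)$-optimal.
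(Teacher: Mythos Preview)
Your core argument is correct and matches the paper's approach almost step for step: the same ``estimate to accuracy $\delta$, take the top $K$'' algorithm, the same observation that the selected and optimal sets can differ only inside a band $[\theta^*-2\delta,\theta^*+2\delta]$, the same per-swap loss of at most $4\delta$, and the same use of a smoothness condition (the paper's $\rho$-regularity is exactly your $|\{u:\tau(u)\in I\}|\le c|I|M$) to bound the band's population by $O(\delta M)$, yielding a total loss of $O(\delta^2 M)$ and hence the choice $\delta=\Theta(\sqrt\epsilon)$.

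Where you diverge is in handling the unknown constants. You flag that $\delta$ depends on the unknown $\theta^*$, $V^*$, and $|B_\delta|$, and propose an adaptive doubling scheme to discover them. The paper deliberately does \emph{not} do this: Algorithm~1 is non-adaptive by design, calling the oracle once with $\rho=\gamma\sqrt\epsilon$ for a fixed $\gamma=\Theta(1)$, precisely so that it can be implemented in a single-round RCT. The paper absorbs the instance dependence into the constant $\gamma=\sqrt{V_{\cU^*}/(8c)}$ (Theorem~\ref{thm:general-dist-main}) and then handles the ``but we don't know $\gamma$'' issue separately, via a one-sided certificate computable from the coarse estimates themselves (Claim~\ref{claim:necessary-sufficient} and the surrounding discussion in Section~\ref{sec:general-Q-condition}) and via budget-sliding when the certificate fails (Section~\ref{sec:flexible-budget}). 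Your adaptive route would also work and is arguably cleaner as a pure sample-complexity result, but it sacrifices the one-shot property the paper cares about, and the phase-wise union bound and stopping-rule analysis you sketch are genuinely more delicate than what the paper needs for its non-adaptive statement. For the informal theorem as stated, your first three paragraphs already suffice; the last two are extra.
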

As illustrated in Figure~\ref{fig:visualization}, the key insight behind the theorem is that we only need to estimate the treatment effect values $\tau(u)$ up to accuracy $\rho = \Theta(\sqrt{\epsilon})$, rather than $\epsilon$, in order to obtain a near-optimal allocation. Intuitively, the problem of finding an optimal allocation boils down to the problem of deciding, for each unit $u$, whether $\tau(u)$ is above or below the cut-off $\tau_K$, where $\tau_K$ is the smallest treatment effect of any unit in an optimal allocation.
For the units that have $\tau(u)$ value far from $\tau_K$, we can determine so by estimating $\tau(u)$ to low accuracy. 
For the units that are close to the threshold $\tau_K$, mistakenly selecting one unit for another does not change the value of the allocation much. 

We show that an accuracy level of $\rho = \Theta(\sqrt{\epsilon})$ strikes the best balance between having the lowest possible level of accuracy while not losing too much value in the allocation. For this balance to hold, we need the number of units around the threshold $\tau_K$ to be reasonable.
We formalize this required notion of ``smoothness'' through our definition of $\rho$-\emph{regularity}, which requires that any interval of length at least $\rho$ contain at most $O(\rho M)$ units, that is, a multiple of what we would expect under the uniform distribution. Thus, $\rho$-regularity is a weak measure of closeness to the uniform distribution. Many natural distribution families, such as Gaussians or Beta distributions, are $\rho$-regular for a reasonable constant.
The most regular distribution is the uniform distribution itself. Here, our analysis provides the biggest sample improvements for allocation. We prove that even in this case CATE estimation remains hard, requiring $\Omega(M/\epsilon^2)$ samples. This formally proves a quadratic separation between treatment allocation and CATE estimation. In particular, CATE estimation remains hard even after imposing $\rho$-regularity.

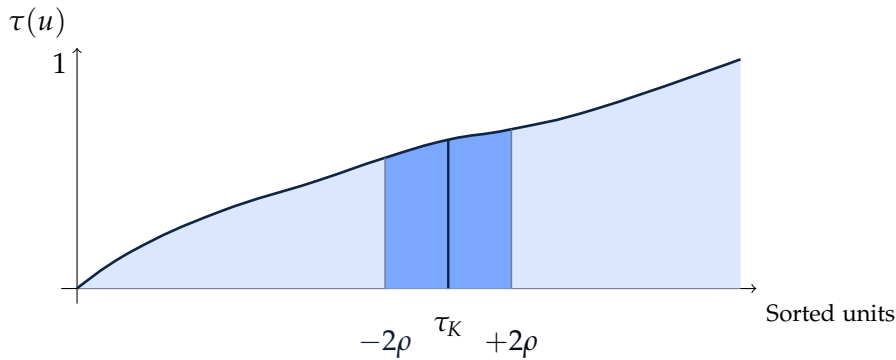
\begin{figure}
\centering
\begin{tikzpicture}[x=1.05cm,y=1cm]

\definecolor{OxBlue}{HTML}{0F2544}
\definecolor{OutBlue}{HTML}{DBE7FF}   
\definecolor{InBlue}{HTML}{7BA7FF}    

\draw[->,line width=0.4pt] (-0.2,0) -- (8.6,0)
      node[below right,yshift=-2pt]{\footnotesize Sorted units};
\draw[->,line width=0.4pt] (0,-0.2) -- (0,3.2) node[above left]{$\tau(u)$};
\node[left] at (0,3){$1$};

\path
 (0,0) coordinate(P0)
 (0.7,0.50) coordinate(P1)
 (1.8,1.02) coordinate(P2)
 (3.0,1.42) coordinate(P3)
 (3.9,1.74) coordinate(P4)
 (4.7,1.98) coordinate(P5) 
 (5.5,2.12) coordinate(P6)
 (6.6,2.40) coordinate(P7)
 (8.4,3.05) coordinate(P8);

\def\xK{4.70}
\def\d{0.80}
\pgfmathsetmacro{\xA}{\xK-\d}
\pgfmathsetmacro{\xB}{\xK+\d}

\fill[OutBlue]
 (0,0) --
 plot[smooth,tension=0.8] coordinates{(P0)(P1)(P2)(P3)(P4)(P5)(P6)(P7)(P8)}
 -- (8.4,0) -- cycle;

\begin{scope}
  \clip (\xA,0) rectangle (\xB,4);
  \fill[InBlue]
   (0,0) --
   plot[smooth,tension=0.8] coordinates{(P0)(P1)(P2)(P3)(P4)(P5)(P6)(P7)(P8)}
   -- (8.4,0) -- cycle;
\end{scope}

\draw[line width=1pt,OxBlue]
 plot[smooth,tension=0.8] coordinates{(P0)(P1)(P2)(P3)(P4)(P5)(P6)(P7)(P8)};

\begin{scope}
  \clip (0,0) --
        plot[smooth,tension=0.8] coordinates{(P0)(P1)(P2)(P3)(P4)(P5)(P6)(P7)(P8)}
        -- (8.4,0) -- cycle;
  \draw[line width=0.55pt,OxBlue!60] (\xA,0) -- (\xA,4);
  \draw[line width=0.9pt, OxBlue]    (\xK,0) -- (\xK,4);
  \draw[line width=0.55pt,OxBlue!60] (\xB,0) -- (\xB,4);
\end{scope}

\node[below,yshift=-7pt] at (\xK,0) {\(\displaystyle \tau_K\)};
\node[below,yshift=-12pt,OxBlue!80!black] at (\xA,0) {\(\displaystyle -2\rho\)};
\node[below,yshift=-12pt,OxBlue!30!black] at (\xB,0) {\(\displaystyle +2\rho\)};

\end{tikzpicture}
\caption{For the purposes of treatment allocation, we do not require highly-accurate CATE estimates for groups that have treatment effect values bounded away from the allocation threshold $\tau_K$.}
\label{fig:visualization}
\end{figure}

\paragraph{Empirical evaluation.} We consider data from five real-world randomized controlled trials across various domains (education, economic development, labor economics, and healthcare, as detailed in Section~\ref{sec:experiments}). 
For each dataset, we evaluate how many samples are needed to obtain a near-optimal allocation. 
To do so, we treat the treatment effects estimated from the full RCT dataset as the ground truth. We then subsample the dataset to various sample sizes and compute the allocation value realized by our algorithm. As shown in Figure~\ref{fig:main-summary-RCTs},
in all cases we find near-optimal allocations with fewer samples than our theoretical upper bound predicts. 
This suggests that real-world treatment effect distributions meet the assumptions of our theory. 
As a robustness check, we replicate the results across different partitions of the populations and for varying budget sizes.

\paragraph{Discussion and limitations.}
There's an immediate practical takeaway relevant to future policy decisions about targeting welfare-promoting interventions. The standard sample size calculations for CATE estimation are excessively pessimistic for the purpose of treatment allocation. Indeed, we can find nearly optimal allocations from coarse treatment effect estimates. Perhaps counterintuitively, an RCT that is severely underpowered for CATE estimation can still yield excellent allocations. As a rule of thumb, $M/\epsilon$ samples suffice for a $(1-\epsilon)$-optimal allocation.

While our smoothness requirement is typically met in practice, it may not always hold. We address this limitation in two ways. First, we expose an exact instance-dependent optimality condition (Section~\ref{sec:general-Q-condition}) that can be computed from coarse estimates as well. Thus, a policymaker can \emph{certify} from few samples that a solution is near-optimal or---if it isn't---invest in additional samples. Alternatively, we consider strategies for the policymaker to mitigate cases where there are too many units around the optimal threshold $\tau_K$ \emph{without} additional samples. Specifically, we study the strategies of underspending and overspending on the original budget~$K$ in order to obtain a $(1-\epsilon)$-optimal allocation with $O(M/\epsilon)$ many samples. We show that, in practice, in the few cases where a close-to-optimal allocation is not realized, we can find a very close threshold $\tau_{K'}$ at which we do realize it with only $O(M/\epsilon)$ many samples.

To summarize, our work strongly separates the sample complexity treatment effect \emph{estimation} from that of treatment \emph{allocation}. The latter requires far fewer samples than the former. This stark separation has the potential to inform future policy decisions about allocating scarce resources.

\begin{figure}
    \centering
    \begin{subfigure}{0.27\textwidth}
        \centering
        \includegraphics[width=\linewidth]{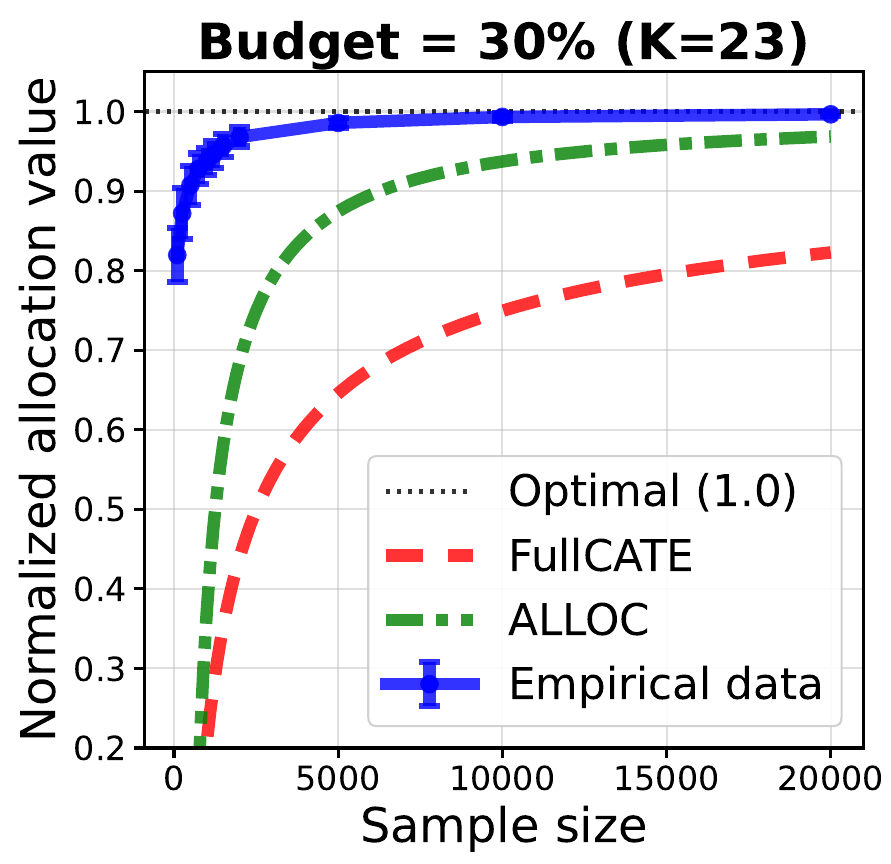}
        \caption{STAR, schools.}
        \label{fig:sub1}
    \end{subfigure}\hfill
    \begin{subfigure}{0.27\textwidth}
        \centering
        \includegraphics[width=\linewidth]{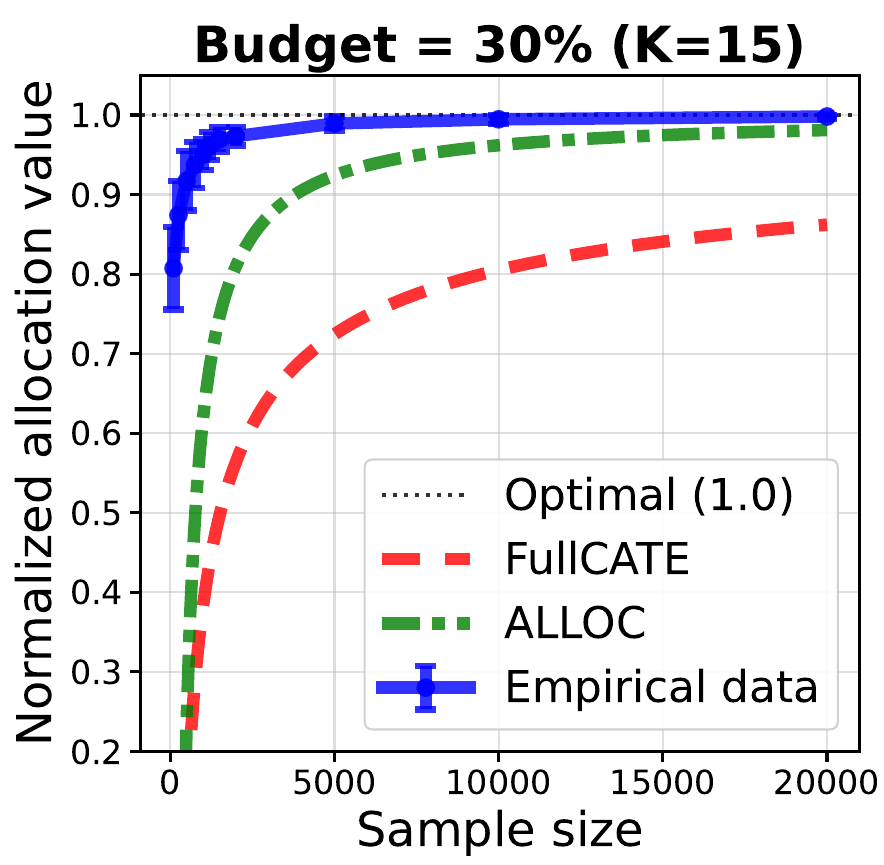}
        \caption{TUP, baseline poverty.}
        \label{fig:sub2}
    \end{subfigure}\hfill
    \begin{subfigure}{0.27\textwidth}
        \centering
        \includegraphics[width=\linewidth]{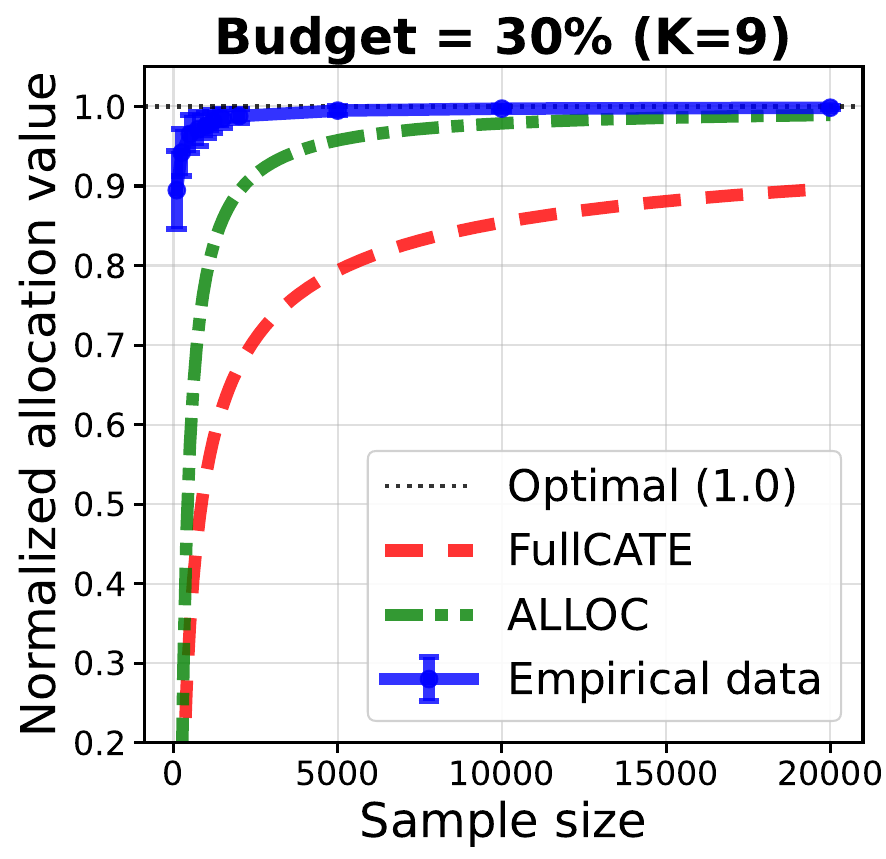}
        \caption{NSW, baseline earnings.}
        \label{fig:sub3}
    \end{subfigure}
    \medskip
    \begin{minipage}{0.60\textwidth}
        \centering
        \begin{subfigure}{0.45\textwidth}
            \centering
            \includegraphics[width=\linewidth]{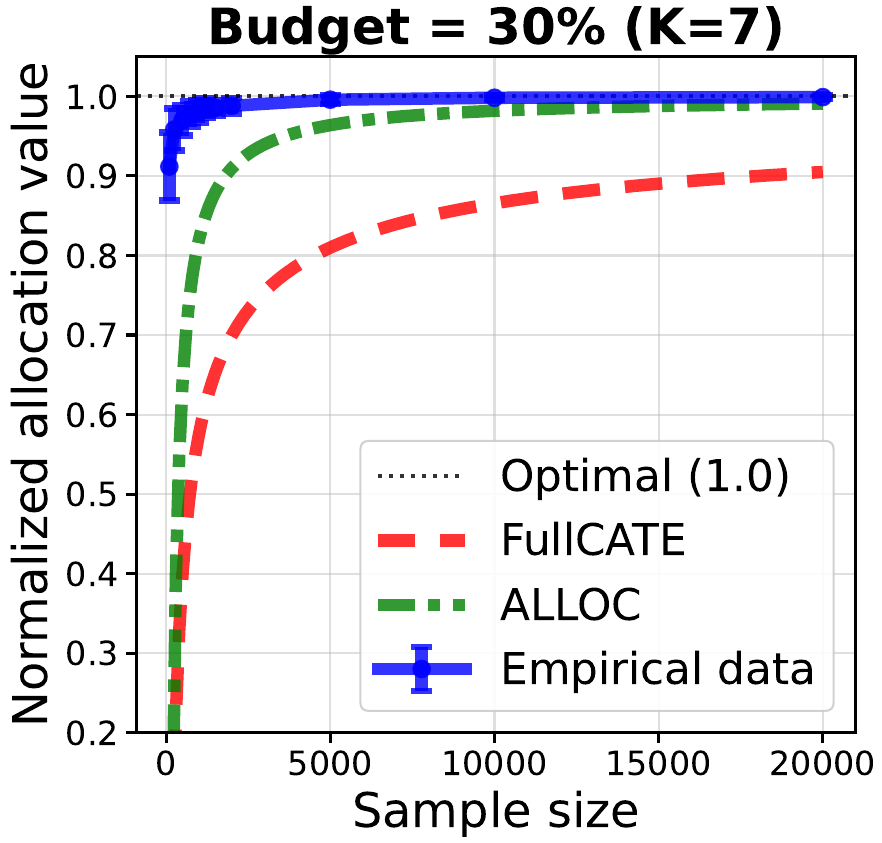}
            \caption{Acupuncture, age.}
            \label{fig:sub4}
        \end{subfigure}\hfill
        \begin{subfigure}{0.45\textwidth}
            \centering
            \includegraphics[width=\linewidth]{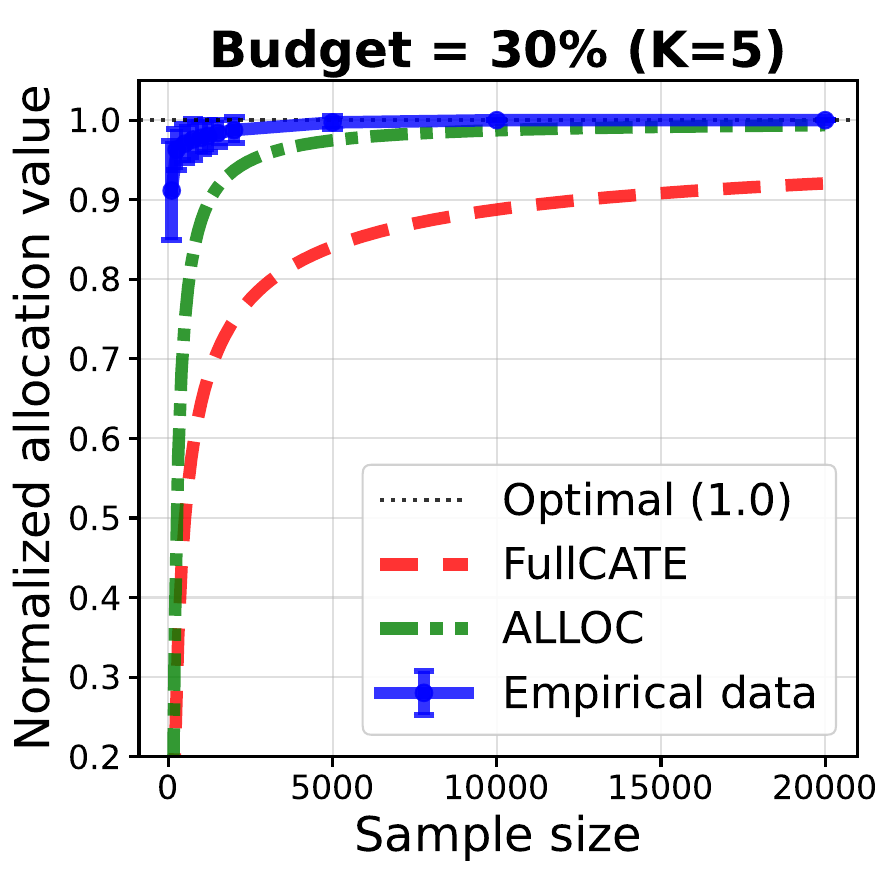}
            \caption{Post-op, BMI.}
            \label{fig:sub5}
        \end{subfigure}
    \end{minipage}
    \caption{In all cases, we realize a close-to-optimal allocation with very few samples (blue), much less than the worst-case of $O(M/\epsilon^2)$ (red) and even less than our theoretical bound $O(M/\epsilon)$ (green).}
    \label{fig:main-summary-RCTs}
\end{figure}

\subsection{Related work}

Perdomo et al. \cite{wisconsin} and Shirali et al. \cite{abebe_moritz_24} inspired our work by showing that effective resource allocation need not require accurate predictions. These works bring into focus a critical examination of prediction in resource allocation \citep{barabas2018interventions,perdomo2023relative, shirali2025hidden, fischer2025value, mashiat2025pays}. Our results extend this line of work with a fundamental observation: Extremely coarse treatment effect estimates can still yield near-optimal allocations. Perhaps closest to our work, \cite{abebe_moritz_24}~contrast unit-level allocations with individual-level targeting, showing that a measure of inter-unit inequality makes unit-level allocations competitive with individual-level targeting. In our work, allocations are always unit-level; we study how accurate unit-level treatment effect estimates have to be for the purpose of allocation.

Another recent line of work studies different strategies for using RCT data effectively \citep{sharma2024comparing,cortes2024data,wilder2025learning}.
In line with these results, our work provides further evidence that investing in greater accuracy is not always necessary in order to achieve a more efficient intervention. Unlike these works, our paper focuses on sample complexity upper bounds for treatment allocation, rather than on considering alternative strategies for estimating treatment effects.
There is a vast causal inference literature on learning optimal targeting rules that address heterogeneity in the population \citep{manski2004statistical, qian2011performance, kitagawa2018should, kallus2018balanced, athey2021policy}. Much work tackles the case of large $M$ using machine learning methods \citep{chernozhukov2018double, nie2021quasi}, which is not our focus. There's also much work targeting rules subject to budget constraints, e.g., \cite{bhattacharya2012inferring, luedtke2016optimal, le2019batch}.

Our work also relates to the literature on bandits, specifically to the problem of identifying good arms \citep{audibert2010best} and the numerous variants of this problem, such as identifying any subset of a set of good arms \citep{kalyanakrishnan2012pac, bubeck2013multiple,zhou2014optimal,cao2015top,agarwal2017learning,chen2017nearly,jamieson2018bandit,chaudhuri2019pac, rejwan2020top}. 
The intuition behind some of these algorithms is similar to the key idea described in Figure~\ref{fig:visualization}.
However, our algorithm is non-adaptive, finding a near-optimal allocation from a single sample; no repeated estimation or adaptive sampling is necessary. The lower bound for CATE estimation runs through the bandit literature in various guises, see, e.g., \cite{kaufmann2016complexity, katz2020true}.

\section{Notation \& Preliminaries}
We have a population $\cX$ divided into a set $\cU$ of $M$ units.
Each unit $u \in \mathcal{U}$ is independent from the others, and we wish to select $K$ out of the $M$ units to carry out an intervention.
We operate in the setting of a resource-constrained positive intervention: ideally, we would like to treat all units, but a budget limits the number of units that we can intervene on.
In order to decide which units to select, a typical approach is for the policymaker to estimate the average treatment effect $\tau(u)$ in each unit $u$, and then choose the $K$ units with the highest $\tau(u)$ values. Throughout we assume bounded treatment effects $\tau(u)\in[0,1],$ normalized to the unit interval.

Estimating the average treatment effect within a unit---either via an RCT or an observational design---is the well-established problem of causal inference that is not the subject of this work. We therefore assume that we can get the average treatment effect $\tau(u)$ up to an additive error $\epsilon>0$ at the cost of $O(1/\epsilon^2)$ samples. This assumption hides the challenges of causal inference, but exposes the sample complexity relevant for our argument.
\medskip
\begin{definition}
    Call $\hat{\tau}$ a $(\epsilon, \delta)$-\emph{accurate estimate} of $\tau$ if $|\hat{\tau}-\tau|\leq \epsilon$ with probability $1-\delta$. 
\end{definition}
\medskip
\begin{definition}[Estimation oracle]\label{def:RCT-oracle}
    Given a unit $u\in \gU$ and parameters $\epsilon, \delta >0$, an \emph{estimation oracle}~$\gO$ returns a $(\epsilon, \delta)$-accurate estimate $\hat{\tau}(u)$ of $\tau(u)$ at the cost of $O(\ln(2/\delta)/\epsilon^2)$ samples.
\end{definition}
Throughout the paper, we always assume that the estimation oracle $\gO$ is available to the algorithm. The sample complexity stated in Definition~\ref{def:RCT-oracle} follows from Hoeffding's inequality.
Namely, by Hoeffding's inequality, given that $\tau(u) \in [0,1]$ for all $u$,
it follows that with probability at least $1-\delta$, 
\[
    \Pr\big[\big|\hat{\tau}(u) - \tau(u)\big| \geq \epsilon \big] \leq 2e^{-2m \epsilon^2}.
\]
Hence, if we want $\hat{\tau}(u)$ to be a $(\epsilon, \delta)$-accurate estimate of the true value $\tau(u)$, we need to take $m \geq O(\ln(2/\delta)/\epsilon^2)$ many samples from $u$. 
The upper bound is tight up to constants except in special cases. This follows from standard lower bounds for mean estimation in Bernoulli families \citep{lecam1973convergence, wasserman2004all, tsybakov2008introduction}. These lower bounds extend to estimating average treatment effects \citep{imbens2015causal}.
Typically, in the setting of treatment effect estimation, we compute a $(\epsilon, \delta)$-accurate estimate for every unit. We refer to this problem as the $\FullCATE$ problem.
\medskip
\begin{definition}[$\FullCATE$ problem]
    Given a population $\cX$ of individuals divided into a set of $M$ units $\cU$, solving the $\FullCATE(\cX, \cU, \epsilon, \delta)$ problem consists of producing $M$ estimates $\htau$ such that $\htau(u)$ is a $(\epsilon, \delta)$-accurate estimate of $\tau(u)$ for each $u \in \cU$.
\end{definition}
We can compute the sample complexity of the $\FullCATE$ problem if we solve it by calling the estimation oracle with parameters $(\epsilon, \delta)$ for each unit $u \in \cU$, making a total of $M$ independent calls.
This upper bound is again tight, in general, up to constant factors when groups are non-overlapping, since each sample can only contribute to estimating one of the treatment effects. 
Formally:
\medskip
\begin{lemma}\label{lemma:fullCATE-samples}
    Having access to an estimation oracle $\cO$, we can solve the $\FullCATE(\cX, \cU, \epsilon, \delta)$ problem with a total of $N_{\FullCATE} = O(M\ln(2/\delta)/\epsilon^2)$ samples from $\cX$.
\end{lemma}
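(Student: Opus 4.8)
The plan is to solve $\FullCATE(\cX,\cU,\epsilon,\delta)$ in the most direct way possible: process the $M$ units one at a time and invoke the estimation oracle exactly once per unit. Concretely, for each $u\in\cU$ I would call $\cO$ with the unit $u$ and parameters $(\epsilon,\delta)$. By Definition~\ref{def:RCT-oracle}, each such call returns an estimate $\htau(u)$ that is $(\epsilon,\delta)$-accurate --- precisely what the $\FullCATE$ problem demands of each coordinate --- at a cost of $O(\ln(2/\delta)/\epsilon^2)$ samples drawn from $u$.

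The one thing to verify is that these per-unit samples legitimately count as samples from $\cX$, which holds because the units form a partition of the population: drawing samples from a unit $u$ is the same as drawing from $\cX$ and restricting to membership in $u$, so the $M$ oracle calls together consume samples from $\cX$. Since the units are disjoint, samples used for one unit are useless for any other, so no sharing is possible and the total sample cost is just the sum over the $M$ calls, namely $M\cdot O(\ln(2/\delta)/\epsilon^2)=O(M\ln(2/\delta)/\epsilon^2)=:N_{\FullCATE}$, which is the claimed bound.

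I do not anticipate a genuine obstacle here; the statement is essentially a bookkeeping argument combining Definition~\ref{def:RCT-oracle} with disjointness of the units. The only subtlety worth flagging is that \emph{no} union bound over the $M$ events $\{|\htau(u)-\tau(u)|\le\epsilon\}$ is needed: the $\FullCATE$ problem as stated only requires each estimate to be individually $(\epsilon,\delta)$-accurate, so calling the oracle at confidence parameter $\delta$ per unit is exactly the right setting. If one instead wanted all $M$ estimates to be simultaneously accurate with probability $1-\delta$, one would run each oracle call at parameter $\delta/M$, changing the bound only to $O(M\ln(2M/\delta)/\epsilon^2)$ --- still of the stated form for fixed $\delta$.
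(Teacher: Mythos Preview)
Your proposal is correct and matches the paper's approach exactly: the paper also simply calls the oracle once per unit with parameters $(\epsilon,\delta)$ and sums the $M$ costs. Your observation that no union bound is needed (because $\FullCATE$ only asks for per-coordinate $(\epsilon,\delta)$-accuracy) is spot on and consistent with how the paper states and uses the lemma.
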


\subsection{Allocation versus Estimation}

Our goal is to find a near-optimal allocation without incurring the cost of solving $\FullCATE.$ This means selecting the $K \leq M$ units with the highest true treatment effect values $\tau(u)$, where $K$ is determined by the given budget. We think of the treatment as a positive intervention, which is why we focus on identifying the units with the \emph{highest} values of $\tau(u)$.
For an integer $K \in \{1,\dots,M\}$, we denote the $K$-th largest value of $\tau(u)$ among the $M$ possible treatment effect values by $\tau_K$.
\medskip
\begin{definition}[Value]
    Given a set $\cU$ of $M$ units and a budget $K \leq M$, an \emph{allocation function} $g$ returns a set $\cU_{g} \subseteq \cU$ of $K$ units.
    The \emph{value} of the allocation function over an interval $A \subseteq [0,1]$, denoted $V_{\cU_g}(A)$, is defined as 
    $V_{\cU_{g}}(A) = \sum_{u \in \cU_{g} | \tau(u) \in A} \tau(u).$
    The (total) value of the allocation function $g$ for budget $K$ is equal to $V_{\cU_{g}}([0,1])$.
\end{definition}
We abbreviate $V_{\cU_{g}}([0,1])$ by $V_{\cU_{g}}$ if the interval $A = [0,1]$ can be inferred from the context. The optimal allocation for a given budget is the one that realizes the highest value.
\medskip
\begin{definition}
    Given a budget to treat $K$ units, the optimal allocation function is the one that selects the $K$ values $\tau(u)$ that maximize the value $\sum_{u \in \cU} \tau(u)$ over all subsets $\cU$ of $K$ units.
    We denote the set of units chosen by the optimal allocation function by $\cU^*$.
\end{definition}
Throughout, we assume that the true treatment effect values $\tau(u)$ are unique (otherwise, we can slightly perturb them), and thus the optimal allocation is unique.
\medskip
\begin{definition}[$\ALLOC$ problem]\label{def:alloc-problem}
    Given a budget $K \leq M$ and parameters $\epsilon, \delta$, we say that a set $\cU_g \subseteq \cU$ of $K$ units is a $(\epsilon, \delta)$-\emph{optimal} allocation $\ALLOC^*$ if $\frac{V_{\cU_g}([0,1])}{V_{\cU^*}([0,1])} \geq 1-\epsilon$
    with probability at least $1-\delta$, where the probability is taken over the coins of $g$. 
    Solving the $\ALLOC(\cX, \cU, K, \epsilon, \delta)$ problem consists of selecting a set $\cU_g$ of $K$ units that is a $(1-\epsilon, \delta)$-optimal allocation.
\end{definition}
We usually drop the parameter $\delta$, implying that the guarantee holds with high probability.
Crucially, the value of an allocation is computed with the true $\tau$ values rather than with the estimated $\htau$ values.
The standard way of solving the $\ALLOC$ problem is through the $\FullCATE$ problem.
\medskip
\begin{restatable}{lemma}{fullCATEtoALLOC}\label{lemma:fullCATE-to-ALLOC}
Given any budget $K \leq M$ and parameters $\epsilon, \delta$, we can solve the $\ALLOC(\cX, \cU, \epsilon, \delta)$ problem with $N = O(M \ln(2/\delta)/\epsilon^2)$ samples from $\cX$.
\end{restatable}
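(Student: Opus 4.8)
The plan is to reduce $\ALLOC$ to $\FullCATE$ at essentially the same sample cost. Fix an accuracy $\epsilon'$ to be pinned down below and use Lemma~\ref{lemma:fullCATE-samples} to solve $\FullCATE(\cX,\cU,\epsilon',\delta/M)$: by a union bound over the $M$ units this produces estimates $\htau$ with $|\htau(u)-\tau(u)|\le\epsilon'$ for \emph{all} $u$ simultaneously on an event $\cE$ of probability at least $1-\delta$, at a cost of $O(M\ln(2M/\delta)/\epsilon'^2)$ samples. The algorithm then outputs $\cU_g$, the $K$ units with the largest $\htau(u)$ (ties broken arbitrarily). It remains to choose $\epsilon'=\Theta(\epsilon)$ and to show that $\cU_g$ is $(1-\epsilon)$-optimal on $\cE$.

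The heart of the argument is a one-line exchange inequality. Since $\cU_g$ maximizes $\sum_{u\in\cU}\htau(u)$ over all size-$K$ subsets, in particular $\sum_{u\in\cU_g}\htau(u)\ge\sum_{u\in\cU^*}\htau(u)$, so on $\cE$,
\[
V_{\cU_g}=\sum_{u\in\cU_g}\tau(u)\geq\sum_{u\in\cU_g}\htau(u)-K\epsilon'\geq\sum_{u\in\cU^*}\htau(u)-K\epsilon'\geq\sum_{u\in\cU^*}\tau(u)-2K\epsilon'=V_{\cU^*}-2K\epsilon'.
\]
Equivalently, by a swap argument: each unit of $\cU^*\setminus\cU_g$ displaced by a unit of $\cU_g\setminus\cU^*$ costs at most $2\epsilon'$ of value, and there are at most $K$ such swaps --- the coarse version of the picture in Figure~\ref{fig:visualization}.

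It then remains to turn the additive loss $2K\epsilon'$ into the multiplicative $(1-\epsilon)$ guarantee, using $V_{\cU^*}=\sum_{u\in\cU^*}\tau(u)\ge K\tau_K$ since the optimal set consists of $K$ units each with effect at least $\tau_K$; hence $V_{\cU_g}/V_{\cU^*}\ge 1-2\epsilon'/\tau_K$, and $\epsilon'=\epsilon\tau_K/2$ suffices, giving $O(M\ln(2M/\delta)/(\epsilon\tau_K)^2)$ samples. This matches the claimed $O(M\ln(2/\delta)/\epsilon^2)$ once the $\ln M$ from the union bound and the instance constant $\tau_K$ are absorbed. I expect the exchange inequality to be entirely routine; the delicate point --- and, in the worst case over instances, the genuine obstacle --- is exactly this normalization: the accuracy $\epsilon'$ must be calibrated to the scale $\tau_K$ of the threshold, and if $\tau_K$ is not treated as a constant one calibrates $\epsilon'$ from a preliminary coarse estimate of $\tau_K$ (equivalently of $V_{\cU^*}/K$) at an extra instance-dependent cost.
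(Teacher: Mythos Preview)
Your proposal is correct and follows essentially the same route as the paper: reduce to $\FullCATE$ at accuracy $\epsilon'$ with confidence $\delta/M$, take the top-$K$ estimated units, bound the additive loss by $2K\epsilon'$, convert to a multiplicative guarantee via $V_{\cU^*}\ge K\tau_K$, and set $\epsilon'=\tau_K\epsilon/2$. Your exchange-inequality chain is a cleaner justification of the $2K\epsilon'$ bound than the paper's one-line assertion, and your closing remark about absorbing $\ln M$ and the instance constant $\tau_K$ matches exactly how the paper handles (or rather, elides) this point.
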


\begin{proof}
    By Lemma~\ref{lemma:fullCATE-samples}, we can solve the $\FullCATE(\cX, \cU, \epsilon, \delta)$ problem with parameters $\cX, \cU, \epsilon, \delta$ with $O(m \ln(2/\delta)/\epsilon^2)$ samples from $\cX$.
    We use parameters $\epsilon'$ and $\delta/M$, for $\epsilon'$ to be determined in the proof.
    This produces $M$ estimates $\htau$ such that $\htau(u)$ is a $(\epsilon', \delta/M)$-accurate estimate of $\tau(u)$ for each $u \in \cU$.
    We then define the following allocation function $g$: select the $K$ units with the highest estimated values $\htau(u)$.
    This defines the set of units $\cU_g$.

    By definition of $\tau_K$, it follows that $V_{\cU^*}([0,1]) \geq K \tau_K$.
    In the worst case, all the units that we select are wrong, and by the $\epsilon'$ accuracy guarantee we have that, in general, $V_{\cU^*}([0, 1]) - V_{\cU_g}([0,1]) \leq 2K\epsilon'$, and equivalently
    \[
        \dfrac{V_{\cU_g}([0,1])}{V_{\cU^*}([0,1])} \geq 1 - \dfrac{2K\epsilon'}{V_{\cU^*}([0,1])} \geq 1 - \dfrac{2K\epsilon'}{K \tau_K} = 1 - \dfrac{2 \epsilon'}{\tau_K}.
    \]
    Thus, if we set $\epsilon' \leq \dfrac{\tau_K}{2} \epsilon$ when calling the $\FullCATE$ problem, and by the union bound, we obtain a $(1-\epsilon, \delta)$-$\OPT$ allocation function.
    Hence, we have solved the $\ALLOC$ problem, as required.
    Equivalently, we obtain a $(1- \tau_K \epsilon /2, \delta)$-$\OPT$ allocation.
\end{proof}

\section{Directly targeting units for allocation}\label{sec:LTK-alg}

Recall, the optimal allocation for a given budget $K$ chooses the $K$ units with the largest treatment effect value $\tau(u)$.
It is clear that $\cU^* = \{u \mid \tau(u) \geq \tau_K\}$:
\medskip
\begin{claim}\label{claim:opt-alloc-k}
Given a budget $K$, 
    $\cU^* = \{u \mid \tau(u) \geq \tau_K\}$.
\end{claim}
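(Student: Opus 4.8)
The plan is to prove the set equality by an elementary exchange argument, using the two standing assumptions that the values $\tau(u)$ are distinct and that exactly $K$ out of $M$ units are selected. Write $S := \{u \in \cU \mid \tau(u) \geq \tau_K\}$. The first step is to check that $|S| = K$, so that $S$ is a legal allocation: by definition $\tau_K$ is the $K$-th largest among the $M$ values $\tau(u)$, and since these values are distinct, exactly $K$ of them lie at or above $\tau_K$ (namely the $K$ largest), while the remaining $M - K$ lie strictly below $\tau_K$.

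The second step is to show that $S$ attains the maximal value among all size-$K$ subsets, i.e. $V_S \geq V_{\cU'}$ for every $\cU' \subseteq \cU$ with $|\cU'| = K$, where I write $V_{\cU'} = \sum_{u \in \cU'} \tau(u)$ as in the definition of value. Since $|S| = |\cU'| = K$, the two symmetric-difference halves have equal size: $|S \setminus \cU'| = |\cU' \setminus S| =: t$. Every $u \in \cU' \setminus S$ has $\tau(u) < \tau_K$ (it fails the membership condition of $S$), and every $u \in S \setminus \cU'$ has $\tau(u) \geq \tau_K$. Fixing any bijection between $S \setminus \cU'$ and $\cU' \setminus S$ and summing the resulting $t$ inequalities $\tau(\cdot) \geq \tau_K > \tau(\cdot)$ gives $\sum_{u \in S \setminus \cU'} \tau(u) \geq \sum_{u \in \cU' \setminus S} \tau(u)$; adding $\sum_{u \in S \cap \cU'} \tau(u)$ to both sides yields $V_S \geq V_{\cU'}$. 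Moreover, if $\cU' \neq S$ then $t \geq 1$ and each of the $t$ summed inequalities is strict, so $V_{\cU'} < V_S$. Hence $S$ is the unique maximizer, and therefore $\cU^* = S$, which is the claimed identity.

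There is essentially no hard step here: the only points requiring a little care are (i) using distinctness of the $\tau(u)$ to pin down $|S| = K$ exactly and to upgrade ``$\geq$'' to ``$>$'' when $\cU' \neq S$, and (ii) noting that ``$u \notin S$'' means $\tau(u) < \tau_K$ rather than $\tau(u) \leq \tau_K$. As a byproduct, the argument also re-derives the uniqueness of the optimal allocation asserted earlier, so no separate appeal to that remark is needed.
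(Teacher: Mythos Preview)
Your proof is correct and follows essentially the same exchange argument as the paper: both identify the candidate set $S=\{u:\tau(u)\ge\tau_K\}$, note it has size $K$ by distinctness, and argue that replacing any element of $S$ by one outside $S$ strictly lowers the value. Your version is more explicit (spelling out the symmetric-difference bijection and the strict inequality), but the underlying idea is identical.
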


\begin{proof}
    By definition of $\tau_K$, there are exactly $K-1$ units that have $\tau(u)$ value higher than $\tau_K$, all of which are included in $\cU$. 
    Hence swapping any unit in $\cU^*$ with one not in the $\cU^*$ would decrease the value of the allocation, which would contradict the optimality of $\ALLOC^*$. 
\end{proof}

Hence, solving the allocation problem reduces to solving a threshold problem, where for a given budget $K$, the task is to determine whether $\tau(u) \geq \tau_K$ or $\tau(u) < \tau_K$ for each $u \in \cU$. 
In the former case, we decide to intervene on unit $u$; otherwise we do not. 
In order to solve this thresholding problem, we do not need to estimate each of the $\tau(u)$ values up to accuracy $\epsilon$; we only need to determine whether $\tau(u)$ lies above or below $\tau_K$.
The farther $\tau(u)$ is from $\tau_K$, the less accuracy we need for our corresponding $\hat{\tau}(u)$ estimate. Conversely, we only need high accuracy in estimating $\hat{\tau}(u)$ when $\tau(u)$ is close to the threshold $\tau_K$. 

\subsection{Low-accuracy estimation algorithm}

We introduce another parameter $\rho$ that quantifies the error to which we estimate each treatment effect. Intuitively, we only try to determine whether $\tau(u)$ is above $\tau_K+2\rho$ or not. If we believe that $\tau(u)$ belongs to the interval $[\tau_K, \tau_K+2\rho]$, then we ``give up'' and stop trying to determine the precise value of $\tau(u)$.
Hence, we want a value $\rho$ that incurs a low sample complexity, yet a nearly optimal allocation value.

As we show, the value of $\rho$ that strikes the right balance between a low sample complexity and a close-to-optimal allocation value is $\rho = \Theta(\sqrt{\epsilon})$. 
That is, instead of estimating all treatment effect values up to accuracy $\epsilon$, we only estimate them up to accuracy of the order $\sqrt{\epsilon}$ and then select the top $K$ values based on these coarse estimates.
This yields the following low-accuracy algorithm (Algorithm~\ref{alg:non-adaptive}), which is non-adaptive
 so that it can be implemented in a typical one-round RCT (further sample complexity reductions can be achieved by introducing adaptivity, for example with a multiple-stage RCT, but these are not usually implementable in practice).

\begin{algorithm}
\caption{Low-estimation non-adaptive allocation (LEA)}\label{alg:non-adaptive}
\begin{minipage}{1\textwidth}
\begin{algorithmic}
    \State \textbf{Input:} $M$ units, budget $K$, parameters $\epsilon, \delta, \gamma > 0$, where $\gamma = \Theta(1)$.
    \smallskip
    \State For each unit~$u$, obtain a $(\rho, \delta/M)$-accurate estimate $\htau(u)$ for  $\rho = \gamma \sqrt{\epsilon}$.
    \smallskip
    \State \textbf{Output:} A set $\cU_{\LTK}$ consisting of the top $K$ units in sorted order of $\htau(u)$ values.
\end{algorithmic}
\end{minipage}
\end{algorithm}

By definition, the algorithm selects all units $u$ such that $\htau(u) \geq \htau_K$. First, by Hoeffding's bound and the choice of $\rho$, 
the algorithm requires a number of samples proportional to $1/\epsilon$, rather than~$1/\epsilon^2$.
\medskip
\begin{restatable}{lemma}{numsamples}\label{lemma:num-samples}
   Given any $\cX, \cU, K$, Algorithm~\ref{alg:non-adaptive} requires $O(M \ln(2M/\delta)/\epsilon)$ many samples from $\cX$.
\end{restatable}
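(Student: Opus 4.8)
The plan is to directly instantiate the estimation oracle from Definition~\ref{def:RCT-oracle} with the parameters used inside Algorithm~\ref{alg:non-adaptive} and sum the per-unit sample costs. Concretely, the algorithm makes one call to $\gO$ for each of the $M$ units, each time requesting a $(\rho, \delta/M)$-accurate estimate with $\rho = \gamma\sqrt{\epsilon}$. By Definition~\ref{def:RCT-oracle}, a single such call costs $O\!\left(\ln(2/(\delta/M))/\rho^2\right) = O\!\left(\ln(2M/\delta)/(\gamma^2\epsilon)\right)$ samples.

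The key steps, in order, are: (i) recall that the algorithm performs exactly $M$ oracle calls, one per unit, and that these calls use accuracy parameter $\rho = \gamma\sqrt{\epsilon}$ and failure parameter $\delta/M$; (ii) substitute into the oracle's cost bound to get that each call uses $O(\ln(2M/\delta)/(\gamma^2\epsilon))$ samples; (iii) sum over the $M$ units, using that the units are disjoint so the sample budgets add, giving a total of $O(M\ln(2M/\delta)/(\gamma^2\epsilon))$; and (iv) absorb the $\Theta(1)$ factor $\gamma$ into the big-$O$ constant to conclude $O(M\ln(2M/\delta)/\epsilon)$. The substitution in step (ii) is where the claimed improvement from $1/\epsilon^2$ to $1/\epsilon$ comes from: since $\rho^2 = \gamma^2\epsilon$, the oracle's $1/\rho^2$ dependence becomes a $1/\epsilon$ dependence.

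There is essentially no obstacle here — the lemma is a bookkeeping consequence of the oracle definition and the (already justified) choice $\rho = \Theta(\sqrt\epsilon)$. The only point requiring a sentence of care is the $\ln(2M/\delta)$ rather than $\ln(2/\delta)$ term, which arises from the union-bound allocation of the failure probability across the $M$ units (so that, simultaneously with probability $1-\delta$, every $\htau(u)$ is $\rho$-accurate); this is the same device used in the proof of Lemma~\ref{lemma:fullCATE-to-ALLOC}. I would state the proof in two or three sentences, citing Definition~\ref{def:RCT-oracle} for the per-call cost and noting the union bound for the logarithmic factor.
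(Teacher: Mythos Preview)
Your proposal is correct and matches the paper's proof essentially line for line: the paper also computes the per-unit cost as $\ln(2M/\delta)/\rho^2$ (invoking Hoeffding directly rather than citing the oracle definition, which amounts to the same thing) and sums over the $M$ units, then substitutes $\rho^2 = \gamma^2\epsilon$ to turn the $1/\rho^2$ into $1/\epsilon$. The only cosmetic difference is that the paper writes out the Hoeffding inequality explicitly where you cite Definition~\ref{def:RCT-oracle}.
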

\begin{proof}
By a Hoeffding bound using parameters $\rho$ and $\delta/M$, letting $N(u)$ denote the number of samples that we require from unit $u$, we have that
\[
    \Pr\big[ | \hat{\tau}(u) - \tau(u)| \geq \rho \big] \leq 2\exp(-2 N(u) \cdot \rho^2) = \dfrac{\delta}{M} \implies N(u) = \dfrac{\ln(2M/\delta)}{2\rho^2}.
\]
Therefore, adding the number of samples across the $M$ units, we obtain that this low-accuracy estimation requires
\[
   \dfrac{M \ln(2M/\delta)}{\rho^2}  =  \dfrac{M \ln(2M/\delta)}{\epsilon} 
\]
many samples from $\cX$.
\end{proof}
By a union bound, all units $u$ satisfy $|\htau(u)-\tau(u)| \leq \rho$ with probability at least $1-\delta$.

\paragraph{Proof of the near-optimality of the allocation obtained by Algorithm~\ref{alg:non-adaptive}.}
Lemma~\ref{lemma:num-samples} establishes that the LEA algorithm requires $O(M/\epsilon)$ many samples.
Next, we need to show that, even though we are estimating each $\htau(u)$ value up to accuracy $\rho$, the allocation is $(1-\epsilon)$-optimal.
We proceed in a sequence of technical lemmas, which establish the following claims:
\begin{enumerate}
    \item First, we show that $\cU_{\LTK}$ correctly selects all units such that $\tau(u) > \tau_K+2\rho$, and correctly does \emph{not} select all units such that $\tau(u) < \tau_K-2\rho$ (Lemma~\ref{lemma:LTK-rho-guarantees}).
    \item We are left with the interval of length $4\rho$ centered around $\tau_K$, which is delicate to analyze.
    We lower-bound the quantity $\frac{V_{\cU_{\LTK}}([0,1])}{V_{\cU^*}([0,1])}$ through the quantities $\rho$, $\tau_K$, and two more quantities $V(A_1)$ and $K_0$ defined in the proofs (Claim~\ref{claim:acc-expression-1} and Claim~\ref{claim:general-exp}).
    \item In Section~\ref{sec:sc-alloc}, we show that when the distribution of treatment effect values is $\rho$-regular (which includes the case of the uniform distribution), then the lower bound exhibited in Claims \ref{claim:acc-expression-1} and \ref{claim:general-exp} simplifies to $1-\epsilon$,
    thus proving the desired near-optimality of the allocation.
\end{enumerate}

In Section~\ref{sec:PDF-CDF-approx} we further develop on how we can compute all of the quantities of interest from the CDF of the distribution of treatment effect values.

Through a careful analysis of the estimates, we show the following:
\medskip
\begin{restatable}{lemma}{LTKrhoguarantees}\label{lemma:LTK-rho-guarantees}
The output estimates $\htau_K$ and the set $\cU_{\LTK}$ returned by Algorithm~\ref{alg:non-adaptive} satisfy the following properties:
(1) $|\htau_K-\tau_K| \leq \rho$.
(2) All units $u$ such that $\tau(u)>\tau_K+2\rho$ belong to $\cU_{\LTK}$.
(3) All units $u$ such that $\tau(u) < \tau_K-2\rho$ do not belong to $\cU_{\LTK}$.
\end{restatable}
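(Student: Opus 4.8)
The plan is to run the entire argument on the single high-probability event $\cE$ that $|\htau(u)-\tau(u)|\le\rho$ holds \emph{simultaneously} for all $M$ units; as noted right after Lemma~\ref{lemma:num-samples}, a union bound over the $M$ oracle calls, each with failure probability $\delta/M$, gives $\Pr[\cE]\ge 1-\delta$. All three assertions are then deterministic consequences of $\cE$ together with the two definitions that $\cU_{\LTK}$ is the set of top $K$ units in sorted order of the estimates $\htau(u)$, and $\htau_K$ is the $K$-th largest such estimate.

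For part~(1) I would prove the standard $\ell_\infty$-stability of the $K$-th order statistic by a counting argument. By the definition of $\tau_K$ and uniqueness of the $\tau(u)$ (cf. Claim~\ref{claim:opt-alloc-k}), exactly $K$ units have $\tau(u)\ge\tau_K$ and exactly $M-K+1$ units have $\tau(u)\le\tau_K$. On $\cE$, each of the first group satisfies $\htau(u)\ge\tau(u)-\rho\ge\tau_K-\rho$, so at least $K$ of the estimates lie at or above $\tau_K-\rho$, which forces $\htau_K\ge\tau_K-\rho$. Symmetrically, each of the second group satisfies $\htau(u)\le\tau_K+\rho$, so at least $M-K+1$ estimates lie at or below $\tau_K+\rho$; since the $(M-K+1)$-th smallest estimate is precisely the $K$-th largest, this gives $\htau_K\le\tau_K+\rho$. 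Combining the two bounds yields $|\htau_K-\tau_K|\le\rho$.

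Parts~(2) and~(3) then follow by chaining the oracle guarantee with part~(1). If $\tau(u)>\tau_K+2\rho$, then on $\cE$ we have $\htau(u)\ge\tau(u)-\rho>\tau_K+\rho\ge\htau_K$; since $\htau_K$ is the $K$-th largest estimate, at most $K-1$ units can have an estimate strictly above $\htau_K$, so $u$ lies among the top $K$ and hence in $\cU_{\LTK}$, irrespective of how ties at the selection threshold are broken. Dually, if $\tau(u)<\tau_K-2\rho$, then $\htau(u)\le\tau(u)+\rho<\tau_K-\rho\le\htau_K$; at least $K$ units have estimate at least $\htau_K$ and $u$'s estimate is strictly below that, so $u$ is not selected, i.e.\ $u\notin\cU_{\LTK}$. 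I do not expect a genuine obstacle: the only steps needing care are getting the order-statistic bookkeeping in part~(1) exactly right (which side carries $K$ versus $M-K+1$ units, and that the $(M-K+1)$-th smallest equals the $K$-th largest) and using \emph{strict} inequalities in parts~(2)--(3) so that the conclusions are robust to ties among the estimates. Everything else is immediate from $\cE$ and the Hoeffding-based oracle bound already invoked for Lemma~\ref{lemma:num-samples}.
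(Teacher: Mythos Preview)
Your proposal is correct and follows essentially the same approach as the paper's own proof: both establish part~(1) by the order-statistic counting argument (at least $K$ estimates above $\tau_K-\rho$ and at least $M-K+1$ at or below $\tau_K+\rho$) and then derive parts~(2)--(3) by chaining the $\rho$-accuracy guarantee with part~(1). Your version is, if anything, slightly more explicit about conditioning on the event $\cE$ and about tie-breaking, but the substance is identical.
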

\begin{proof}
    We prove each of the three parts separately.

    (1)     By definition of $\tau_K$, there are exactly $K$ values $\tau(u)$ such that $\tau(u) \geq \tau_K$.
    By the $\rho$-accuracy guarantee on our estimates $\htau(u)$, it follows that $\htau(u) \geq \tau_K -\rho$ for all such $u$.
    Then, there are $K$ values $\htau(u)$ that are at least $\tau_K - \rho$.
    By definition of $\htau_K$, then, it must be that $\htau_K \geq \tau_K-\rho$.

    Symmetrically, $\htau_K$ is definitionally the $(M-K+1)$-th smallest number in the increasing sequence of values $\htau(u)$, and similarly $\tau_K$ is the $(M-K+1)$-th smallest number in the increasing sequence of values $\tau(u)$. 
    Therefore, exactly $M-K$ values $\tau(u)$ are such that $\tau(u) < \tau_K$.
    By the $\rho$-accuracy guarantee, $\htau(u) < \tau_K+\rho$ for all such $u$.
    Thus there are $M-K$ values $\htau(u)$ that are below $\tau_K+\rho$.
    By definition of $\htau_K$, it must be that $\htau_K \leq \tau_K+\rho$. 

    Since $\htau_K \geq \tau_K-\rho$ and $\htau_K \leq \tau_K+\rho$, it follows that $\htau_K \in [\tau_K-\rho, \tau_K+\rho]$.
    Hence $|\htau_K - \tau_K| \leq \rho$.

    (2) By the $\rho$-accuracy guarantee, all units $u$ satisfying $\tau(u) > \tau_K+2\rho$ also satisfy $\htau(u) > \tau_K+\rho$. 
    By part (1) of this claim, it then follows that $\htau(u) > \htau_K$. 
    Hence, Algorithm~\ref{alg:non-adaptive} selects such $u$ for treatment and so they belong to $\cU_{\LTK}$. 

    (3) Symmetrically, by the $\rho$-accuracy guarantee, all units $u$ satisfying $\tau(u) < \tau_K - 2\rho$ also satisfy $\htau(u) < \tau_K-\rho$.
    By part (1) of this claim, it then follows that $\htau(u) < \htau_K$. 
    Hence, Algorithm~\ref{alg:non-adaptive} does not select such $u$ for treatment and so they do not belong to $\cU_{\LTK}$.
\end{proof}
In other words, our algorithm selects the units optimally in the intervals $(\tau_K+2\rho, 1]$ and $[0, \tau_K-2\rho)$. We only need to worry about the interval surrounding $\tau_K$. 
This motivates the following definitions:
\medskip
\begin{definition}\label{def:A1-D-U0-K1-K0}
    We define the intervals $A_1 = (\tau_K+2\rho, 1]$ and $D = [\tau_K, \tau_K+2\rho]$.
    We also let $\cU^0_{\LTK} = \{c \in \cU_{\LTK} \mid \tau(u) \notin A_1\}$, $K_1 = |\{u \mid \tau(u) \in A_1\}|$, and $K_0 = K-K_1$.
\end{definition}
The guarantees of Lemma~\ref{lemma:LTK-rho-guarantees} imply that $V_{\cU_{\LTK}}(A_1) = V_{\cU^*}(A_1)$, and moreover allow us to lower bound $V_{\cU^0_{\LTK}}\big([0,1]\big)$ with $(\tau_K-2\rho) K_0$, and upper bound
$V_{\cU^*}(D)$ with $(\tau_K+2\rho)K_0$.
This then allows us to get the following bound,
where we use $V(A_1)$ as a short-hand for $V_{\cU_{\LTK}}(A_1) = V_{\cU^*}(A_1)$:
\medskip
\begin{restatable}{claim}{accexpression}\label{claim:acc-expression-1}
$\dfrac{V_{\cU_{\LTK}}\big([0,1]\big)}{V_{\cU^*}\big([0,1]\big)} \geq 1 - \dfrac{4\rho K_0}{V(A_1) + (\tau_K+2\rho)K_0}$.
\end{restatable}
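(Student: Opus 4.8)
The plan is to decompose the total value of both allocations according to the partition $[0,1] = A_1 \sqcup D \sqcup ([0,1]\setminus(A_1\cup D))$ and track where $\cU_{\LTK}$ and $\cU^*$ can possibly differ. By Lemma~\ref{lemma:LTK-rho-guarantees}, the two allocations agree exactly on $A_1 = (\tau_K+2\rho,1]$, so $V_{\cU_{\LTK}}(A_1) = V_{\cU^*}(A_1) = V(A_1)$, and moreover $\cU_{\LTK}$ selects no unit below $\tau_K-2\rho$ while $\cU^*$ selects exactly the units in $A_1$ together with $K_1$-many and the remaining $K_0 = K-K_1$ units lying in $D = [\tau_K,\tau_K+2\rho]$ (using Claim~\ref{claim:opt-alloc-k}, $\cU^* = \{u : \tau(u)\ge\tau_K\}$, and $\tau_K\in D$). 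So the only discrepancy is over the budget $K_0$ of units that $\cU_{\LTK}$ places somewhere in $[\tau_K-2\rho,\,\tau_K+2\rho]$ versus the $K_0$ units that $\cU^*$ places in $D$.

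First I would lower-bound the numerator. Write $V_{\cU_{\LTK}}([0,1]) = V(A_1) + V_{\cU^0_{\LTK}}([0,1])$, where $\cU^0_{\LTK}$ consists of the $K_0$ units $\cU_{\LTK}$ selects outside $A_1$. Each such unit has $\tau(u) \ge \tau_K - 2\rho$ by part (3) of Lemma~\ref{lemma:LTK-rho-guarantees}, so $V_{\cU^0_{\LTK}}([0,1]) \ge (\tau_K - 2\rho)K_0$, giving $V_{\cU_{\LTK}}([0,1]) \ge V(A_1) + (\tau_K-2\rho)K_0$. Next I would upper-bound the denominator: $V_{\cU^*}([0,1]) = V(A_1) + V_{\cU^*}(D)$, and since every unit counted in $V_{\cU^*}(D)$ has $\tau(u) \le \tau_K + 2\rho$ and there are exactly $K_0$ of them, $V_{\cU^*}(D) \le (\tau_K+2\rho)K_0$, so $V_{\cU^*}([0,1]) \le V(A_1) + (\tau_K+2\rho)K_0$. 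Taking the ratio,
\[
\frac{V_{\cU_{\LTK}}([0,1])}{V_{\cU^*}([0,1])} \ge \frac{V(A_1) + (\tau_K-2\rho)K_0}{V(A_1) + (\tau_K+2\rho)K_0} = 1 - \frac{4\rho K_0}{V(A_1) + (\tau_K+2\rho)K_0},
\]
which is exactly the claimed bound. Here I use the elementary fact that $\frac{a}{b} \ge \frac{a'}{b}$ when $a \ge a'$ for $b > 0$; strictly, one should note the denominator used for the lower bound on the left is the true $V_{\cU^*}([0,1])$, which is $\le V(A_1)+(\tau_K+2\rho)K_0$, so $\frac{V_{\cU_{\LTK}}}{V_{\cU^*}} \ge \frac{V(A_1)+(\tau_K-2\rho)K_0}{V(A_1)+(\tau_K+2\rho)K_0}$ holds by combining a numerator decrease and a denominator increase simultaneously (both move the fraction down), which is valid since all quantities are nonnegative.

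The step I expect to be the most delicate is the bookkeeping that justifies that exactly $K_0$ units are in play on each side and that they are the only source of disagreement — i.e., that $\cU_{\LTK}$ does not accidentally pick up extra units below $\tau_K - 2\rho$ or miss units above $\tau_K+2\rho$, and that the $K_1$ units in $A_1$ are common to both. This is precisely what Lemma~\ref{lemma:LTK-rho-guarantees} and Definition~\ref{def:A1-D-U0-K1-K0} were set up to deliver, so the argument is really just a careful assembly of those pieces; the only subtlety is making sure the cardinality accounting ($|\cU_{\LTK}| = |\cU^*| = K = K_1 + K_0$, with $K_1$ units shared) is airtight, since the units $\cU_{\LTK}$ chooses in $[\tau_K-2\rho,\tau_K)$ need not coincide with those $\cU^*$ chooses in $D$. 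Everything else is the routine fraction manipulation above.
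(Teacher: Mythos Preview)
Your proposal is correct and follows essentially the same approach as the paper: the paper establishes exactly the same decomposition and the same two bounds $V_{\cU^0_{\LTK}}([0,1]) \ge (\tau_K-2\rho)K_0$ and $V_{\cU^*}(D) \le (\tau_K+2\rho)K_0$ (as separate intermediate claims), combines them into the ratio inequality, and then performs the identical algebraic rearrangement to arrive at $1 - \frac{4\rho K_0}{V(A_1)+(\tau_K+2\rho)K_0}$. Your bookkeeping concern about the $K_0$ cardinalities is handled in the paper exactly as you anticipate, via Lemma~\ref{lemma:LTK-rho-guarantees} and the definitions of $K_0,K_1,\cU^0_{\LTK}$.
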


Before we prove Claim~\ref{claim:acc-expression-1}, we show some intermediate lemmas.

First, we use interval $A_1$ to analyze the difference between $V_{\cU_{\LTK}}$ and $V_{\cU^*}$. 
\medskip
\begin{claim}\label{claim-v-opt-tk}
    $V_{\cU^*}\big([0,1]\big) = V_{\cU^*}\big([\tau_K, 1]\big)$.
\end{claim}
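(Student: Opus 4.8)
The claim $V_{\cU^*}([0,1]) = V_{\cU^*}([\tau_K, 1])$ is essentially immediate from Claim~\ref{claim:opt-alloc-k}, which already establishes $\cU^* = \{u \mid \tau(u) \geq \tau_K\}$. The plan is to unwind the definition of the value function restricted to an interval and observe that every unit in $\cU^*$ has treatment effect lying in $[\tau_K, 1]$, so restricting the summation to that interval changes nothing.

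Concretely, I would proceed as follows. First, recall from Claim~\ref{claim:opt-alloc-k} that $\cU^* = \{u \mid \tau(u) \geq \tau_K\}$, and from the standing assumption that all $\tau(u)$ lie in $[0,1]$. Hence for every $u \in \cU^*$ we have $\tau(u) \in [\tau_K, 1]$. Now expand the definition of $V_{\cU^*}([\tau_K,1]) = \sum_{u \in \cU^* \,:\, \tau(u) \in [\tau_K,1]} \tau(u)$; since the condition $\tau(u) \in [\tau_K, 1]$ is satisfied by every $u \in \cU^*$, the index set of this sum is all of $\cU^*$, so it equals $\sum_{u \in \cU^*} \tau(u) = V_{\cU^*}([0,1])$. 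This completes the proof.

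There is essentially no obstacle here; the only thing to be mildly careful about is the boundary, namely that $\tau_K$ itself is included in the interval $[\tau_K, 1]$ (it is, since the interval is closed) so the unit achieving $\tau(u) = \tau_K$ is correctly counted. Since treatment effects are assumed unique, there is exactly one such unit, and it lies in $\cU^*$ by Claim~\ref{claim:opt-alloc-k}. One could also phrase this without invoking Claim~\ref{claim:opt-alloc-k} directly, simply noting that $V_{\cU^*}([0,1]) = V_{\cU^*}([0,\tau_K)) + V_{\cU^*}([\tau_K,1])$ and that the first term vanishes because no unit of $\cU^*$ has treatment effect below $\tau_K$; but routing through the already-proven claim is cleanest.
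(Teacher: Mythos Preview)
Your proof is correct and takes essentially the same approach as the paper, which simply invokes Claim~\ref{claim:opt-alloc-k} to observe that every unit in $\cU^*$ has $\tau(u) \geq \tau_K$. If anything, your version is more carefully spelled out than the paper's one-line argument.
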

\begin{proof}
    As shown in Claim~\ref{claim:opt-alloc-k}, the optimal allocation for budget $K$ contains all units $u$ such that $\tau(u) \geq \tau_K$.
\end{proof}
\medskip
\begin{claim}
    $V_{\cU_{\LTK}}\big([0,1]\big) \geq V_{\cU_{\LTK}}(A_1) = V\big((\tau_K+2\rho, 1]\big)$.
\end{claim}
\begin{proof}
    Firstly, $V_{\cU_{\LTK}}\big([0,1]\big) \geq V_{\cU_{\LTK}}(A_1)$ follows since $A_1 \subseteq [0,1]$ and $V(\cdot)$ is an additive function.
    Secondly,
    $V_{\cU_{\LTK}}(A_1) = V\big((\tau_K+2\rho, 1]\big)$ follows by the definition of $A_1$. 
\end{proof}

We can lower bound $V_{\cU_{\LTK}}\big([0,1]\big)$ more precisely using the interval $D = [\tau_K, \tau_K+2\rho]$ and the set $\cU^0_{\LTK}$ (see Definition~\ref{def:A1-D-U0-K1-K0}).
Specifically,
we write the values of the optimal allocation and our allocation using the $D$ interval.
\medskip
\begin{claim}\label{claim:splitting}
(1) $V_{\cU^*}\big([0,1]\big) = V_{\cU^*}(D) + V_{\cU^*}\big((\tau_K+2\rho, 1]\big)$.

(2) $V_{\cU_{\LTK}}\big([0,1]\big) = \sum_{i \in \cU^0_{\LTK}} \tau(i) + V_{\cU_{\LTK}}\big((\tau_K+2\rho, 1]\big)$.
\end{claim}

\begin{proof}
    The first equality follows by Claim~\ref{claim-v-opt-tk} and the additivity of $V(\cdot)$.
    The second equality follows just from the additivity of $V(\cdot)$.
\end{proof}
We want to separate the budget between the number of units it treats over $A_1$ versus elsewhere, so we split $K$ as follows:
\medskip
\begin{definition}
    Let $K_1 = \big|\{u \mid \tau(u) \in A_1\}\big|$, and let $K_0 = K-K_1$.
\end{definition}

We can now provide the following bounds:
\medskip
\begin{claim}\label{claim:lb-us}
    $V_{\cU^0_{\LTK}}\big([0,1]\big) \geq (\tau_K-2\rho) K_0$.
\end{claim}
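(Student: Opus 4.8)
$V_{\cU^0_{\LTK}}\big([0,1]\big) \geq (\tau_K-2\rho) K_0$.

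The plan is to prove the claim directly from the two guarantees of Lemma~\ref{lemma:LTK-rho-guarantees} together with a simple counting argument. The key observations are (i) that $\cU^0_{\LTK}$ has exactly $K_0$ elements, and (ii) that every element of $\cU^0_{\LTK}$ has treatment effect at least $\tau_K - 2\rho$.

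First I would establish the cardinality $|\cU^0_{\LTK}| = K_0$. Since $\cU_{\LTK}$ has exactly $K$ units, it splits as the disjoint union of those with $\tau(u) \in A_1$ and those with $\tau(u) \notin A_1$, the latter being $\cU^0_{\LTK}$ by Definition~\ref{def:A1-D-U0-K1-K0}. By part~(2) of Lemma~\ref{lemma:LTK-rho-guarantees}, every one of the $K_1 = |\{u \mid \tau(u) \in A_1\}|$ units in $A_1$ lies in $\cU_{\LTK}$; conversely any unit of $\cU_{\LTK}$ with $\tau(u) \in A_1$ is trivially counted in $K_1$. Hence $\cU_{\LTK}$ contains exactly $K_1$ units with $\tau(u) \in A_1$, and therefore $|\cU^0_{\LTK}| = K - K_1 = K_0$.

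Next I would use part~(3) of Lemma~\ref{lemma:LTK-rho-guarantees}: in its contrapositive form, any unit that belongs to $\cU_{\LTK}$ satisfies $\tau(u) \geq \tau_K - 2\rho$. In particular this holds for every $u \in \cU^0_{\LTK} \subseteq \cU_{\LTK}$. Combining with the cardinality bound, and recalling that $V_{\cU^0_{\LTK}}\big([0,1]\big) = \sum_{u \in \cU^0_{\LTK}} \tau(u)$ since all treatment effects lie in $[0,1]$, we get
\[
    V_{\cU^0_{\LTK}}\big([0,1]\big) \;=\; \sum_{u \in \cU^0_{\LTK}} \tau(u) \;\geq\; \sum_{u \in \cU^0_{\LTK}} (\tau_K - 2\rho) \;=\; |\cU^0_{\LTK}|\,(\tau_K - 2\rho) \;=\; (\tau_K - 2\rho) K_0,
\]
which is the desired inequality. (Note the bound remains valid even if $\tau_K - 2\rho < 0$, since then the right-hand side is nonpositive while the left-hand side is a sum of nonnegative terms.)

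This claim is routine given the structural lemma; there is no real obstacle. The only point requiring a moment's care is the cardinality count — specifically, being precise that part~(2) of Lemma~\ref{lemma:LTK-rho-guarantees} forces \emph{all} $K_1$ high units into $\cU_{\LTK}$, so that exactly $K_0$ budget slots remain for units outside $A_1$; everything else follows by applying the $2\rho$ lower bound of part~(3) termwise.
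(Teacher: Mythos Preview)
Your proof is correct and follows essentially the same approach as the paper: invoke part~(3) of Lemma~\ref{lemma:LTK-rho-guarantees} to lower-bound each $\tau(u)$ by $\tau_K-2\rho$, and sum over the $K_0$ units of $\cU^0_{\LTK}$. Your treatment is in fact more careful than the paper's one-line justification, since you make explicit the cardinality argument $|\cU^0_{\LTK}|=K_0$ via part~(2) of the lemma, whereas the paper leaves this implicit.
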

\begin{proof}
    By Lemma~\ref{lemma:LTK-rho-guarantees}, all units $u \in \cU_{\LTK}$ satisfy $\tau(u) \geq \tau_K-2\rho$, and $K_0 \leq K$. 
\end{proof}

\medskip
\begin{claim}\label{claim:ub-opt}
    $V_{\cU^*}(D) \leq (\tau_K+2\rho)K_0$.
\end{claim}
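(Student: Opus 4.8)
The plan is to reduce this to a simple counting argument, using the explicit description of the optimal allocation from Claim~\ref{claim:opt-alloc-k}. First I would record that $\cU^* = \{u \mid \tau(u) \geq \tau_K\}$, so that the units of $\cU^*$ whose treatment effect lands in $D = [\tau_K,\tau_K+2\rho]$ are exactly those with $\tau(u) \in D$, with no further restriction coming from $\cU^*$ (since $D \subseteq [\tau_K,1]$). Then $V_{\cU^*}(D) = \sum_{u \in \cU^* \,:\, \tau(u) \in D} \tau(u)$, and since each term satisfies $\tau(u) \leq \tau_K + 2\rho$, we get $V_{\cU^*}(D) \leq (\tau_K + 2\rho)\cdot \bigl|\{u \in \cU^* \mid \tau(u) \in D\}\bigr|$.

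Next I would identify the cardinality $\bigl|\{u \in \cU^* \mid \tau(u) \in D\}\bigr|$ with $K_0$. The point is that $\cU^*$ has exactly $K$ units, and since $D$ and $A_1 = (\tau_K+2\rho,1]$ are disjoint with $D \cup A_1 = [\tau_K,1]$, the set $\cU^*$ partitions into the units with $\tau(u) \in D$ and the units with $\tau(u) \in A_1$. Every unit with $\tau(u) \in A_1$ has $\tau(u) > \tau_K + 2\rho > \tau_K$, hence lies in $\cU^*$; there are $K_1$ of them by Definition~\ref{def:A1-D-U0-K1-K0}. Therefore $\bigl|\{u \in \cU^* \mid \tau(u) \in D\}\bigr| = K - K_1 = K_0$, and combining with the previous bound yields $V_{\cU^*}(D) \leq (\tau_K + 2\rho)K_0$, as claimed.

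There is essentially no serious obstacle here: the only thing to be careful about is the bookkeeping that every unit counted by $K_1$ genuinely belongs to $\cU^*$ (so that the partition of $\cU^*$ into its $D$-part and $A_1$-part is valid and $K_1 \leq K$), which is immediate from $\tau(u) > \tau_K + 2\rho \geq \tau_K$ on $A_1$. I would also note in passing that the inequality is in fact an equality in the count — only the bound $\tau(u) \leq \tau_K + 2\rho$ on each summand loses anything — which foreshadows why the $4\rho K_0$ slack in Claim~\ref{claim:acc-expression-1} is the right order of magnitude.
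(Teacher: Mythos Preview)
Your proof is correct and follows essentially the same approach as the paper's: bound each summand in $V_{\cU^*}(D)$ by $\tau_K+2\rho$ and identify the number of summands with $K_0$ via the partition of $\cU^*$ into its $A_1$-part and $D$-part. The paper's version is more terse (it compresses the counting into a single sentence), but the underlying argument is identical.
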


\begin{proof}
    By definition of $A_1$, all units $u$ such that $\tau(u) \in A_1$ satisfy $\tau(u) > \tau_K+2\rho$.
    Hence, by definition of $K_0$, any unit $u \in K_0$ must satisfy $\tau(u) \leq \tau_K+2\rho$.
\end{proof}

\medskip
\begin{claim}\label{claim:ratio}
    $\dfrac{V_{\cU_{\LTK}}\big([0,1]\big)}{V_{\cU^*}\big([0,1]\big)} = \dfrac{V_{\cU_{\LTK}}(A_1) + V_{\cU_{\LTK}^0}\big([0,1]\big)}{V_{\cU^*}(A_1) + V_{\cU^*}(D)} 
    \geq \dfrac{V_{\cU_{\LTK}}(A_1) + (\tau_K-2\rho) K_0}{V_{\cU^*}(A_1) + (\tau_K+2\rho)K_0}$.  
\end{claim}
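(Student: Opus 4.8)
The plan is to establish the displayed equality first and then the inequality, in both cases just assembling the decomposition results already proved. For the equality, I would combine the two parts of Claim~\ref{claim:splitting}. Part~(2) gives $V_{\cU_{\LTK}}\big([0,1]\big) = \sum_{i \in \cU^0_{\LTK}} \tau(i) + V_{\cU_{\LTK}}\big((\tau_K+2\rho,1]\big)$; since $A_1 = (\tau_K+2\rho,1]$ and, by the definition of value (all $\tau(u)\in[0,1]$), $\sum_{i \in \cU^0_{\LTK}} \tau(i) = V_{\cU^0_{\LTK}}\big([0,1]\big)$, the numerator is exactly $V_{\cU_{\LTK}}(A_1) + V_{\cU^0_{\LTK}}\big([0,1]\big)$. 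Part~(1) gives $V_{\cU^*}\big([0,1]\big) = V_{\cU^*}(D) + V_{\cU^*}\big((\tau_K+2\rho,1]\big) = V_{\cU^*}(A_1) + V_{\cU^*}(D)$, which is the stated denominator. I would also recall here that, by Lemma~\ref{lemma:LTK-rho-guarantees}(2) together with Claim~\ref{claim:opt-alloc-k}, the units with $\tau(u)\in A_1$ are the same in $\cU_{\LTK}$ and in $\cU^*$, so $V_{\cU_{\LTK}}(A_1) = V_{\cU^*}(A_1)$, the quantity abbreviated $V(A_1)$.

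For the inequality, I would invoke Claim~\ref{claim:lb-us} to replace $V_{\cU^0_{\LTK}}\big([0,1]\big)$ in the numerator by the smaller quantity $(\tau_K-2\rho)K_0$, and Claim~\ref{claim:ub-opt} to replace $V_{\cU^*}(D)$ in the denominator by the larger quantity $(\tau_K+2\rho)K_0$. The $A_1$-terms are left untouched and are equal to the common value $V(A_1)$, so the fraction has the form $\frac{V(A_1) + x}{V(A_1) + y}$ with $x = V_{\cU^0_{\LTK}}([0,1])$ decreased and $y = V_{\cU^*}(D)$ increased; since such a fraction is increasing in the numerator and decreasing in the denominator (when both stay positive), the substitution can only decrease the ratio, yielding the claimed bound.

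The only place needing care is the positivity bookkeeping behind that last monotonicity step. The denominator $V_{\cU^*}(A_1) + (\tau_K+2\rho)K_0$ is positive (it is at least $K\tau_K>0$), but the substituted numerator $V(A_1) + (\tau_K-2\rho)K_0$ could a priori be negative in the degenerate regime $\tau_K < 2\rho$ with $V(A_1)$ small. In that regime the right-hand side of the inequality is itself $\le 0 < V_{\cU_{\LTK}}([0,1])/V_{\cU^*}([0,1])$, so the bound holds trivially; alternatively, one notes that the true numerator $V(A_1)+V_{\cU^0_{\LTK}}([0,1])$ is nonnegative regardless. I expect this small case distinction to be the only subtlety — everything else is a direct composition of Claims~\ref{claim:splitting}, \ref{claim:lb-us}, and~\ref{claim:ub-opt}.
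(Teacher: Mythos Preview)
Your proposal is correct and follows essentially the same route as the paper: the equality comes from Claim~\ref{claim:splitting} and the inequality from Claims~\ref{claim:lb-us} and~\ref{claim:ub-opt}. Your write-up is in fact more careful than the paper's one-line proof, since you spell out the monotonicity of the fraction and handle the degenerate sign case; note also that the identity $V_{\cU_{\LTK}}(A_1)=V_{\cU^*}(A_1)$ you invoke is stated separately in the paper as Claim~\ref{claim:values-in-A1} and is not strictly needed for the present inequality (the $A_1$-terms stay fixed on each side regardless).
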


\begin{proof}
    The first equality follows from Claim~\ref{claim:splitting}; the inequality from Claims~\ref{claim:lb-us}, and \ref{claim:ub-opt}.
\end{proof}

To further simplify the expression, we use:
\medskip
\begin{claim}\label{claim:values-in-A1}
    $V_{\cU_{\LTK}}(A_1) = V_{\cU^*}(A_1)$.
\end{claim}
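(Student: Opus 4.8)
The plan is to observe that, on the interval $A_1 = (\tau_K + 2\rho, 1]$, the two allocations $\cU_{\LTK}$ and $\cU^*$ pick out \emph{exactly the same} set of units, so the two sums defining $V_{\cU_{\LTK}}(A_1)$ and $V_{\cU^*}(A_1)$ range over an identical index set and are trivially equal. The work is just assembling two earlier inclusions.

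First I would note that since every treatment effect lies in $[0,1]$, a unit $u$ satisfies $\tau(u) \in A_1$ if and only if $\tau(u) > \tau_K + 2\rho$. By Lemma~\ref{lemma:LTK-rho-guarantees}(2), every such unit belongs to $\cU_{\LTK}$; hence $\{u : \tau(u) \in A_1\} \subseteq \cU_{\LTK}$. Next, by Claim~\ref{claim:opt-alloc-k} we have $\cU^* = \{u : \tau(u) \geq \tau_K\}$, and since $\rho > 0$ gives $\tau_K + 2\rho > \tau_K$, any unit with $\tau(u) > \tau_K + 2\rho$ also satisfies $\tau(u) \geq \tau_K$ and so lies in $\cU^*$; hence $\{u : \tau(u) \in A_1\} \subseteq \cU^*$ as well.

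Combining these, the index set $\{u \in \cU_{\LTK} : \tau(u) \in A_1\}$ equals $\{u : \tau(u) \in A_1\}$, and likewise $\{u \in \cU^* : \tau(u) \in A_1\}$ equals $\{u : \tau(u) \in A_1\}$. Unwinding the definition of value on an interval,
\[
V_{\cU_{\LTK}}(A_1) \;=\; \sum_{u \in \cU_{\LTK} \,:\, \tau(u) \in A_1} \tau(u) \;=\; \sum_{u \,:\, \tau(u) \in A_1} \tau(u) \;=\; \sum_{u \in \cU^* \,:\, \tau(u) \in A_1} \tau(u) \;=\; V_{\cU^*}(A_1).
\]

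There is no real obstacle here. The only point requiring a moment's care is the boundary: $A_1$ is open at $\tau_K + 2\rho$, which matches exactly the strict inequality $\tau(u) > \tau_K + 2\rho$ in Lemma~\ref{lemma:LTK-rho-guarantees}(2), so no unit on the boundary slips through; and one should keep in mind that $\rho > 0$, which is what guarantees $A_1 \subseteq (\tau_K, 1]$ and hence the inclusion into $\cU^*$.
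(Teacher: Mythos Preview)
Your proof is correct and follows essentially the same approach as the paper: both argue that every unit with $\tau(u)\in A_1$ lies in $\cU_{\LTK}$ (via Lemma~\ref{lemma:LTK-rho-guarantees}(2)) and in $\cU^*$ (since $\tau(u)>\tau_K+2\rho\geq\tau_K$), so the two restricted sums range over the same index set. Your version is simply more explicit about the set equalities and the boundary point.
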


\begin{proof}
    By definition of $A_1$, all units $u \in A_1$ satisfy $\tau(u) > \tau_K+2\rho$.
    By definition of $\tau_K$, all such $u$ are part of $\cU^*$.
    As for our allocation, by Lemma~\ref{lemma:LTK-rho-guarantees} all such $u$ are also part of $\cU_{\LTK}$.
\end{proof}

By the previous claim, we can use $V(A_1)$ as a short-hand for $V_{\cU_{\LTK}}(A_1) = V_{\cU^*}(A_1)$.
With these intermediate small claims, we can now show Claim~\ref{claim:acc-expression-1}.

\begin{proof}[Proof of Claim~\ref{claim:acc-expression-1}]
    This follows from re-arranging the expression in Claim~\ref{claim:ratio}.
    Specifically,
    \begin{align*}
        \dfrac{V_{\cU_{\LTK}}(A_1) + (\tau_K-2\rho) K_0}{V_{\cU^*}(A_1) + (\tau_K+2\rho)K_0}
        &= \dfrac{V(A_1) + (\tau_K-2\rho) K_0}{V(A_1) + (\tau_K+2\rho)K_0} \\
        &= \dfrac{V(A_1) + (\tau_K+2\rho)K_0 -(\tau_K+2\rho)K_0 + (\tau_K-2\rho) K_0 }{V(A_1) + (\tau_K+2\rho)K_0} \\
        &= 1 - \dfrac{(\tau_K+2\rho)K_0 - (\tau_K-2\rho)K_0}{V(A_1) + (\tau_K+2\rho)K_0} \\
        &= 1 - \dfrac{4\rho K_0}{V(A_1) + (\tau_K+2\rho)K_0}
    \end{align*}
\end{proof}

We let $\Pr_{\tau}$ denote the discrete distribution of $\tau(u)$ values.
The quantity $\Pr_{\tau}[\tau_K, \tau_K+2\rho]$ and its relationship to $2\rho M$ is key to our analysis. 
To that end, we define the following two quantities:
\medskip
\begin{definition}\label{def:theta-k-gamma-1}
    Let $\Pr_{\tau}[\tau_K,\tau_K+2\rho] = \theta_K$
    and let $V(A_1) = \gamma_1 M$.
\end{definition}

Moreover, we can relate $K_0$ to the distribution $\Pr_{\tau}$ of treatment effect values:
\medskip
\begin{claim}\label{claim:computing-K0}
    $K_0 = \Pr_{\tau}\big[[\tau_K, \tau_K+2\rho]\big] \cdot M$.
\end{claim}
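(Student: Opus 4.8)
The plan is to unfold the definitions of $K$, $K_1$, and $K_0$ and observe that the claim is pure counting. First I would recall from Claim~\ref{claim:opt-alloc-k} (equivalently, directly from the definition of $\tau_K$) that the budget equals the number of units at or above the threshold, $K = \big|\{u \mid \tau(u) \geq \tau_K\}\big|$. By Definition~\ref{def:A1-D-U0-K1-K0}, $K_1 = \big|\{u \mid \tau(u) \in A_1\}\big|$ with $A_1 = (\tau_K+2\rho, 1]$, and $K_0 = K - K_1$.

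Next I would partition $\{u \mid \tau(u) \geq \tau_K\}$ according to whether $\tau(u)$ lies in $D = [\tau_K, \tau_K+2\rho]$ or strictly above $\tau_K+2\rho$. Since every treatment effect lies in $[0,1]$, this set is the disjoint union of $\{u \mid \tau(u) \in [\tau_K, \tau_K+2\rho]\}$ and $\{u \mid \tau(u) \in (\tau_K+2\rho, 1]\} = \{u \mid \tau(u) \in A_1\}$; the open/closed split at $\tau_K + 2\rho$ is exactly the convention that $D$ is closed on the right and $A_1$ is open on the left, so no unit is double-counted or omitted. Taking cardinalities gives $K_0 = K - K_1 = \big|\{u \mid \tau(u) \in [\tau_K, \tau_K+2\rho]\}\big|$.

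Finally I would invoke the definition of $\Pr_\tau$ as the discrete (empirical) distribution of the $M$ values $\tau(u)$, which places mass $1/M$ on each unit, so that $\Pr_\tau\big[[\tau_K, \tau_K+2\rho]\big] = \frac{1}{M}\big|\{u \mid \tau(u) \in [\tau_K, \tau_K+2\rho]\}\big|$. Combined with the previous identity this yields $K_0 = \Pr_\tau\big[[\tau_K, \tau_K+2\rho]\big] \cdot M$, as claimed. There is no real obstacle here; the only points requiring a moment's care are the treatment of the boundary value $\tau_K + 2\rho$ (handled by the open/closed conventions above) and the possibility that $\tau_K + 2\rho > 1$, which is harmless because $\tau(u) \in [0,1]$ means replacing $[\tau_K, \tau_K+2\rho]$ by its intersection with $[0,1]$ leaves the count unchanged. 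So the ``hard part'' is purely bookkeeping.
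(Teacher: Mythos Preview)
Your proof is correct and follows essentially the same approach as the paper: both arguments identify $K_0$ with $\big|\{u \mid \tau(u) \in [\tau_K, \tau_K+2\rho]\}\big|$ by subtracting the $A_1$ units from the optimal set $\{u \mid \tau(u) \ge \tau_K\}$, and then invoke the definition of $\Pr_\tau$. Your version is a bit more explicit about the disjoint-union bookkeeping and the boundary at $\tau_K+2\rho$, but the content is the same.
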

\begin{proof}
    All units $u$ selected with the $K_0$ budget satisfy $\tau(u) \notin A_1$ by definition, and hence they satisfy $\tau(u) \leq \tau_K+2\rho$.
    Since all units selected with the $K_0$ budget are still part of $\cU^*$, it follows that they all satisfy $\tau(u) \geq \tau_K$.
    Hence, $|K_0| = \big|\{u \mid \tau(i) \in [\tau_K, \tau_K+2\rho]\big|$.
    Lastly, $\big|\{u \mid \tau(i) \in [\tau_K, \tau_K+2\rho]\big| = \Pr_{\tau}[\tau_K, \tau_K+2\rho] \cdot M$ by definition of $\Pr_{\tau}$. 
\end{proof}

This leads to the following general accuracy bound for Algorithm~\ref{alg:non-adaptive}:
\medskip
\begin{restatable}{claim}{generalexp}\label{claim:general-exp}
Given any $\cX, \cU, K$, Algorithm~\ref{alg:non-adaptive} returns a $\Big(1 - \dfrac{4\gamma \theta_K}{\gamma_1 + \tau_K \theta_K} \cdot \sqrt{\epsilon}\Big)$-optimal approximation with $O(M\ln(2M/\delta)/\epsilon)$ many samples.
\end{restatable}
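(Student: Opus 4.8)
The statement bundles two assertions---a sample-complexity bound and an approximation guarantee---and the plan is to dispatch them in turn. The sample bound $O(M\ln(2M/\delta)/\epsilon)$ is nothing new: it is precisely the content of Lemma~\ref{lemma:num-samples}, so I would simply invoke it. The same Hoeffding-plus-union-bound computation shows that, outside a failure event of probability at most $\delta$, every unit satisfies $|\htau(u)-\tau(u)|\le\rho$; this is exactly the ``good event'' under which Lemma~\ref{lemma:LTK-rho-guarantees}, and hence all the intermediate claims leading to Claim~\ref{claim:acc-expression-1}, hold. The entire accuracy argument is therefore to be conditioned on this event, which is what accounts for the $\delta$ in the statement.

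For the approximation guarantee, I would start from the bound already established in Claim~\ref{claim:acc-expression-1},
\[
\frac{V_{\cU_{\LTK}}\big([0,1]\big)}{V_{\cU^*}\big([0,1]\big)} \;\ge\; 1 - \frac{4\rho K_0}{V(A_1) + (\tau_K+2\rho)K_0},
\]
and rewrite the subtracted term using the three identifications made earlier in this section: $\rho=\gamma\sqrt{\epsilon}$ by the definition of Algorithm~\ref{alg:non-adaptive}; $V(A_1)=\gamma_1 M$ by Definition~\ref{def:theta-k-gamma-1}; and $K_0=\theta_K M$ by Claim~\ref{claim:computing-K0} together with $\Pr_\tau[\tau_K,\tau_K+2\rho]=\theta_K$. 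Substituting these in and cancelling the common factor $M$ from numerator and denominator turns the subtracted term into $\dfrac{4\gamma\sqrt{\epsilon}\,\theta_K}{\gamma_1 + (\tau_K+2\gamma\sqrt{\epsilon})\theta_K}$.

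The final step is a one-line relaxation. Since $2\gamma\sqrt{\epsilon}\,\theta_K\ge 0$, dropping it from the denominator can only enlarge the fraction, so
\[
\frac{4\gamma\sqrt{\epsilon}\,\theta_K}{\gamma_1 + (\tau_K+2\gamma\sqrt{\epsilon})\theta_K} \;\le\; \frac{4\gamma\sqrt{\epsilon}\,\theta_K}{\gamma_1 + \tau_K\theta_K} \;=\; \frac{4\gamma\theta_K}{\gamma_1+\tau_K\theta_K}\,\sqrt{\epsilon},
\]
which delivers the claimed $\big(1 - \frac{4\gamma\theta_K}{\gamma_1+\tau_K\theta_K}\sqrt{\epsilon}\big)$-optimality. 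I do not anticipate any genuine obstacle: the claim is in essence bookkeeping that repackages Claim~\ref{claim:acc-expression-1} in terms of the distribution-dependent quantities $\theta_K$ and $\gamma_1$, together with the sample count from Lemma~\ref{lemma:num-samples}. The only point that needs a moment's care is the direction of the last inequality---discarding the lower-order $2\gamma\sqrt{\epsilon}\,\theta_K$ term weakens the guarantee but keeps it valid, which is exactly the trade we want in order to land a clean closed form.
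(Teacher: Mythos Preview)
Your proposal is correct and follows essentially the same route as the paper: invoke Lemma~\ref{lemma:num-samples} for the sample count, start from Claim~\ref{claim:acc-expression-1}, substitute $K_0=\theta_K M$, $V(A_1)=\gamma_1 M$, and $\rho=\gamma\sqrt{\epsilon}$, and cancel $M$. You are in fact slightly more explicit than the paper about the final relaxation---the paper just says ``plugging in $\rho=\gamma\sqrt{\epsilon}$ we obtain the expression in the statement,'' whereas you spell out that dropping the nonnegative $2\gamma\sqrt{\epsilon}\,\theta_K$ term from the denominator is what takes you from $\gamma_1+(\tau_K+2\rho)\theta_K$ to $\gamma_1+\tau_K\theta_K$.
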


\begin{proof}
    Using the Definition~\ref{def:theta-k-gamma-1} on Claim~\ref{claim:acc-expression-1} and dividing by $M$ it follows that
    \[
        \dfrac{V_{\cU_{\LTK}}\big([0,1]\big)}{V_{\cU^*}\big([0,1]\big)} \geq 1- \dfrac{4\rho \theta_K}{\gamma_1 + (\tau_K+2\rho)\theta_K.}
    \]
    This is a $(1-\epsilon)$-$\OPT$ approximation if
    \[
        \dfrac{4\rho \theta_K}{\gamma_1 + (\tau_K+2\rho)\theta_K} \leq \epsilon.
    \]
    Hence, to get a $(1-\epsilon)$-$\OPT$ approximation, we require
    \[
        \theta_K \leq \dfrac{\epsilon \gamma_1}{4\rho - \epsilon(\tau_K+2\rho)}.
    \]
    Alternatively, by plugging in $\rho = \gamma \sqrt{\epsilon}$, we obtain the expression in the statement of Claim~\ref{claim:general-exp}.
\end{proof}

In Section~\ref{sec:sc-alloc}, we show how the expression in Claim~\ref{claim:general-exp} simplifies to $(1-\epsilon)$
in the case of $\rho$-regular distributions, thus attaining a $(1-\epsilon)$-$\OPT$ allocation.

\subsection{Quantiles suffice}\label{sec:PDF-CDF-approx}

A key insight in our analysis is the fact that for the problem of treatment allocation we only need to know quantiles of the distribution. In other words, it's enough to approximate the CDF of the distribution $\Pr_{\tau}$ of treatment effect values. 
\medskip
\begin{definition}
    Let $F(t) = \Pr_{\tau}[\tau(u) \leq t]$ be the CDF of the distribution of $\tau(u)$ values, and let $\hat{F}$ denote the CDF corresponding to the distribution $\Pr_{\htau}$ of the low-accuracy estimates $\htau(u)$ produced by Algorithm~\ref{alg:non-adaptive}. 
    That is, $\hat{F}(t) = \frac{1}{N} |\{u : \htau(u) \leq t\}|$.
\end{definition}
We denote the PDF by $f(t)$.
Given a budget of $K \leq M$, $\tau_K$ definitionally corresponds to the value in $[0,1]$ such that $F(\tau_K) = 1-K/M$; i.e., 
the $K$-th $M$-th quantile of the distribution of $\tau$ values. 
Similarly, $\gamma_1$ and $V_{\cU^*}$ can be computed as permutation-invariant quantities derived from the CDF:
\medskip
\begin{restatable}{claim}{CDFvalues}\label{claim:CDF-values}
For any budget $K \leq M$, we can compute $\tau_K, \gamma_1, V_{\cU^*}$ as follows:
(1) $\tau_K = F^{-1}(1-K/M)$,
(2) $\gamma_1 = \frac{1}{N} \int_{\tau_K+2\rho}^1  t \, dF(t)$,
(3) $V_{\cU^*}([0,1]) = \int_{\tau_K}^1  t \, dF(t)$.
\end{restatable}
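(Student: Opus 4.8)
The plan is to obtain all three identities by directly unpacking the definition of $F$ together with the structural facts about $\cU^*$ already established, since no genuine estimation or probabilistic argument is involved here. Recall that $F$ is the empirical CDF of the $M$ numbers $\tau(1),\dots,\tau(M)$, so the Stieltjes measure $dF$ is atomic, placing mass (under whichever normalization is in force throughout this section) on each $\tau(u)$; consequently, for any interval $A$, the integral $\int_A t\,dF(t)$ is literally the appropriately scaled sum $\sum_{u:\tau(u)\in A}\tau(u)$. With this dictionary in hand, each part is essentially a rewriting.

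For part (1), by Claim~\ref{claim:opt-alloc-k} we have $\cU^*=\{u:\tau(u)\ge\tau_K\}$ with $|\cU^*|=K$, so exactly $M-K$ units satisfy $\tau(u)<\tau_K$; since the $\tau(u)$ are distinct, $F$ equals $1-K/M$ immediately to the left of $\tau_K$ and jumps there, and identifying $\tau_K$ as the point at which $F$ attains the level $1-K/M$ (read via the generalized inverse, so as to absorb the single-atom discreteness) gives $\tau_K=F^{-1}(1-K/M)$. For part (3), Claim~\ref{claim:opt-alloc-k} (equivalently Claim~\ref{claim-v-opt-tk}) gives $V_{\cU^*}([0,1])=\sum_{u:\tau(u)\ge\tau_K}\tau(u)$, which by the dictionary above is exactly $\int_{\tau_K}^1 t\,dF(t)$. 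For part (2), recall from Definition~\ref{def:theta-k-gamma-1} and Claim~\ref{claim:values-in-A1} that $\gamma_1 M = V(A_1) = V_{\cU^*}(A_1)$ with $A_1=(\tau_K+2\rho,1]$; writing $V_{\cU^*}(A_1)=\sum_{u:\tau_K+2\rho<\tau(u)\le 1}\tau(u)$ and applying the same rewriting over the half-open interval $A_1$ yields $\int_{\tau_K+2\rho}^1 t\,dF(t)$, and dividing by the normalization gives the stated formula for $\gamma_1$.

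The only genuine care needed — and the step I would single out — is the bookkeeping of conventions: fixing the normalization of $dF$ so that parts (2) and (3) are literal equalities rather than off by a factor of $M$; pinning down the generalized-inverse reading of $F^{-1}$ in part (1) so that the off-by-$1/M$ coming from the atom at $\tau_K$ does not appear; and respecting that $A_1$ is half-open so that no atom at $\tau_K+2\rho$ (or at $\tau_K$) is double-counted between parts (2) and (3). Once these conventions are fixed consistently, each identity is immediate from the principle that integrating against an empirical CDF is the same as summing over the sample.
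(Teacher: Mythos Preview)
Your proposal is correct and takes essentially the same approach as the paper: both arguments unpack the definitions of $\tau_K$, $\gamma_1$, and $V_{\cU^*}$, invoke the characterization $\cU^* = \{u : \tau(u) \ge \tau_K\}$ (Claims~\ref{claim:opt-alloc-k} and~\ref{claim-v-opt-tk}), and rewrite the resulting sums over units as Stieltjes integrals against the empirical CDF. If anything, you are more explicit than the paper about the bookkeeping conventions (normalization of $dF$, the generalized-inverse reading of $F^{-1}$, and the half-open endpoint of $A_1$) needed to make the identities literal rather than off by a factor of $M$ or a single atom.
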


\begin{proof}
    First, $\tau_K$ is definitionally the $K$-th largest $\tau(u)$ value.
    In terms of the CDF $F$ of treatment effect values $\Pr_{\tau}$, this means that $\tau_K$ corresponds to the value in $[0,1]$ satisfying $F(\tau_K) = 1-K/M$.

    Second, $\gamma_1$ definitionally satisfies $\gamma_1 = V(A_1)/M$.
    By the definition of the value function and given that $A_1 = (\tau_K+2\rho, 1]$, it follows that
    \[
        V(A_1) = \sum_{u | \tau(u) \in A_1} \tau(u).
    \]
    Therefore, this corresponds to taking the integral $\int_{\tau_K+2\rho}^1 t f(t) \, dt$, where $t$ accounts for the value of $\tau(u)$ and $f(t)$ for the density of units for each value. 

    Similarly, 
    \[
        V_{\cU^*}([0,1]) = \int_{0}^1 t f(t) \, dt 
        = \int_{\tau_K}^1 t f(t) \, dt,
    \]
    given that by Claim~\ref{claim-v-opt-tk}
    $V_{\cU^*}\big([0,1]\big) = V_{\cU^*}\big([\tau_K, 1]\big)$.
    Lastly, integrating $tf(t) \, dt$ with the PDF is equivalent to integrating $t dF(t)$ with the CDF.
\end{proof}

Claim~\ref{claim:CDF-values} provides further intuition for why we are able to get optimal treatment allocations with much coarser estimates: we can permute the units as long as the shape of the CDF is approximately correct. 

Moreover, the $\rho$-accurate estimates $\htau(u)$ naturally provide an approximation of $f$ and $F$, which we can in turn use to check, from the low-accuracy estimates $\htau_K$,
the smoothness behavior of the treatment effect values around the optimal threshold:
\medskip
\begin{restatable}{claim}{CDapproxFhat}\label{claim:CDF-approx-Fhat}
For any $t \in [0,1]$, $\hat{F}(t-\rho) \leq F(t) \leq \hat{F}(t+\rho)$.
\end{restatable}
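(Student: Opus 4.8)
The plan is to exploit the event, established right after Lemma~\ref{lemma:num-samples}, that with probability at least $1-\delta$ every unit satisfies $|\htau(u)-\tau(u)| \leq \rho$; we argue on this event. Fix $t \in [0,1]$. The core observation is a pair of set inclusions between the units counted by $F$ and those counted by $\hat F$. First I would show $\{u : \tau(u) \leq t-\rho\} \subseteq \{u : \htau(u) \leq t\}$: if $\tau(u) \leq t-\rho$ then $\htau(u) \leq \tau(u)+\rho \leq t$. Taking cardinalities and dividing by $N = M$ gives $\hat F(t) \geq \frac{1}{M}|\{u : \tau(u) \leq t-\rho\}| = F(t-\rho)$, i.e. $F(t-\rho) \leq \hat F(t)$; replacing $t$ by $t+\rho$ yields $F(t) \leq \hat F(t+\rho)$, the right-hand inequality. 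Symmetrically, $\{u : \htau(u) \leq t-\rho\} \subseteq \{u : \tau(u) \leq t\}$, since $\htau(u) \leq t-\rho$ forces $\tau(u) \leq \htau(u)+\rho \leq t$; cardinalities give $\hat F(t-\rho) \leq F(t)$, which is the left-hand inequality.

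There is essentially one genuine wrinkle to dispatch cleanly rather than a hard obstacle: the definition of $F$ is stated in terms of the true distribution $\Pr_\tau$ over the $M$ units, so I need $F(s) = \frac{1}{M}|\{u : \tau(u) \leq s\}|$ to hold as an exact identity, which is immediate since $\Pr_\tau$ is by definition the empirical distribution of the $M$ values $\tau(u)$; likewise $\hat F$ is defined directly as $\frac{1}{N}|\{u : \htau(u)\leq t\}|$ with $N = M$. A second minor point is boundary behavior: for $t-\rho < 0$ both sides of the left inequality that reference $F(t-\rho)$ or $\hat F(t-\rho)$ are to be read as $0$, and for $t+\rho > 1$ the relevant quantity is $1$; the set-inclusion argument degenerates gracefully in these regimes, so no separate case analysis is really needed.

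I expect this to be short: the only thing to be careful about is keeping the direction of each inequality straight, since shifting the argument of $F$ left decreases it while shifting right increases it, and the inclusions go in opposite directions for the two bounds. The whole argument is deterministic once we condition on the good event $\bigcap_u \{|\htau(u)-\tau(u)|\leq\rho\}$, which is exactly the event the paper has already assumed to hold with probability $\geq 1-\delta$, so no new probabilistic input is required.
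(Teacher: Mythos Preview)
Your proposal is correct and is essentially the paper's own argument: both directions follow from the set inclusions $\{u:\htau(u)\le t-\rho\}\subseteq\{u:\tau(u)\le t\}$ and $\{u:\tau(u)\le t\}\subseteq\{u:\htau(u)\le t+\rho\}$ implied by the $\rho$-accuracy event. The only cosmetic difference is that for the right-hand inequality you first prove $F(t-\rho)\le\hat F(t)$ and then substitute $t\mapsto t+\rho$, whereas the paper argues the inclusion at $t$ directly; the content is identical.
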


\begin{proof}
    If $\htau(u) \leq t-\rho$, by the $\rho$-accuracy guarantee it follows that $\tau(u) \leq \htau(u) +\rho \leq t$. 
    Hence:
    \[
        \hat{F}(t-\rho) = \dfrac{|\{u : \htau(u) \leq t-\rho\}|}{M} \leq \dfrac{|\{u : \tau(u) \leq t\}|}{M} = F(t).
    \]
    Symmetrically, if $\tau(u) \leq t$, then by the $\rho$-accuracy guarantee it follows that $\htau(u) \leq \tau(u)+\rho \leq t + \rho$. 
    Hence:
    \[
        F(t) = \dfrac{|\{u : \tau(u) \leq t\}|}{M} \leq  \dfrac{|\{u : \htau(u) \leq t+\rho\}|}{M} = \hat{F}(t+\rho).
    \]
\end{proof}

We can equivalently express Claim~\ref{claim:CDF-approx-Fhat} in terms of the number of units present in an interval.
Specifically, for any interval in $[0,1]$ we have that:
\medskip
\begin{claim}\label{claim:approx-interval}
    Given any interval $[a, b] \subseteq [0,1]$, 
    \[
        \big|u : \htau(u) \in [a+\rho, b-\rho]\big| \leq \Pr_{\tau}\big[[a,b]\big] \cdot M \leq \big|u : \htau(u) \in [a-\rho, b+\rho]\big|.
    \]
\end{claim}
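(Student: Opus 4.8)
The plan is to derive this directly from the $\rho$-accuracy guarantee, exactly in the style of the proof of Claim~\ref{claim:CDF-approx-Fhat}. Throughout I would work on the high-probability event (which holds with probability at least $1-\delta$ by the union bound noted right after Lemma~\ref{lemma:num-samples}) on which every unit $u$ satisfies $|\htau(u)-\tau(u)| \leq \rho$; on this event the stated two-sided bound is a purely combinatorial consequence of how the estimates can move relative to the true values.

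For the lower bound I would establish the set inclusion $\{u : \htau(u) \in [a+\rho, b-\rho]\} \subseteq \{u : \tau(u) \in [a,b]\}$. Indeed, if $a+\rho \leq \htau(u) \leq b-\rho$, then $\tau(u) \geq \htau(u)-\rho \geq a$ and $\tau(u) \leq \htau(u)+\rho \leq b$, so $\tau(u) \in [a,b]$. Taking cardinalities and recalling $\Pr_{\tau}[[a,b]]\cdot M = |\{u : \tau(u) \in [a,b]\}|$ (by definition of $\Pr_{\tau}$) yields the left inequality. In the degenerate case $a+\rho > b-\rho$ the shrunk interval is empty, the left-hand count is $0$, and the inequality is trivial.

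For the upper bound I would argue symmetrically, establishing $\{u : \tau(u) \in [a,b]\} \subseteq \{u : \htau(u) \in [a-\rho, b+\rho]\}$: if $\tau(u) \in [a,b]$, then $\htau(u) \geq \tau(u)-\rho \geq a-\rho$ and $\htau(u) \leq \tau(u)+\rho \leq b+\rho$. Taking cardinalities gives $\Pr_{\tau}[[a,b]]\cdot M = |\{u : \tau(u) \in [a,b]\}| \leq |\{u : \htau(u) \in [a-\rho, b+\rho]\}|$, which is the right inequality.

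I do not anticipate a genuine obstacle here; the statement is essentially a restatement of Claim~\ref{claim:CDF-approx-Fhat} in terms of interval counts rather than CDF values. The only points to handle carefully are that the claim is implicitly conditioned on the good event where all estimates are $\rho$-accurate, and the boundary bookkeeping when the shrunk interval degenerates to the empty set. One could alternatively obtain the two inequalities by applying Claim~\ref{claim:CDF-approx-Fhat} at $t=b$ and $t=a$ and subtracting, but this requires extra care about whether endpoints are included in the intervals, so I would prefer the direct set-inclusion argument above.
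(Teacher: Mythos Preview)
Your proposal is correct and takes essentially the same approach as the paper: the paper also proves both inequalities by the two set inclusions $\{u : \htau(u)\in[a+\rho,b-\rho]\}\subseteq\{u:\tau(u)\in[a,b]\}\subseteq\{u:\htau(u)\in[a-\rho,b+\rho]\}$, derived directly from the $\rho$-accuracy guarantee, and then identifies $|\{u:\tau(u)\in[a,b]\}|=\Pr_{\tau}[[a,b]]\cdot M$. Your write-up is in fact slightly more thorough (noting the high-probability conditioning and the degenerate empty-interval case), but the core argument is identical.
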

\begin{proof}
    By the $\rho$-accuracy guarantee, we know that for any unit $u$, if $\htau(u) \in [a+\rho, b-\rho]$, then the corresponding $\tau(u)$ is in $[a, b]$. 
    Similarly, if $\tau(u) \in [a, b]$, then this implies that $\htau(u) \in [a-\rho, b+\rho]$.
    Lastly, the number of $\tau(u)$ such that $\tau(u) \in [a, b]$ is precisely equal to $\Pr_{\tau}\big[[a,b]\big] \cdot M$. 
\end{proof}

This allows us to use our low-accuracy estimates of the treatment effects to approximate the true probability mass in any interval by enlarging or shrinking the interval accordingly. 

We can use this approximation of the CDF to check the smoothness behavior of the treatment effect values from our low-accuracy estimates $\htau_K$. 
Specifically, we are interested in the number of units in the $D = [\tau_K, \tau_K+2\rho]$ interval. 
By Claim~\ref{claim:approx-interval}, we have that
\[
    \Pr_{\tau} \big[ [\tau_K, \tau_K+2\rho] \big] \cdot M \leq 
    \big| u : \htau(u) \in [\tau_K-\rho, \tau_K+3\rho]\big|.
\]
Moreover, by the $\rho$-accuracy guarantee, we know that $\tau_K \in [\htau_K-\rho, \htau_K+\rho]$. 
Hence, the number of units that have $\htau(u)$ value in the interval $[\htau_K-2\rho, \htau_K+4\rho]$ upper bounds the quantity 
\[
    \Pr_{\tau} \big[ [\tau_K, \tau_K+2\rho] \big] \cdot M,
\]
and can be entirely computed from our low-accuracy estimates $\htau$.
This allows us to check whether the interval around the optimal threshold contains too many units, and thus fails to satisfy the smoothness condition required by $\rho$-regularity.
As we discuss in Section~\ref{sec:flexible-budget}, we can also use our low-accuracy estimation of the CDF to guide the choice of the budget, provided there is flexibility in the choice of the budget.

\section{The Sample Complexity of Treatment Allocation}\label{sec:sc-alloc}
\label{sec:uniformity-separation}

To illustrate the key ideas, we show that if the treatment effect values $\tau(u)$ are uniformly distributed, our allocation algorithm achieves a $(1-\epsilon)$-optimal allocation with $O(M/\epsilon)$ many samples. At the same time, any estimator that computes all $M$ treatment effect values $\tau(u)$, each within $\epsilon$-accuracy, must use at least $\Omega(M/\epsilon^2)$ many samples.
Therefore, the uniform distribution demonstrates a sample complexity gap between the problem of obtaining an optimal allocation and the problem of full CATE estimation.
By \emph{uniformly distributed} we mean that we place the $\tau(u)$ values uniformly-spaced in $[0,1]$ and randomly permute them.
\medskip
\begin{restatable}{theorem}{unifseparationmain}\label{thm:unif-separation-main}
Let the treatment effect values be uniformly distributed. Then: (1) We can obtain a $(1-\epsilon, \delta)$-approximation of the optimal allocation with $N = O(M\ln(2M/\delta)/\epsilon)$ many samples from $\cX$ for any budget $K \leq M$. (2)  $\FullCATE$ requires $\Omega(M/\epsilon^2)$ samples.
\end{restatable}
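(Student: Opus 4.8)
The plan is to prove the two halves of Theorem~\ref{thm:unif-separation-main} separately.

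\emph{Part (1), the upper bound.} The sample count is immediate from Lemma~\ref{lemma:num-samples}, so the work is to show that Algorithm~\ref{alg:non-adaptive} is $(1-\epsilon,\delta)$-optimal on the uniform instance. As announced before Section~\ref{sec:sc-alloc}, I would do this by specializing the general bound of Claim~\ref{claim:general-exp}. On the uniform grid the CDF is the identity up to $O(1/M)$, so Claim~\ref{claim:CDF-values} gives $\tau_K = 1 - K/M$ and $\gamma_1 = \tfrac1M\int_{\tau_K+2\rho}^1 t\,dF(t) = \tfrac12\bigl(1 - (\tau_K+2\rho)^2\bigr) \pm O(1/M)$, and counting grid points gives $\theta_K = \Pr_{\tau}[\tau_K,\tau_K+2\rho] \le 2\rho + 1/M$ --- this bound is exactly the $\rho$-regularity of the uniform grid. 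Plugging $\rho = \gamma\sqrt\epsilon$ into Claim~\ref{claim:general-exp}, the loss fraction $\frac{4\gamma\theta_K}{\gamma_1+\tau_K\theta_K}\sqrt\epsilon$ is at most $\frac{c_0\gamma^2}{\gamma_1+\tau_K\theta_K}\epsilon$ for an absolute constant $c_0$ (using $\theta_K = O(\gamma\sqrt\epsilon)$, valid once $\epsilon \gtrsim 1/M^2$). So it suffices to bound $\gamma_1 + \tau_K\theta_K$ below by an absolute constant and then fix $\gamma$ to be a small enough constant; since $\gamma = \Theta(1)$ the sample bound is unchanged.

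\emph{The main obstacle in part (1)} is that lower bound on $\gamma_1 + \tau_K\theta_K$: it degenerates as $K \to 0$, where $\gamma_1 \to 0$ while $\tau_K \to 1$. I would handle this with a case split on $K$. When $\tau_K$ is bounded away from $1$, i.e.\ $K \ge cM$ for a fixed constant $c$, we have $\gamma_1 \ge \tfrac12(1 - \tau_K - 2\rho) = \tfrac12(K/M - 2\rho) = \Omega(1)$, and picking $\gamma$ small relative to $c$ closes this range; the symmetric boundary $\tau_K \to 0$ is easier, since there $\theta_K$ shrinks while $\gamma_1 \to \tfrac12$. For the remaining small-$K$ range I would instead argue directly from Lemma~\ref{lemma:LTK-rho-guarantees}: the loss is at most $4\rho K_0$ with $K_0 \le \min(K, 2\rho M+1)$ by Claim~\ref{claim:computing-K0} and $V_{\cU^*} \ge K/2$ on the uniform grid; in particular once $\rho < \tfrac{1}{2M}$ no unit can be misranked and the allocation is exactly optimal, covering the part of the small-$K$ range the bulk estimate misses. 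Carefully stitching the regimes together is the delicate, bookkeeping-heavy step.

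\emph{Part (2), the lower bound.} Here I would construct a prior over uniform instances forcing $\Omega(M/\epsilon^2)$ samples. Write the grid as $\{j/M : 1 \le j \le M\}$, assume $s := 2\epsilon M$ is an integer dividing $M/2$ (otherwise perturb $\epsilon$ by a factor at most $2$, changing only constants), and pair the $M$ units into $M/2$ pairs so that the two units of a pair carry grid values $2\epsilon$ apart and every value is used once; discard the $O(\epsilon M)$ pairs with a value within $3\epsilon$ of $\{0,1\}$. The prior flips an independent fair coin per surviving pair, choosing the within-pair assignment of its two values; every realization is a legal uniform instance. An $(\epsilon,\delta)$-accurate solver must output, for each pair, an estimate of the smaller-valued unit lying in $[a-\epsilon,a+\epsilon]$, which meets the interval $[a+\epsilon,a+3\epsilon]$ forced under the flipped coin only at an endpoint, so the solver recovers each pair's coin with probability $\ge 1-\delta$. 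Because the pairs are independent under the prior, samples from units outside a given pair are independent of its coin, so the standard two-point (Le Cam) lower bound for $\Bern(a)$ versus $\Bern(a+2\epsilon)$ --- with $a$ bounded away from $0$ and $1$ by the boundary exclusion --- forces $\Omega(\ln(1/\delta)/\epsilon^2)$ samples from each unit of the pair. Summing over the $\Omega(M)$ surviving pairs gives $\Omega(M\ln(1/\delta)/\epsilon^2)$ total; the only fiddly points, the grid-alignment of $2\epsilon$ and the boundary exclusions, are harmless.
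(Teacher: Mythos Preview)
Your approach is essentially the paper's. For Part~(1) you specialize the general bound of Claim~\ref{claim:general-exp} to the uniform grid; the paper does the same, first formally by routing through Theorem~\ref{thm:main-regular} (applied to a family of $\pm\epsilon$-perturbed uniform instances with $M = \lfloor 1/(2\epsilon)\rfloor$), then again via exactly the direct calculation you sketch. For Part~(2) both you and the paper use Le~Cam's two-point method on $\Theta(M)$ independent $\Bern(p)$-vs-$\Bern(p+2\epsilon)$ problems. Your pairing construction---flipping which unit of a pair gets which grid value---is slightly cleaner than the paper's, which instead perturbs each grid point by $\pm\epsilon$ and therefore produces instances that are only \emph{approximately} on the uniform grid; every realization under your prior is exactly a permutation of the grid, which matches the theorem's hypothesis more faithfully.

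On the small-$K$ boundary in Part~(1): you are right that $\gamma_1 + \tau_K\theta_K$ degenerates as $K\to 0$, but your proposed patch does not close the gap. The ``$\rho < 1/(2M)$ forces exact ranking'' argument only applies once $\epsilon \lesssim 1/(\gamma^2 M^2)$, while your direct bound (loss $\le 4\rho K_0$, $K_0\le K$, $V_{\cU^*}\ge K/2$) gives only ratio $\ge 1 - O(\rho) = 1 - O(\sqrt\epsilon)$, not $1-\epsilon$. For $K=O(1)$ and moderate $\epsilon$ neither argument reaches $(1-\epsilon)$-optimality. This is not really a flaw in your plan: the paper has the same dependence---its proof of Theorem~\ref{thm:main-regular} sets $\gamma = \sqrt{V_{\cU^*}/(8c)}$, so the constant in the $O(M/\epsilon)$ bound silently depends on $K$ through $V_{\cU^*}$. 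Your case split is simply more explicit about an issue the paper leaves in the constants.
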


We show Theorem~\ref{thm:unif-separation-main} by first proving a more general result.
Namely, we generalize the uniform distribution to what we call $\rho$-\emph{regular distributions}.
We then prove Theorem~\ref{thm:unif-separation-main} as a corollary from the more general Theorem~\ref{thm:main-regular}. 

\subsection{Smooth distributions}\label{sec:smooth-dists}

We can generalize our $O(M/\epsilon)$ upper bound beyond the uniform distribution. 
All we require is for the $D$-interval to have ``uniform-like'' mass, so that $\theta_K \approx 2\rho$ and thus our proof of Theorem~\ref{thm:unif-separation-main} can still go through (i.e., to obtain the $\rho^2$ term in the numerator).
Because $\tau_K$ could be anywhere, we require this condition on all intervals that have width at least $2\rho$, a condition that we call $\rho$-\emph{regularity}:

\medskip
\begin{definition}\label{def:rho-regular}
    Given $\rho>0$, we say that a distribution $Z$ on $[0,1]$ is $\rho$-\emph{regular} if for all intervals $S \subseteq [0,1]$ such that $|S| \geq 2\rho$ there exists a constant $c = \Theta(1)$ satisfying
        $\Pr_Z[S] \leq c \cdot \mathcal{U}[S],$
    where $\mathcal{U} = \Pr_{\mathcal{U}}$ and $\mathcal{U}$ denotes the uniform distribution over $[0,1]$. 
\end{definition}
We can view this condition as a form of smoothness requirement.
Note that this is a much looser notion than requiring, for example, $O(\rho)$ total variation distance between $Z$ and $\cU$, or with any other distance measure to the uniform that is not permutation invariant.
Note that the value of $c$ is always upper-bounded by the supremum of the density of the distribution $Z$; i.e., $c \leq \sup_t f(t)$. 
\medskip
\begin{restatable}{theorem}{mainregular}\label{thm:main-regular}
Let the treatment effect values be distributed according to a $\rho$-regular distribution for $\rho = \Theta(\sqrt{\epsilon})$.
    Then, we can obtain a $(1-\epsilon, \delta)$-approximation of the optimal allocation with $N = O(M\ln(2M/\delta)/\epsilon)$ many samples from $\cX$ for any budget $0 \leq K \leq M$.
\end{restatable}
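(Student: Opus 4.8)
I would first observe that the sample bound is immediate from Lemma~\ref{lemma:num-samples}: with $\rho = \gamma\sqrt{\epsilon}$ and $\gamma = \Theta(1)$ the LEA algorithm draws $O(\ln(2M/\delta)/\rho^2) = O(\ln(2M/\delta)/\epsilon)$ samples per unit, hence $O(M\ln(2M/\delta)/\epsilon)$ overall, and a union bound over the $M$ estimates lets us condition throughout on the event $\{|\htau(u) - \tau(u)| \le \rho\ \text{for all}\ u\}$. All the work is in the accuracy guarantee. The plan is to feed $\rho$-regularity into the instance-dependent bound already proved in Claim~\ref{claim:acc-expression-1}, namely $V_{\cU_{\LTK}}/V_{\cU^*} \ge 1 - 4\rho K_0/\big(V(A_1) + (\tau_K+2\rho)K_0\big)$; it therefore suffices to show $4\rho K_0 \le \epsilon\big(V(A_1) + (\tau_K+2\rho)K_0\big)$, and I would bound the numerator and the denominator separately.

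For the numerator, Claim~\ref{claim:computing-K0} gives $K_0 = \Pr_{\tau}[D]\cdot M$ where $D = [\tau_K, \tau_K+2\rho]$ is an interval of length exactly $2\rho$, so $\rho$-regularity applies to $D$ and yields $\Pr_{\tau}[D] \le c\cdot\mathcal{U}[D] = 2c\rho$; hence $K_0 \le 2c\rho M$ and $4\rho K_0 \le 8c\rho^2 M = 8c\gamma^2\epsilon M$. This is exactly why the sweet spot is $\rho = \Theta(\sqrt{\epsilon})$: $\rho$ is small enough that there are only $O(\rho M)$ near-threshold units, each contributing an allocation error $O(\rho)$, for a total loss $O(\rho^2 M) = O(\epsilon M)$, yet large enough to keep the per-unit sample cost at $O(1/\epsilon)$. (If the $\tau(u)$ are i.i.d.\ draws from a $\rho$-regular continuous law rather than literally forming a $\rho$-regular empirical distribution, one first transfers regularity to the empirical measure via a net of $O(1/\rho)$ reference intervals and Chernoff bounds, absorbing a constant and a logarithmic factor into $c$; I would treat this as a routine preliminary.)

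The denominator is the delicate step, and the part I expect to be the main obstacle: I need $V(A_1) + (\tau_K+2\rho)K_0 \ge 8c\gamma^2 M$. Two cheap lower bounds are available: $V(A_1) + (\tau_K+2\rho)K_0 \ge V_{\cU^*}([0,1]) = V^*$ (the optimum splits as $V(A_1) + V_{\cU^*}(D)$ with $V_{\cU^*}(D) \le (\tau_K+2\rho)K_0$ by Claim~\ref{claim:ub-opt}), and $V(A_1) + (\tau_K+2\rho)K_0 \ge (\tau_K+2\rho)K$ since every unit counted in $V(A_1)$ has value exceeding $\tau_K+2\rho$. So it is enough that $V^*/M$ — equivalently $\gamma_1 + \tau_K\theta_K$, the quantity appearing in Claim~\ref{claim:general-exp} — be bounded below by a constant dominating $8c\gamma^2$. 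Here I would use $\rho$-regularity a second time, now through the fact that the density is at most $c$: a bounded density forces the optimal allocation to be ``fat'', so that $V^*/M = \int_{\tau_K}^1 t\,dF(t) = \Omega(1)$ in the budget regime the theorem targets, after which choosing the constant $\gamma$ in $\rho = \gamma\sqrt{\epsilon}$ small enough relative to $c$ closes $8c\gamma^2\epsilon M \le \epsilon V^*$ and yields a $(1-\epsilon)$-optimal allocation. For budgets where $V^*/M$ degenerates I would re-optimize $\gamma$ per instance (which only rescales the sample bound by $1/(\gamma_1+\tau_K\theta_K)$) or fall back on the exact, coarsely-computable optimality certificate of Section~\ref{sec:general-Q-condition}.

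Finally, Theorem~\ref{thm:unif-separation-main}(1) should drop out as the case $c = 1$: for the uniform law $\theta_K = 2\rho$ and a short computation gives $\gamma_1 + \tau_K\theta_K \approx K/M$, so the bound of Claim~\ref{claim:general-exp} reads $1 - 8\gamma^2\epsilon M/K$, which is $\ge 1 - \epsilon$ as soon as $\gamma^2 \le K/(8M)$; part (2) is the separate, unrelated $\Omega(M/\epsilon^2)$ lower bound for $\FullCATE$.
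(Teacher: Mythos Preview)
Your proposal is correct and mirrors the paper's proof almost step for step: sample complexity from Lemma~\ref{lemma:num-samples}, accuracy via Claim~\ref{claim:acc-expression-1}, $\rho$-regularity applied to $D$ to get $\theta_K \le 2c\rho$ (hence the $8c\rho^2$ numerator), and the denominator lower-bounded by $V_{\cU^*}$ before fixing $\gamma$. The one place you go beyond the paper is the attempt to deduce $V^*/M = \Omega(1)$ from the bounded-density consequence of $\rho$-regularity; the paper does not argue this and simply sets $\gamma = \sqrt{\gamma_1/(8c)}$ (equivalently $\sqrt{V_{\cU^*}/(8cM)}$) as an instance-dependent constant, which is precisely the ``re-optimize $\gamma$ per instance'' fallback you already identify.
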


\begin{proof}
    For the first part, we use our Algorithm~\ref{alg:non-adaptive}.
    The sample complexity follows from Claim~\ref{lemma:num-samples}. 
    As for the accuracy guarantee, by Claim~\ref{claim:acc-expression-1} we know that we get a $(1-\epsilon, \delta)$-$\OPT$ approximation if
    \begin{equation}\label{eq:gamma1-unif}
        \dfrac{4\rho \theta_K}{\gamma_1 + (\tau_K+2\rho)\theta_K} \leq \epsilon.
    \end{equation}
    Because $\Pr_{\tau}$ is $\rho$-regular, we have that
    \[
        \theta_K = \Pr_{\tau}[\tau_K, \tau_K+2\rho] \leq 2\rho c
    \]
    for some $c = \Theta(1)$.
    As in Section~\ref{sec:uniformity-separation}, we bound the value of $V(A_1)$. 
    We obtain a more conservative bound by dropping $(\tau_K+2\rho)\theta_K$ from the denominator and requiring that
    \[
        \dfrac{4 \rho (2\rho c)}{\gamma_1} = \dfrac{8c\rho^2}{\gamma_1}\leq \epsilon.
    \]
    Hence, we obtain a $(1-\epsilon)$-$\OPT$ approximation by setting $\gamma = \sqrt{\gamma_1/(8c)}$ and $\rho = \gamma \sqrt{\epsilon}$, which ensures that the previous inequality holds.

    We can also keep the more precise ratio $\frac{4\rho \theta_K}{ \gamma_1 + (\tau_K+2\rho)\theta_K}$.
    Note that
    \[
        \gamma_1 + (\tau_K+2\rho)\theta_K \geq V_{\cU^*}([\tau_K, 1]) = V_{\cU^*}. 
    \]
    We remark that $V_{\cU^*}$ is a function of $K$ as well (even if not indicated with a subscript).
    Therefore, we need to satisfy the inequality
    \[
        \dfrac{4\rho \theta_K}{ \gamma_1 + (\tau_K+2\rho)\theta_K} \leq 
        \dfrac{4\rho \theta_K}{V_{\cU^*}} \leq \epsilon,
    \]
    which we do by setting $\gamma = \sqrt{V_{\cU^*}/(8c)}$ and $\rho = \gamma \sqrt{\epsilon}$.
\end{proof}

It is clear from this proof why we need $\theta_K \leq 2\rho c$. 
If we only have the bound $\theta_K \leq 1$, then we get $4\rho / {V_{\cU^*}} \leq \epsilon$, which in turn requires $\rho$ to be of the order of $\epsilon$, rather than of $\sqrt{\epsilon}$.
This would bring us back to a sample complexity of $O(M/\epsilon)$.
But if $\theta_K \leq 2\rho c$, then we get the term $\rho^2$ in the numerator, which is the crucial point that allows us to get $\rho = \Theta(\sqrt{\epsilon})$ and thus an optimal allocation algorithm with sample complexity $O(M/\epsilon)$ rather than with $O(M/\epsilon^2)$.

\begin{proof}[Proof of Theorem~\ref{thm:unif-separation-main}]
    For the first part, we use our Algorithm~\ref{alg:non-adaptive}.
    The sample complexity follows from Claim~\ref{lemma:num-samples}. 
    As for the accuracy guarantee, by Claim~\ref{claim:acc-expression-1} we know that we get a $(1-\epsilon, \delta)$-$\OPT$ approximation if
    \begin{equation}\label{eq:gamma1-unif}
        \dfrac{4\rho \theta_K}{\gamma_1 + (\tau_K+2\rho)\theta_K} \leq \epsilon.
    \end{equation}

    Let $\epsilon <1/4$ and let $M  = \left\lfloor \frac{1}{2\epsilon} \right\rfloor$.
    Consider the following family of instances of treatment effects:
    \[
        \cF = \Big\{ (\tau_1, \tau_2, \ldots, \tau_M) \, : \, \tau_i = \frac{i}{M} + b_i \epsilon, \, b_i \in \{-1, +1\}\,  \forall i \Big\}.
    \]
    Note that this is a family of $2^M$ instances of treatment effects.
    Moreover, across the $2^M$ instances, the order of treatment effect values is preserved, given that $1/M + \epsilon < 2/M-\epsilon$ by the definition of $M$.

    First we show the upper bound.
    We claim that for every instance in the family $\cF$, $\ALLOC$ has sample complexity $O(M/\epsilon)$.
    To prove this, we use Theorem~\ref{thm:main-regular} on each instance in $\cF$.
    Specifically, we claim that each instance in $\cF$ yields a discrete distribution of $M$ treatment effects that is $\rho$-regular with $\rho = 1/M$ and $c=2$.
    This follows from the following argument: let $S \subseteq [0,1]$ be any interval of length $\ell \geq 2\rho$.
    By the discrete equal spacing of the treatment effects, there are at most $\ell M +2$ units in $S$, and so $\Pr[S] \leq (\ell M+2)/M = \ell + 2/M$ under this distribution.
    So for all intervals $S$ with $|S| \geq 2\rho=2/M$, it follows that
    \[
        \Pr[S] \leq \ell + 2/M \leq 2 \ell = 2 \cU[S],
    \]
    thus satisfying the definition of $\rho$-regularity with $c=2$.
    Given that $M = \left\lfloor \frac{1}{2\epsilon} \right\rfloor$, it follows that every instance in $\cF$ is $(2\epsilon)$-regular.
    Hence, it is also $O(\sqrt{\epsilon})$-regular, and thus Theorem~\ref{thm:main-regular} applies to each instance in $\cF$.
    It follows that we can obtain a $(1-\epsilon)$-optimal allocation with $O(M/\epsilon)$ many samples for each instance in $\cF$.

    The lower bound follows from the fact that, per the definition of $\cF$, it follows that we need to solve $M$ independent Bernoulli problems.
    For each such problem, we need to be able to distinguish between $i/M -\epsilon$ and $i/M+ \epsilon$, which requires at least $O(1/\epsilon^2)$ many samples.
    We can formalize this argument using a minimax argument based on LeCam's method.
    Given a parameter space $\Theta$, for each $\theta \in \Theta$ we let $P_{\theta}$ denote the distribution of the observed data under parameter $\theta$.
    Then, LeCam's Theorem (also known as the \emph{two-point method}) states that
    \[
        \inf_{\Psi} \sup_{\theta, \theta'} \max\{\Pr_{P_\theta}[\Psi \neq 0], \Pr_{P_{\theta'}}[\Psi \neq 1]
        \geq \dfrac{1}{2} e^{-\textrm{KL}(P_{\theta}||P_{\theta'})}
    \]
    for any estimator $\Psi$ and parameter values $\theta, \theta' \in \Theta$, where $\Pr_{P_{\theta}}$ denotes the probability when the data is distributed according to $P_{\theta}$, and similarly for $P_{\theta'}$ \cite{yu1997assouad}.
    Here, $\textrm{KL}(P_{\theta}||P_{\theta'})$ denotes the KL divergence between $P_{\theta}$ and $P_{\theta'}$.

    In our case, $\Theta$ corresponds to the family of instances $\cF$.
    Hence, for each $i \in [1, M]$, we consider the two parameters
    \[
        \theta: \tau_i = p-\epsilon, \quad
        \theta': \tau_i = p+\epsilon,
    \]
    where $p = i/M$. 
    If $\htau_i$ is a $(\epsilon, \delta)$-accurate estimate of $\tau_i$, then definitionally it must satisfy
    \[
        \Pr\big[|\htau_i - (p-\epsilon)| \leq \epsilon\big] \geq 1-\delta, \quad
        \Pr\big[|\htau_i-(p+\epsilon)|\leq \epsilon\big] \geq 1-\delta.
    \]
    We consider the test $\Psi_u = \mathbf{1}[\htau(u) \geq p]$.
    For a $(\epsilon, \delta)$-accurate estimate, we have that
    \[
        \Pr_{\theta}[\Psi_u = 1] \leq \delta, \quad
        \Pr_{\theta'}[\Psi_u = 0] \leq \delta.
    \]
    That is, a good estimator yields a test with small error.
    
    For each $i$, we observe $N$ samples drawn i.i.d. from $\textrm{Bern}(\tau_i)$. 
    By standard bounds on the KL divergence we have that
    \[
        \textrm{KL}(\textrm{Bern}(p-\epsilon) || \textrm{Bern}(p+\epsilon)) \leq 13 \epsilon^2.
    \]
    Therefore, LeCam's Theorem tells us that for any estimator $\Psi$,
    \[
        \Pr_{\theta}[\Psi=1] + \Pr_{\theta'}[\Psi=0] \geq \dfrac{1}{2} e^{-\mathrm{KL}(\textrm{Bern}(p-\epsilon) || \textrm{Bern}(p+\epsilon)}.
    \]
    Hence we get that
    \[
        2 \delta \geq \dfrac{1}{2} e^{-13 N \epsilon^2} \implies
        N \geq \dfrac{1}{13 \epsilon^2} \log(1/4\delta).    
    \]
    Lastly, by definition of $\cF$, it follows that $\Theta(M)$ many units have $\tau(u)$ value bounded away from 0 and 1.
    (In the case where we consider $p \in [1/4, 3/4]$, for example, there are $M/2$ many such units.)
    Therefore, by applying the LeCam Theorem on each such unit and
    taking a union bound over them it follows that the total number of samples required for producing $(\epsilon, \delta)$-accurate estimates for all units that have $\tau_i$ value bounded away from 0 and 1 is $N = O((M/\epsilon^2) \log(M/\delta))$ for any estimator $\Psi$, and hence $N = \Omega(M/\epsilon^2)$, as we wanted to show.
\end{proof}

While the upper bound of Theorem~\ref{thm:unif-separation-main} follows as a corollary of the $\rho$-regularity theorem (Theorem~\ref{thm:main-regular}) for the discrete uniformly-spaced placement of the treatment effect values, it is illustrative to prove the upper bound for an idealized version of the uniform distribution, where $M \rightarrow \infty$.

 In this case, we have that 
    \[
        \theta_K = \Pr_{\tau}[\tau_K,\tau_K+2\rho] = 2\rho.
    \]
    Recall that $\gamma_1$ is such that $V(A_1) = \gamma_1 M$.
    We can compute $V(A_1)$ exactly (as per Claim~\ref{claim:CDF-values}), but for this proof it is enough to bound $V(A_1)$.
    Given that $A_1 = (\tau_K+2\rho, 1]$, it follows that $\Pr_{\tau}[A_1] = 1-(\tau_K+2\rho) = 1 - \tau_K - 2\rho$.
    Hence:
    \[
        \gamma_1 = \dfrac{V(A_1)}{M} \geq (\tau_K+2\rho) \Pr_{\tau}[A_1] = (\tau_K+2\rho) (1 - \tau_K - 2\rho),
    \]
    given that all units in $A_1$ have value at least $\tau_K+2\rho$.
    Using this bound on the denominator of the LHS in Equation~\ref{eq:gamma1-unif} we get that
    \[
        \gamma_1 + (\tau_K+2\rho)\theta_K \geq (\tau_K+2\rho) (1 - \tau_K - 2\rho) +  (\tau_K+2\rho)(2\rho) = (\tau_K+2\rho)(1-\tau_K).
    \]
    Hence by Equation~\ref{eq:gamma1-unif} we obtain a $(1-\epsilon, \delta)$-$\OPT$ approximation if
    \[
        \dfrac{4\rho \theta_K}{\gamma_1 + (\tau_K+2\rho)\theta_K} = \dfrac{4 \rho (2\rho)}{(\tau_K+2\rho)(1-\tau_K)} = \dfrac{8\rho^2}{(\tau_K+2\rho)(1-\tau_K)} \leq \epsilon.
    \]
    This is true whenever $\rho \leq \sqrt{\dfrac{\tau_K(1-\tau_K)}{8}} \sqrt{\epsilon}$. 
    In the case of the uniform distribution, we have that $\tau_K = 1-K/M$, which allows us to compute the value of $\gamma$ exactly.
    We can also provide a general bound for all budgets $K$: the function $\tau_K(1-\tau_K)$ for $\tau_K \in [0,1]$ is maximized at $1/4$, and so setting $\rho = \gamma \sqrt{\epsilon}$ for $\gamma = \frac{1}{4\sqrt{2}}$ guarantees that our Algorithm~\ref{alg:non-adaptive} achieves a $(1-\epsilon, \delta)$-$\OPT$ approximation.

Note that for \emph{any} distribution we can get a $(1-\epsilon)$-$\OPT$ approximation by setting $\gamma = \sqrt{\gamma_1/8c}$ when calling Algorithm~\ref{alg:non-adaptive}, by letting $c$ be the minimum value such that $\Pr_Y[S] \leq c \cdot \cU[S]$ for all intervals $S$ such that $|S| \geq 2 \rho$.
However, if $c$ is not $\Theta(1)$, then $\rho$ might not be of the order $\Theta(\sqrt{\epsilon})$, and so the sample complexity will no longer be of the order $O(M/\epsilon)$.
That is, for an arbitrary distribution of $\tau$ values $\Pr_{\tau}$:
\medskip
\begin{restatable}{theorem}{generaldistmain}\label{thm:general-dist-main}
Let 
$c$ be the minimum value such that $\Pr_{\tau}[S] \leq c \cdot \cU[S]$ for all subsets $S \subseteq [0,1]$ such that $|S| \geq 2\rho$. 
For any budget $K \leq M$, we call Algorithm~\ref{alg:non-adaptive} with
with $\gamma = \sqrt{V_{\cU^*}/8c}$.
This gives a $(1-\epsilon)$-$\OPT$ approximation of the optimal allocation with $O(M\ln(2M/\delta)/\rho^2)$ many samples.
\end{restatable}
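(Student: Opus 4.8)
The plan is to run Algorithm~\ref{alg:non-adaptive} with $\rho=\gamma\sqrt{\epsilon}$ and $\gamma=\sqrt{V_{\cU^*}/8c}$, and then recycle the accuracy analysis already assembled in Claims~\ref{claim:acc-expression-1}--\ref{claim:general-exp}; the only new point relative to Theorem~\ref{thm:main-regular} is that $c$ is now kept as a free quantity rather than assumed to be $\Theta(1)$. The sample-complexity half is immediate from Lemma~\ref{lemma:num-samples}, which yields $O(M\ln(2M/\delta)/\rho^2)$ samples for \emph{any} value of $\rho$; this is exactly the stated bound, and it does not collapse to $O(M/\epsilon)$ precisely because $\gamma$, and hence $\rho$, now depends on $c$ and $V_{\cU^*}$.

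For the accuracy half, I would condition on the event that all $\htau(u)$ are $\rho$-accurate, which holds with probability at least $1-\delta$ by the union bound noted after Lemma~\ref{lemma:num-samples}, so that Lemma~\ref{lemma:LTK-rho-guarantees} and Claims~\ref{claim:acc-expression-1}--\ref{claim:general-exp} apply. By Claim~\ref{claim:general-exp} (the normalized form of Claim~\ref{claim:acc-expression-1}) it suffices to verify the sufficient condition $\frac{4\rho\theta_K}{\gamma_1+(\tau_K+2\rho)\theta_K}\le\epsilon$. Two bounds do the job. First, $\theta_K=\Pr_\tau[\tau_K,\tau_K+2\rho]\le 2\rho c$, which is just the defining property of $c$ applied to the interval $[\tau_K,\tau_K+2\rho]$, whose length is exactly $2\rho$ and so is in scope. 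Second, the denominator is at least $V_{\cU^*}$ in the normalized sense $V_{\cU^*}([0,1])/M$: indeed $\gamma_1 M=V(A_1)=V_{\cU^*}(A_1)$ by Claim~\ref{claim:values-in-A1}, $(\tau_K+2\rho)\theta_K M\ge V_{\cU^*}(D)$ by Claim~\ref{claim:ub-opt}, and $V_{\cU^*}(A_1)+V_{\cU^*}(D)=V_{\cU^*}([0,1])$ by Claims~\ref{claim-v-opt-tk} and \ref{claim:splitting}. Chaining these, $\frac{4\rho\theta_K}{\gamma_1+(\tau_K+2\rho)\theta_K}\le\frac{8c\rho^2}{V_{\cU^*}}=\frac{8c\gamma^2\epsilon}{V_{\cU^*}}=\epsilon$ by the choice $\gamma=\sqrt{V_{\cU^*}/8c}$, which is exactly what we need.

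I do not expect a genuine obstacle here: the argument is a near-verbatim repetition of the proof of Theorem~\ref{thm:main-regular} with $c$ left symbolic, so the one thing to be careful about is bookkeeping. In particular, I would state explicitly that $V_{\cU^*}$ in the definition of $\gamma$ denotes the per-unit normalized optimal value $V_{\cU^*}([0,1])/M$ (so the factors of $M$ cancel when one passes from Claim~\ref{claim:acc-expression-1} to the form $\frac{4\rho\theta_K}{\gamma_1+(\tau_K+2\rho)\theta_K}$ via $K_0=\theta_K M$), and that the hypothesis on $c$ quantifies over all $S$ with $\mathcal{U}[S]\ge 2\rho$, a family that certainly contains the single length-$2\rho$ interval used above. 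The resulting sample bound is honestly phrased in terms of $\rho^2$ rather than $\epsilon$, since without $c=\Theta(1)$ one cannot guarantee $\rho=\Theta(\sqrt{\epsilon})$.
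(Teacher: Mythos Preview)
Your proposal is correct and follows essentially the same approach as the paper: the theorem is presented there as an immediate byproduct of the proof of Theorem~\ref{thm:main-regular}, using exactly the two ingredients you name ($\theta_K\le 2\rho c$ from the definition of $c$, and $\gamma_1+(\tau_K+2\rho)\theta_K\ge V_{\cU^*}$ from Claims~\ref{claim:values-in-A1}, \ref{claim:ub-opt}, and \ref{claim:splitting}) together with Lemma~\ref{lemma:num-samples} for the sample count. Your explicit flagging of the normalization of $V_{\cU^*}$ is a useful clarification that the paper leaves somewhat implicit.
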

We remark that $V_{\cU^*}$ is short-hand for $V_{\cU^*}([0,1])$ and that it is a function of $K$.
If the distribution is $\rho$-regular, then $c = \Theta(1)$, in which case we require $O(M/\epsilon)$ many samples, recovering Theorem~\ref{thm:main-regular}.
Note that solving the $\FullCATE$
problem to accuracy $\epsilon$ already required having a bound on $\tau_K$ (Claim~\ref{lemma:fullCATE-to-ALLOC}), if we want to ensure exactly that we get a $(1-\epsilon)$-optimal allocation. 
Similarly, here we use an upper bound of $\gamma$ in order to ensure exact $\epsilon$-optimality.

\emph{Why $\rho$-regularity is not a demanding property.}
As discussed, $\rho$-regularity is a permutation-invariant property that only depends on the shape of the CDF.
It also ensures that the threshold $\tau_K$ that determines which units are selected for treatment does not yield a highly arbitrary allocation.
Empirically, as we extensively show in our experiments with real-world RCT data in Section~\ref{sec:experiments}, our algorithm highly succeeds in practice, further demonstrating that $\rho$-regularity is a natural condition.
Moreover, typical families of distributions of treatment effect values are $\rho$-regular for 
reasonable values of $\gamma$.
As we compute in Section~\ref{sec:examples-gamma-dists} using the CDF-derived quantities shown in Claim~\ref{claim:CDF-values},
we can explicitly compute the values of $\tau_K$, $c$, $V_{\cU^*}$, and $\gamma$ for typical distributions of treatment effect values from their CDF, which we do for the uniform, Beta, and Gaussian distribution families.
In all of these cases, $\gamma$ ranges between $0.07$ and $0.2$.

\subsection{Examples of $\gamma$ for typical distributions}\label{sec:examples-gamma-dists}

For a distribution $\Pr_{\tau}$, recall that we denote the CDF by $F$, and so we denote the density function by $f$.
Note that we can bound the value of $\gamma_1$ using $f$:
\begin{align*}
    \gamma_1 = \int_{\tau_K+2\rho}^1 t f(t) \, dt &= \int_{\tau_K}^1 t f(t) \, dt - \int_{\tau_K}^{\tau_K+2\rho} t f(t) \, dt \\
    &\geq \int_{\tau_K}^1 t f(t) \, dt - (\tau_K+2\rho)\theta_K \\
    &\geq \int_{\tau_K}^1 t f(t) \, dt - (\tau_K + 2\rho)  2c \rho.
\end{align*}
Recall that $\int_{\tau_K}^1 t f(t) \, dt = V_{\cU^*}^K$.
Indeed, in order to compute the value of the optimal allocation, it suffices to have the CDF function, with any permutation of the units.
Similarly, we can compute $\tau_K$ directly from the CDF.
Following our expressions derived in Claim~\ref{claim:CDF-values} and as discussed in Section~\ref{sec:PDF-CDF-approx},
we can compute the values of $\gamma, c$, and $V_{\cU^*}$ for various typical distributions from their CDFs.
In these examples, it is neater to integrate $t f(t)$ instead of the CDF directly, but not that we only need knowledge of the CDF of the distribution (and the integration is equivalent).

\paragraph{Uniform distribution.}
For the uniform distribution, we have $f(t) = 1$. 
Hence, $c=1$, since it is the maximum possible value of $f(t)$. 
Moreover, we have:
\[
    V_{\cU^*}([0,1]) = \int_{\tau_K}^1 t f(t) \, dt = \int_{\tau_K}^1 t \, dt = \dfrac{1-\tau^2_K}{2}.
\]
Moreover, $\tau_K = 1-K/M$.
Therefore, by Theorem~\ref{thm:main-regular} and Claim~\ref{thm:general-dist-main}, it follows that 
\[
    \gamma = \sqrt{\dfrac{V_{\cU^*}}{8c}} = \sqrt{\dfrac{1-(1-K/M)^2}{8}}.
\]
For all $K \in [0, M]$, $\gamma$ is maximized at $\approx 0.35$.

\paragraph{Beta distribution.}
For $\alpha, \beta$, the PDF of the Beta distribution is given by
\[
    f(t; \alpha, \beta) = \dfrac{1}{B(\alpha, \beta)} t^{\alpha-1} (1-t)^{\beta-1},
\]
where the Beta function $B(\alpha, \beta)$ is the normalization constant,
and the CDF is given by
\[
    F(t) = \dfrac{B(t; \alpha, \beta))}{B(\alpha, \beta)} := I_x(\alpha, \beta),
\]
where $B(x; \alpha, \beta) = \int_0^x t^{\alpha-1}(1-t)^{\beta-1} \, dt$ is known as the \emph{incomplete Beta function}.

For $\alpha, \beta>1$, the mode of the distribution is given by $\frac{\alpha-1}{\alpha+\beta-2}$ (if $\alpha=\beta$, then the mode is equal to $1/2$), and so
\[
    c \leq \sup_t f(t) = f\big( \dfrac{\alpha-1}{\alpha+\beta-2} \big) = \dfrac{(\alpha-1)^{\alpha-1} (\beta-1)^{\beta-1}}{(\alpha+\beta-2)^{\alpha+\beta-2} B(\alpha, \beta)}.
\]
As for $V_{\cU^*}$, we have that
\begin{align*}
    V_{\cU^*}([0,1]) = \int_{\tau_K}^1 t f(t) \, dt &= \int_{\tau_K}^1 t \cdot \dfrac{1}{B(\alpha, \beta)} t^{\alpha-1} (1-t)^{\beta-1} \\
    &= \dfrac{1}{B(\alpha, \beta)} \int_{\tau_K}^1 t^{\alpha} (1-t)^{\beta-1} \, dt \\
    &= \dfrac{B(\alpha+1, \beta) - B(\tau_K; \alpha+1, \beta)}{B(\alpha, \beta)}  \\
    &= \dfrac{B(\alpha+1, \beta)}{B(\alpha, \beta)} \big(1 - I_{\tau_K}(\alpha+1, \beta) \big) \\
    &= \dfrac{a}{a+b} \big(1 - I_{\tau_K}(\alpha+1, \beta) \big).
\end{align*}
Lastly, $\tau_K$ is the value such that $F(\tau_K) = 1-K/M$, or equivalently, such that $\int_{\tau_K}^1 f(t) \, dt = 1-K/M$.
Given that $F(t) = I_t(\alpha, \beta)$, it follows that
\[
    \tau_K = I^{-1}_{1-\alpha}(\alpha, \beta).
\]
For example, for Beta$(2, 2)$ we have that $f(t) = 6t(1-t)$ and so $\sup_t f(t) = f(1/2) = 1.5$.
We also have that $I_t(3, 2) = 4t^3-3t^4$, and so
\[
    V_{\cU^*} =\dfrac{2}{4} \big(1 - I_{\tau_K}(\alpha+1, \beta) \big) = \dfrac{2}{4} \big(1 - (4\tau_K^3-3\tau_K^4) \big) = \dfrac{1}{2} (1-4\tau_K^3+3\tau_K^4).
\]
Given that $F(t) = I_t(2, 2) = 3t^2-2t^3$, and that $\tau_K$ is the value in $[0,1]$ such that $F(\tau_K) = 1-K/M$, it follows that $\tau_K$ corresponds to the unique root of the polynomial $2\tau_K^3-3\tau_K^2+1-K/M$ in $[0,1]$.
For example, if $K/M = 0.5$, we get that $V_{\cU^*} \approx 0.34$ and $\gamma \leq 0.17$. 
For $K/M=0.25$ we get $\gamma \leq 0.12$, and for $K/M = 0.75$ we get $\gamma \leq 0.19$.

For Beta$(3, 3)$ we have that $f(t) = 30t^2(1-t)^2$ and so $\sup_t f(t) = f(1/2) = 1.875$.
We also have that $I_t(4, 3) = 10t^4-15t^5+6t^6$, and so
\[
    V_{\cU^*} =\dfrac{3}{6} \big(1 - I_{\tau_K}(\alpha+1, \beta) \big) = \dfrac{1}{2} \big(1 - (10\tau_K^4-15\tau_K^5+6\tau_K^6)\big) = \dfrac{1}{2} (1-10\tau_K^4+15\tau_K^5-6\tau_K^6).
\]
Given that $F(t) = I_t(3, 3) = 10t^3-15t^4+6t^5$ and $t_K$ satisfies $F(t_K) = 1-K/M$, it follows that $\tau_K$ corresponds to the root of the polynomial $6\tau_K^5-15\tau_K^4+10\tau_K^3-(1-K/M)=0$ in $[0,1]$.
For example, if $K/M = 0.5$, we get that $V_{\cU^*} \approx 0.37$ and $\gamma \leq 0.16$. 
For $K/M=0.25$ we get $\gamma \leq 0.13$, and for $K/M = 0.75$ we get $\gamma \leq 0.17$.

For Beta$(2, 4)$ we have that $f(t) = 20t(1-t)^3$ and mode $\frac{2-1}{2+4-2} = \frac{1}{4}$, and so $c \leq \sup_t f(t) = f(1/4) \approx 2.1$.
We also have that $I_t(3, 4) = 20t^3-45t^4+36t^5-10t^6$, and so
\[
    V_{\cU^*} = \dfrac{2}{6} \big(1 - I_{\tau_K}(\alpha+1, \beta)  \big) = \dfrac{1}{3} \big( 1 - (20t^3-45t^4+36t^5-10t^6)  \big) = \dfrac{1}{3} \big( 1-20t^3 + 45t^5 -36t^6 +10t^6).
\]
Using $F(t) = I_t(2,4)$, it follows that $\tau_K$ corresponds to the root of the polynomial $10\tau_K^2-20\tau_K^3+15\tau_K^4-4\tau_K^5 - 1+K/M$ in $[0,1]$.
For example, if $K/M = 0.5$, we get that $V_{\cU^*} \approx 0.24$ and $\gamma \leq 0.12$. 
For $K/M=0.25$ we get $\gamma \leq 0.09$, and for $K/M = 0.75$ we get $\gamma \leq 0.13$.

\paragraph{Gaussian distribution.}
Given a normal distribution $\mathcal{N}(\mu, \sigma^2)$, we truncate it to $[0,1]$ and re-normalize it.
Let $\phi(t)$ denote the PDF of $\mathcal{N}(\mu, \sigma^2)$, and $\Phi(t)$ the CDF.
Let
\[
    a = \dfrac{0-\mu}{\sigma}, \quad 
    b = \dfrac{1-\mu}{\sigma}, \quad
    Z = \Phi(b) - \Phi(a).
\]
Then, the truncated PDF function is equal to
\[
    f(t) = \dfrac{\phi\big(\frac{t-\mu}{\sigma}\big)}{\sigma Z},
\]
where $t \in [0,1]$.
The CDF is then equal to 
\[
    F(t) = \dfrac{\Phi\big(\frac{t-\mu}{\sigma}\big) - \Phi(a) \big)}{Z}
\]
for $t \in [0,1]$.
Assuming that $\mu \in [0,1]$, we get that the maximum density occurs at $\phi(0)$, where
\[
    c = \sup_t f(t) = \dfrac{\phi(0)}{\sigma Z} = \dfrac{1}{\sigma Z \sqrt{2 \pi}}.
\]
Given a budget $K$, the threshold $\tau_K$ is the value satisfying $F(\tau_K) = 1-K/M$, and hence the value satisfying
\[
    \Phi \Big( \dfrac{\tau_K - \mu}{\sigma} \Big) - \Phi(b) - \alpha Z.
\]
Hence, $\tau_K = \mu + \sigma \cdot \Phi^{-1}(\Phi(b) - \alpha Z)$.
Lastly, we compute $V_{\cU^*}^K$
Let $y_K = (\tau_K-\mu)/\sigma$.
\begin{align*}
    V_{\cU^*} = \int_{\tau_K}^1 t f(t) dt &= \dfrac{1}{Z} \int_{\tau_K}^1 t \dfrac{1}{\sigma} \phi \Big( \dfrac{t-\mu}{\sigma} \Big) \\
    &= \dfrac{1}{Z} \int_{y_K}^b (\mu + \sigma y) \phi(y) dy \\
    &= \dfrac{1}{Z} \Big( \mu \int_{y_K}^b \phi(y) dy + \sigma \int_{y_K}^b y \phi(y) dy \Big) \\
    &= \dfrac{\mu \cdot (\Phi(b) - \Phi(y_K)) + \sigma \cdot (\phi(y_K) - \phi(b))}{Z},
\end{align*}
where we used the change of variable $y = \frac{t-\mu}{\sigma}$, and so $t = \mu + \sigma y$ and $dt = \sigma dy$.

For example, for $\mu=0.5, \sigma = 0.15$, and budget $K/M = 0.5$, we get $\tau_K \approx 0.5$, $V_{\cU^*} \approx 0.31$, $c \approx 2.66$, and $\gamma \leq 0.11$.
For $\mu=0.3, \sigma = 0.2$, and budget $K/M = 0.75$, we get $\tau_K \approx 0.19$, $V_{\cU^*} \approx 0.23$, $c \approx 2.13$, and $\gamma \leq 0.13$.
For $\mu=0.7, \sigma = 0.10$, and budget $K/M = 0.25$, we get $\tau_K \approx 0.77$, $V_{\cU^*} \approx 0.19$, $c \approx 3.98$, and $\gamma \leq 0.07$.

\subsection{General optimality condition}\label{sec:general-Q-condition}

Theorem~\ref{thm:main-regular} is not an if and only if statement, and so $\rho$-regularity is a sufficient but not necessary condition for Algorithm~\ref{alg:non-adaptive} to obtain a near-optimal allocation.
This means that, in some cases, we can obtain optimal allocations even if there is a lot of mass in the interval around the optimal threshold $\tau_K$ (e.g., if the units are extremely close to $\tau_K$).
We conclude this section by providing a necessary and sufficient condition on the CDF of the distribution of $\tau$ values (for the worst-case high probability instance of Algorithm~\ref{alg:non-adaptive}), thus providing a precise
instance-dependent upper bound on the sample complexity of near-optimal treatment allocation.
To do so, we require the following definition:
\medskip
\begin{definition}\label{def:alpha}
    Given budget $K \leq M$, $\alpha_K \in [0,1]$ corresponds to the smallest value such that the following equality holds: $\Pr_{\tau}\big[[\tau_K-2\rho, \tau_K-\alpha_K]\big] =\Pr_{\tau}\big[[\tau_K, \tau_K+2\rho]\big]$. 
\end{definition}

\medskip
\begin{restatable}{claim}{necessarysufficient}\label{claim:necessary-sufficient}
For any budget $K \leq M$, Algorithm~\ref{alg:non-adaptive} called with parameters $\epsilon, \delta, \rho$ returns a $(1-\epsilon, \delta)$-$\OPT$ allocation if and only if the distribution of treatment effect values satisfies:
    \begin{equation}\label{eq:Q-condition-pdf}\textstyle
        \int_{\tau_K}^{\tau_K+2\rho} t f(t) \, dt - \int_{\tau_K-2\rho}^{\tau_K-\alpha_K} t f(t) \, dt \, dt \leq \epsilon \int_{\tau_K}^1 t f(t)\,.
    \end{equation}
\end{restatable}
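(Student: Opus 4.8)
The plan is to reduce the statement to an exact worst-case analysis of the $K_0$ units that Algorithm~\ref{alg:non-adaptive} places inside the ``uncertain window'' $W:=[\tau_K-2\rho,\tau_K+2\rho]$, and then to rewrite the resulting worst-case value gap through the CDF. First I would condition on the probability-$(1-\delta)$ event that all estimates are $\rho$-accurate; on this event Lemma~\ref{lemma:LTK-rho-guarantees} together with Definition~\ref{def:A1-D-U0-K1-K0} says that $\cU_{\LTK}$ contains all of $A_1$, contains no unit with $\tau(u)<\tau_K-2\rho$, and hence picks exactly $K_0=K-K_1$ units from $W$; moreover $V_{\cU_{\LTK}}(A_1)=V_{\cU^*}(A_1)$ by Claim~\ref{claim:values-in-A1}. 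Therefore the value gap equals $V_{\cU^*}([0,1])-V_{\cU_{\LTK}}([0,1])=V_{\cU^*}(D)-V_{\cU^0_{\LTK}}([0,1])$, and by Claim~\ref{claim:CDF-values} we may substitute $V_{\cU^*}(D)=M\int_{\tau_K}^{\tau_K+2\rho}tf(t)\,dt$ and $V_{\cU^*}([0,1])=M\int_{\tau_K}^{1}tf(t)\,dt$. So ``$\cU_{\LTK}$ is $(1-\epsilon)$-optimal on the good event'' is equivalent to the assertion that $V_{\cU^0_{\LTK}}([0,1])\ge V_{\cU^*}(D)-\epsilon M\int_{\tau_K}^1 tf(t)\,dt$ for every $\rho$-accurate configuration, which already explains why the optimality test depends only on quantiles of $\Pr_\tau$.

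The core step is to pin down, over all $\rho$-accurate configurations, the smallest possible value of $V_{\cU^0_{\LTK}}([0,1])$. I would first characterize which size-$K_0$ sets $T\subseteq W$ can occur as $\cU^0_{\LTK}$: such a $T$ is realizable iff there is a cutoff $c$ with $\max_{u\in W\setminus T}\tau(u)-\rho<c\le\min_{t\in T}\tau(t)+\rho$, equivalently iff $\max_{u\in W\setminus T}\tau(u)-\min_{t\in T}\tau(t)<2\rho$. The forward direction takes $c=\htau_K$ and uses the $\rho$-accuracy guarantee; the converse sets $\htau(u)=\tau(u)+\rho$ on $T\cup A_1$ and $\htau(u)=\tau(u)-\rho$ elsewhere, and one checks via Lemma~\ref{lemma:LTK-rho-guarantees} that the units outside $W$ and all of $A_1$ fall on the correct side of $c$ and that the induced $\htau_K$ obeys $|\htau_K-\tau_K|\le\rho$. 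Minimizing $\sum_{t\in T}\tau(t)$ subject to this constraint, one shows that an optimal $T$ is of \emph{threshold form}: for some level $h$ it consists of all units of $W$ above $h$ together with the lowest units of $W$ at or below $h$ needed to reach $K_0$. Optimizing over $h$ makes the extremal configuration demote the cheapest admissible block covering $D$ and promote the cheapest admissible units from $[\tau_K-2\rho,\tau_K)$; by Claim~\ref{claim:computing-K0} and Definition~\ref{def:alpha}, that promoted block contains exactly $\Pr_\tau[[\tau_K,\tau_K+2\rho]]\cdot M=K_0$ units and is precisely the interval $[\tau_K-2\rho,\tau_K-\alpha_K]$. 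Thus the worst-case value of the within-window picks is $M\int_{\tau_K-2\rho}^{\tau_K-\alpha_K}tf(t)\,dt$, and the worst-case gap is $M\bigl(\int_{\tau_K}^{\tau_K+2\rho}tf(t)\,dt-\int_{\tau_K-2\rho}^{\tau_K-\alpha_K}tf(t)\,dt\bigr)$.

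Assembling the pieces, Algorithm~\ref{alg:non-adaptive} returns a $(1-\epsilon,\delta)$-$\OPT$ allocation on its worst-case high-probability run exactly when this gap is at most $\epsilon\,V_{\cU^*}([0,1])=\epsilon M\int_{\tau_K}^1 tf(t)\,dt$; dividing through by $M$ yields \eqref{eq:Q-condition-pdf}. The ``if'' direction is then the above chain of inequalities run in the safe direction, and it is the promised sharpening of Claim~\ref{claim:acc-expression-1}, where $V_{\cU^*}(D)$ was crudely replaced by $(\tau_K+2\rho)K_0$ and $V_{\cU^0_{\LTK}}([0,1])$ by $(\tau_K-2\rho)K_0$. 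The ``only if'' direction is the delicate one: it requires exhibiting the extremal $\rho$-accurate configuration realizing that gap and verifying it is indeed $\rho$-accurate. I expect the main obstacle to be exactly this extremal analysis, because the realizability condition couples demotions and promotions through a single cutoff $c$ --- one cannot simultaneously demote a $D$-unit near $\tau_K+2\rho$ and promote a unit near $\tau_K-2\rho$ --- so one must argue carefully that the threshold-form solution is optimal and that the optimal threshold picks out precisely the $\alpha_K$-interval of Definition~\ref{def:alpha} rather than a shorter sub-interval located higher up. The degenerate cases ($K_0=0$, fewer than $K_0$ units of $W$ lying below $\tau_K$, or atoms in $\Pr_\tau$ that force the ``smallest value'' clause in Definition~\ref{def:alpha}) then need separate bookkeeping.
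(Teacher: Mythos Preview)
Your overall architecture matches the paper's: split the value as $V(A_1)+V_{\cU^0_{\LTK}}$ versus $V(A_1)+V_{\cU^*}(D)$, rewrite the gap as $V_{\cU^*}(D)-V_{\cU^0_{\LTK}}$, and express the pieces as integrals via Claim~\ref{claim:CDF-values}. Where you go further than the paper is in trying to actually \emph{prove} the ``only if'' direction by characterizing which $K_0$-subsets $T\subseteq W$ can arise as $\cU^0_{\LTK}$ on the $\rho$-accurate event; the paper simply asserts that ``in the worst case Algorithm~\ref{alg:non-adaptive} selects the $K_0$ units with the lowest $\tau(u)$ values'' and that equality holds in $V_{\cU^0_{\LTK}}\ge\int_{\tau_K-2\rho}^{\tau_K-\alpha_K}t f(t)\,dt$, without checking realizability.

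The genuine gap is that your own realizability criterion contradicts the extremal $T$ you (and the paper) end up claiming. You show $T$ is realizable iff $\max_{u\in W\setminus T}\tau(u)-\min_{t\in T}\tau(t)<2\rho$. But for $T=[\tau_K-2\rho,\tau_K-\alpha_K]$ (the bottom $K_0$ units of $W$), the complement $W\setminus T$ contains units with $\tau(u)$ arbitrarily close to $\tau_K+2\rho$ while $\min_T\tau(t)$ sits at $\tau_K-2\rho$, so the gap can be $4\rho$, not $2\rho$. Concretely, under the uniform $\Pr_\tau$ (where $\alpha_K=0$), the set $T=[\tau_K-2\rho,\tau_K)$ is \emph{not} realizable: no $\rho$-accurate $\htau$ can make a unit near $\tau_K-2\rho$ beat a unit near $\tau_K+2\rho$. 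You flag exactly this coupling as ``the main obstacle,'' but then still assert that the optimal threshold picks out the $\alpha_K$-interval; that conclusion does not follow. What your setup actually establishes is only the ``if'' direction of \eqref{eq:Q-condition-pdf}: the integral $\int_{\tau_K-2\rho}^{\tau_K-\alpha_K}t f(t)\,dt$ is a valid \emph{lower bound} on $V_{\cU^0_{\LTK}}$ (since any realizable $T$ has all its units in $W$ and $|T|=K_0$), so the condition is sufficient. For necessity you would need the true worst-case realizable $T$, which under your $2\rho$-gap constraint is strictly better than the bottom-$K_0$ block whenever $W$ spans more than $2\rho$ on both sides of $\tau_K$; the paper's proof has the same unverified step.
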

\begin{proof}
By Claims~\ref{claim:ratio} and \ref{claim:values-in-A1}, we know that
\[
    \dfrac{V_{\cU_{\LTK}}\big([0,1]\big)}{V_{\cU^*}\big([0,1]\big)} = 
    \dfrac{V_{\cU_{\LTK}}(A_1) + V_{\cU_{\LTK}^0}\big([0,1]\big)}{V_{\cU^*}(A_1) + V_{\cU^*}(D)} 
    = \dfrac{V(A_1) + V_{\cU_{\LTK}^0}}{V(A_1) +  V_{\cU^*}(D)}.
\]
By re-arranging the expression in a manner similar to Claim~\ref{claim:acc-expression-1} (but without using lower and upper bounds), we get that:
\[
     \dfrac{V_{\cU_{\LTK}}\big([0,1]\big)}{V_{\cU^*}\big([0,1]\big)} 
     = \dfrac{V(A_1) + V_{\cU^*}(D) - V_{\cU^*}(D) + V_{\cU_{\LTK}^0}}{V(A_1) + V_{\cU^*}(D)} 
     = 1 - \dfrac{V_{\cU^*}(D) - V_{\cU_{\LTK}^0}}{V(A_1) + V_{\cU^*}(D)}.
\]
By definition of the value function and the PDF $f(t)$, it follows that
\[
    V(A_1) = \int_{\tau_K+2\rho}^1 t f(t) dt, \quad \quad
    V_{\cU^*}(D) = \int_{\tau_K}^{\tau_K+2\rho} t f(t) dt.
\]
As for $V_{\cU_{\LTK}^0}$, by Claim~\ref{lemma:LTK-rho-guarantees} it follows that all units $u \in \cU_{\LTK}$ satisfy $\tau(u) \geq \tau_K - 2\rho$.
Therefore, in the worst case, Algorithm~\ref{alg:non-adaptive} selects the $K_0$ units with the lowest $\tau(u)$ values in the interval $D=[\tau_K, \tau_K+2\rho]$.
This is precisely what the definition of $\alpha_K$ captures (Definition~\ref{def:alpha}), and so it follows that
\[
    V_{\cU_{\LTK}^0} \geq \int_{\tau_K-2\rho}^{\tau_K-\alpha_K} t f(t) dt,
\]
with the equality holding in the worst-case instance of Algorithm~\ref{alg:non-adaptive} (even with high probability).
Hence, we get a $(1-\epsilon, \delta)$-$\OPT$ allocation whenever
 \[
    \dfrac{\int_{\tau_K}^{\tau_K+2\rho} t f(t) dt - \int_{\tau_K-2\rho}^{\tau_K-\alpha_K} t f(t) dt}{\int_{\tau_K}^1 t f(t) dt} \leq \epsilon,
\]
as we wanted to show.
\end{proof}

We can use our low-accuracy estimates $\htau_K$ to 
check whether Equation~\ref{eq:Q-condition-pdf} is satisfied (note that this is only a one-sided guarantee; failing to satisfy it does not imply that the allocation achieved by Algorithm~\ref{alg:non-adaptive} is not optimal, given that we use various upper bounds).

To do so, we want to compute an upper bound of the expression
\begin{equation}\label{eq:relaxed-exp}
    \dfrac{V_{\cU^*}(D) - V_{\cU_{\LTK}^0}}{V_{\cU^*}([0,1])} 
\end{equation}
using our $\htau$ estimates.
To lower bound the denominator, the number of units that have $\tau(u) \in [\tau_K, 1]$ is lower-bounded by $|u : \htau(u) \in [\tau_K+\rho, 1]|$.
For each of the $\htau(u)$ values that we find in this interval, by $\rho$-accuracy we know that $\tau(u) \geq \htau(u)-\rho$.
Hence,
\[
    V_{\cU^*}([0,1]) \geq \sum_{u : \htau(u) \geq \tau_K+\rho} \htau(u)-\rho.
\]
Because we do not have access to the true $\tau_K$, but we know that $|\htau_K-\tau_K| \leq \rho$ by Claim~\ref{lemma:LTK-rho-guarantees}, we can actually compute the lower bound as
\begin{equation}\label{eq:V[0,1]}
    V_{\cU^*}([0,1]) \geq \sum_{u : \htau(u) \geq \htau_K+2\rho} \htau(u)-\rho.
\end{equation}
Next, we upper bound $V(D)$.
The number of units that have $\tau(u) \in [\tau_K, \tau_K+2\rho]$ is upper-bounded by $|u : \htau(u) \in [\tau_K-\rho, \tau_K+3\rho]|$.
For each of the $\htau(u)$ values that we find in this interval, by $\rho$-accuracy we know that $\tau(u) \leq \htau(u) + \rho$.
Hence,
\[
    V_{\cU^*}(D) \leq \sum_{u: \htau(u) \in [\tau_K-\rho, \tau_K+3\rho]} \htau(u)+\rho.
\]
Again because we do not have access to the true $\tau_K$ but we do have the guarantee that $|\htau_K-\tau_K| \leq \rho$, it follows that
\begin{equation}\label{eq:V(D)}
    V_{\cU^*}(D) \leq \sum_{u: \htau(u) \in [\htau_K-2\rho, \htau_K+4\rho]} \htau(u)+\rho.
\end{equation}
Lastly, as for $V_{\cU_{\LTK}^0}$, we are interested in lower bounding the number of units that belong to $\cU^0_{\LTK}$.
To do so, first we lower bound the quantity $K_0$, which corresponds to upper bounding the quantity $K_1$.
We know that $K_1 \leq |u : \htau \in [\tau_K+\rho, 1]| \leq |u : \htau \in [\htau_K, 1]|$.
Hence, $K_0 \geq K - |u : \htau \in [\htau_K, 1]|$.
Recall that all units in $\cU_{\LTK}$ satisfy $\tau(u) \geq \tau_K-2\rho$.
To lower bound the value of $V_{\cU_{\LTK}^0}$, we add up the first $K - |u : \htau \in [\htau_K, 1]|$ units that have $\htau(u)$ value above $\htau_K-3\rho$, in increasing order.

Putting the three terms together, that is, our bounds for $V_{\cU^*}([0,1])$ (Equation~\ref{eq:V[0,1]}), $V_{\cU^*}(D)$ (Equation~\ref{eq:V(D)}), and $V_{\cU^0_{\LTK}}$ (the sum of the first $K - |u : \htau \in [\htau_K, 1]|$ units that have $\htau(u)$ value above $\htau_K-3\rho$) computed with our low-approximation $\htau$ values, we have obtained an upper bound to the expression in Equation~\ref{eq:relaxed-exp} and thus to Equation~\ref{eq:Q-condition-pdf} in Claim~\ref{eq:Q-condition-pdf}.

Hence, if our upper bound is smaller than $\epsilon$, we have a guarantee that Algorithm~\ref{alg:non-adaptive} has returned a $(1-\epsilon, \delta)$-$\OPT$ approximation.

It is clear that, definitionally, we get a $(1-\epsilon)$-$\OPT$ allocation whenever we satisfy the inequality $\frac{V_{\cU_{\LTK}}\big([0,1]\big)}{V_{\cU^*}\big([0,1]\big)} \geq 1-\epsilon$. 
The importance of Lemma~\ref{claim:necessary-sufficient} is that it shows how we can obtain a certificate indicating whether we have obtained an optimal allocation using only our $\htau_K$ estimates.

We give an example of how to apply Claim~\ref{claim:necessary-sufficient} for a specific distribution by repeating the accuracy and sample complexity calculation for the case where the CATE $\tau$ values are uniformly distributed, this time with a tight accuracy analysis rather than using lower and upper bounds for $V_{A_2}(I)$ and $V_{A_2}(\ALLOC^*)$, respectively.

\textbf{Tight accuracy calculation for the uniform distribution.}
    Let the treatment effect values be distributed as $\textrm{Unif}(0,1)$.
    Then, the PDF corresponds to the function $f(t)=1$.
    We compute each of the three terms in the RHS of Equation~\ref{eq:Q-condition-pdf}.
    Given that the uniform distribution is symmetric around $\tau_K$ for any value of $\tau_K$, it follows that $\alpha_K=0$ for all $K$.
    \[
        V_{\cU^*}([0,1]) = \sum_{u : \tau(u) \in [\tau_K, 1]} \tau(u) 
        = \int_{\tau_K}^{\tau_K+2\rho} t \, dt = 
        \Big[ \frac{t^2}{2} \Big]_{\tau_K+2\rho}^{1} = \frac{1}{2} - \frac{(\tau_K)^2}{2} - 2\tau_K \rho - 2\rho^2.
    \]
    \[
        V_{\cU^*}(D) = \sum_{u : \tau(u) \in [\tau_K, \tau_K+2\rho]} \tau(u)
        = \int_{\tau_K}^{\tau_K+2\rho} t \, dt 
        = \Big[ \frac{t^2}{2} \Big]_{\tau_K}^{\tau_K+2\rho}
        = 2\tau_K \rho + 2 \rho^2.
    \]
    \[
        V_{\cU_{\LTK}^0} = \sum_{u: \tau(u) \in [\tau_K, \tau_K-2\rho, \tau_K-\alpha_K]} \tau(u) =
        \int_{\tau_K-2\rho}^{\tau_K-\alpha_K} t \, dt
        =\Big[ \frac{t^2}{2} \Big]_{\tau_K-2\rho}^{\tau_K}
        = 2\tau_K\rho - 2\rho^2,
    \]
    where the equality holds in the worst-case high probability output of Algorithm~\ref{alg:non-adaptive}; otherwise it is a lower bound.
    That is, the computation for $V_{\cU_{\LTK}^0}$ corresponds to the worst-case output of Algorithm~\ref{alg:non-adaptive}, as is the definition of $\alpha_K$.
    Then, $V_{\cU^*}(D) - V_{\cU_{\LTK}^0} = 4\rho^2$ and
    \[
        \dfrac{V_{\cU_{\LTK}}\big([0,1]\big)}{V_{\cU^*}\big([0,1]\big)} =
        1 - \dfrac{V_{\cU^*}(D) - V_{\cU_{\LTK}^0}}{V(A_1) + V_{\cU^*}(D)}
        = 1 - \dfrac{V_{\cU^*}(D) - V_{\cU_{\LTK}^0}}{V_{\cU^*}([0,1])} \geq
        1- \dfrac{4 \rho^2}{(1-(\tau_K)^2)/2}.
    \]
    Hence, by setting $\rho = \sqrt{\frac{\epsilon}{4} \left( \frac{1}{2} - \frac{(\tau_K)^2}{2} \right)}$, we obtain a $(1-\epsilon, \delta)$-$\OPT$ approximation of $\ALLOC^*$ with $O(N \ln(2N/\delta)/\epsilon)$ many samples.

\section{Flexible Budget}\label{sec:flexible-budget}

When the original $\tau_K$ threshold lies in an interval with too much probability mass, we can alternatively
slide the budget $K$ up or down slightly into a new budget $K'$ as to find a new threshold $\tau_{K'}$ that lies in an interval where $\rho$-regularity holds.
In our experiments in Section~\ref{sec:experiments}, we test finding the closest $K'$ to $K$ that obtains a $(1-\epsilon)$-optimal allocation with only $O(M/\epsilon)$ samples, in the few cases where the original $K$ does not give an optimal allocation.

\paragraph{Very dense distributions.}
Consider the following ``2 spikes'' case, where $M/2$ units have $\tau(u)$ value equal to $1/2-2\epsilon$, and the other $M/2$ units have $\tau(u)$ value equal to $1/2+2\epsilon$. 
This example is connected to the usual lower-bound that appears in the sample complexity of bandit algorithms \cite{bubeck2013multiple, kaufmann2016complexity}.
We show that for this 2 spikes case, no budget is able to obtain a $(1-\epsilon)$-optimal allocation.

For any budget $K \leq M/2$ we have that
\[
    V_{\cU^*}([0,1]) = K \cdot \Big( \dfrac{1}{2} + 2\epsilon \Big) = \dfrac{K}{2} + \epsilon K.
\]
As for the low-accuracy estimation algorithm, given that our $\htau$ estimates are computed to $\rho = \gamma \sqrt{\epsilon}$ accuracy, the value of a random allocation, 
which we denote by $V_{\rand}$,
upper-bounds the value realized by Algorithm~\ref{alg:non-adaptive} in the worst case.

On expectation, we have that 
\[
    V_{\rand} = \dfrac{K}{2} \cdot \Big(\dfrac{1}{2}-2\epsilon \Big) + \dfrac{K}{2} \cdot \Big(\dfrac{1}{2} + 2\epsilon \Big) = \dfrac{K}{2}. 
\]
The worst-case value is naturally given by $K (1/2-2\epsilon) = K/2 - 2\epsilon K$.
By definition, a $(1-\epsilon, \delta)$-$\OPT$ allocation $f$ must realize value 
\begin{align*}
    V_f([0,1]) \geq (1-\epsilon) V_{\cU^*}([0,1])
     &= (1-\epsilon) \cdot K \cdot \Big( \dfrac{1}{2} + 2\epsilon \Big) \\
     &= \dfrac{K}{2} + 2\epsilon K - K \cdot \Big( \dfrac{\epsilon}{2} + 2\epsilon^2 \Big) \\
     &= \dfrac{K}{2} + K \cdot \Big( \dfrac{3\epsilon}{2} + 2\epsilon^2 \Big)
\end{align*}
with probability at least $1-\delta$.
Hence, for any budget $K$, the expected value of a random allocation never yields a $(1-\epsilon, \delta)$-optimal allocation, and is always short of at least $K(3\epsilon/2 + 2\epsilon^2)$ much value.
Thus, the worst-case of Algorithm~\ref{alg:non-adaptive} fails to satisfy the $(1-\epsilon, \delta)$-optimality requirement as well.

Nonetheless, we are quite close to achieving the value required by an optimal allocation.
This presents an interesting duality for the problem of allocation: either the distribution of treatment effect values is $\rho$-regular, in which case Algorithm~\ref{alg:non-adaptive} returns an optimal allocation, or the distribution is very dense around the threshold, in which case Algorithm~\ref{alg:non-adaptive} (and even a random allocation) still does quite well.
In order to realize the required value, we can either promise a $(1-\kappa \epsilon)$-$\OPT$ allocation, or we can increase the budget in order to include more units.
This type of overspending/resource augmentation is different from the modification of the budget discussed at the beginning of this section (which cannot work in this 2 spikes case, as we just showed).
Here, we add more units to our allocation $\cU_{\LTK}$, but we compete against the \emph{original} budget $K$.

For the former, we want to find the smallest $\kappa$ such that a random allocation realizes a $(1-\kappa \epsilon)$-$\OPT$ allocation.
This has value:
\[
    (1-\kappa \epsilon) V_{\cU^*}([0,1]) 
    = (1-\kappa \epsilon) \Big(\dfrac{1}{2} + 2\epsilon \Big) \cdot K
    = \dfrac{K}{2} + \dfrac{4\epsilon K - \kappa \epsilon K}{2} - 2 \kappa \epsilon^2 K.
\]
Given that the expected value of $V_{\rand}([0,1])$ is $K/2$,
we need $\kappa$ to satisfy
\[
    \dfrac{4 \epsilon K - \kappa \epsilon K}{2} = 0 \implies \kappa \geq 4.
\]
Indeed, a $(1-4\epsilon)$-$\OPT$ allocation requires realizing value at least
\[
    (1-4\epsilon) V_{\cU^*}([0,1]) 
    = (1-4\epsilon) \Big( \dfrac{1}{2} + 2\epsilon \Big) \cdot K
    = \Big( \dfrac{1}{2}-2\epsilon + 2\epsilon - 8\epsilon^2 \Big) \cdot K = \dfrac{K}{2} - 8\epsilon^2 K,
\]
and so $V_{\rand}([0,1]) \geq (1-4\epsilon) V_{\cU^*}([0,1])$, as desired.
Performing a similar calculation with the worst-case scenario of Algorithm~\ref{alg:non-adaptive}, we get $\kappa \geq 8$.

\paragraph{Overspending.}
Alternatively, we can take the dual approach and ask what is the smallest budget $K' > K$ that allows a random allocation to get at least a $(1-\epsilon)$-$\OPT$ allocation, where optimality is measured with respect to the original budget $K$. 
This is a form of resource augmentation.
For any budget $K' > K$, we showed that $V_{\rand}([0,1]) = K'/2$. 
We want a $K'$ that satisfies
\[
    \dfrac{K'}{2} \geq (1-\epsilon) V_{\cU^*}([0,1])
    = (1-\epsilon) \cdot K \cdot \Big(\dfrac{1}{2} + 2\epsilon \Big).
\]
This means that $K'$ must satisfy
\[
    K' \geq K + 4\epsilon K - \epsilon K - 4\epsilon^2 K
    = K + 3\epsilon K - 4\epsilon^2 K.
\]
Hence, we need $K' \geq K + 3\epsilon K = K(1+3\epsilon)$.

We can apply this overspending idea to a distribution that is not $\rho$-regular around the initial budget threshold $\tau_K$. 
That is, suppose that the initial budget $K$ is such that the interval $[\tau_K-2\rho, \tau_K+2\rho]$ contains a lot of probability mass.
What is the minimum number $S$ of extra units that we need to spend on such that we achieve a $(1-\epsilon)$-$\OPT$ allocation, where optimality is measured with respect to the original budget $K$?
The extra units that we add to $\cU_{\LTK}$ are naturally added in descending order of $\htau$ value starting at $\htau_K$, and we denote their added value by $V_{\cU^S_{\textrm{extra}}}$.
That is, the final threshold is $\tau_{K+S}$.
Because we assume that the interval $[\tau_K-2\rho, \tau_K+2\rho]$ is highly dense, we lower bound the value $\htau(u)$ of all of the extra units that we add by $\tau_K-2\rho$.
Then:
\begin{align*}
    \dfrac{V_{\cU_{\LTK}}^{K+S}([0,1])}{V_{\cU}^K([0,1])}
    &= \dfrac{V(A_1) + V_{\cU_{\LTK}^0} + V_{\cU^S_{\textrm{extra}}}}{V(A_1) +  V_{\cU^*}(D)} \\
    &= 1 - \dfrac{V_{\cU^*}(D) - V_{\cU^0_{\LTK}} - V_{\cU^S_{\textrm{extra}}}}{V(A_1) + V_{\cU^*}(D)} \\
    &\geq \dfrac{(\tau_K+2\rho)K_0 - (\tau_K-2\rho)K_0 - V_{\cU^S_{\textrm{extra}}}}{V(A_1) + (\tau_K+2\rho)K_0} \\
    &\geq \dfrac{(\tau_K+2\rho)K_0 - (\tau_K-2\rho)K_0 - (\tau_K-2\rho)S}{V(A_1) + (\tau_K+2\rho)K_0}  \\
    &= \dfrac{4\rho K_0 - (\tau_K-2\rho)S}{V(A_1) + (\tau_K+2\rho)K_0}.
\end{align*}
By setting 
\[
    \dfrac{4\rho K_0 - (\tau_K-2\rho)S}{V(A_1) + (\tau_K+2\rho)K_0} \leq \epsilon,
\]
we get that the minimum number of units that we need to add to our initial budget $K$ is
\[
    S \geq \dfrac{\Big(4\rho - \epsilon(\tau_K+2\rho) \Big) K_0 - \epsilon V(A_1)}{\tau_K - 2\rho}. 
\]
Asymptotically, this means that $S$ is of the order of $\sqrt{\epsilon} K_0$.
As we will see in Section~\ref{sec:experiments}, in practice we only require adding one extra unit in order to realize the value of the optimal allocation.

\paragraph{Choosing an appropriate budget: guidelines for policymakers.}
In the cases where the policymaker has some flexibility over the budget, it is a sensible approach to refine the budget after running our algorithm.
Choosing a budget $K$ such that $\tau_K$ lies in an interval (of width $\approx 2\rho$) of uniform-like probability mass is desirable for several reasons.
First, it ensures that an optimal allocation is achieved, given that $\rho$-regularity is satisfied in the $D$-interval.
Second, it ensures that the allocation is less arbitrary, as it minimizes the number of units that are very close to the threshold that determines whether or not treatment is given. 
As shown in Section~\ref{sec:general-Q-condition}, the low-accuracy estimates $\htau$ give an approximation of the CDF curve as well, and so we can upper bound the probability mass contained in an interval $[a, b]$ by counting the number of units $u$ that have $\htau(u)$ value in the interval $[a-\rho, b+\rho]$.
If the interval is too dense, the policymaker can slide the budget up or down in order to find a threshold $\htau_K$ that lies in a low-mass interval. 

While the 2 spikes case demonstrates that we cannot always hope to find such an interval, this only occurs in the cases where the CDF satisfies a strong ``anti-Lipschitz'' condition, where changing $K$ for a much larger or much smaller $K'$ induces very little difference between $\tau_K$ and $\tau_{K'}$.
However, it is rare to find such densely-packed distributions in real-world RCTs, and it would be highly arbitrary to decide on an allocation in the case of these distributions.
As we will see in Section~\ref{sec:experiments} in our experiments for real-world data, we can always find a $K'$ very close to $K$ (even if we are only willing to underspend) in the few cases where Algorithm~\ref{alg:non-adaptive} does not yield an optimal allocation.
Moreover, as we have show in this section, we can obtain optimal allocations for distributions with high mass around $\tau_K$ by overspending and adding some extra units to $\cU_{\LTK}$.
We also test this method in our experiments in Section~\ref{sec:experiments}.

Summarizing, if a budget does not work, the policymaker can: (1) detect so correctly by using the approximation $\hat{F}$ of the CDF (this is a one-sided test), (2) slide the budget from $K$ to $K'$ as to find a smooth interval, or (3) if no interval is $\rho$-regular, then the distribution $\Pr_{\tau}$ is very dense in the entire range, in which case Algorithm~\ref{alg:non-adaptive} should obtain a $(1-\kappa \epsilon)$-$\OPT$ allocation for a small value of $\kappa$, or, equivalently, we can add a few extra units to $\cU_{\LTK}$ to realize the optimal value.

We test these strategies experimentally and report the results in Section~\ref{sec:app:further-results}.
We find that, in practice, we only need to slide the budget by at most 2 units in order to find a $K$ that obtains a $(1-\epsilon)$-allocation with $O(M/\epsilon)$ many samples.
As for the overspending/resource augmentation strategy, we find that in all cases adding one extra unit (namely, the unit corresponding to $\htau_{K+1}$) suffices to realize a $(1-\epsilon)$-optimal allocation.

\section{Experiments with real-world RCTs}\label{sec:experiments}

We test our allocation algorithm extensively on five different real-world RCT datasets across different domains, this set of five datasets was recently studied in \citep{sharma2024comparing}.
We describe these in detail, as well as our concrete experimental set-up, in Section~\ref{sec:app:experiments} and Section~\ref{sec:app:further-results} in the appendix.

(1) \emph{Tennessee’s Student Teacher Achievement Ratio (STAR) project dataset} \citep{pate1997star, achilles2008tennessee_star}.
Domain: education. Treatment: small class ($\approx$13–17 students). Control: regular class ($\approx$22–25 students). 
Outcome: kindergarten cumulative test score.

(2) \emph{Targeting the Ultra Poor (TUP) in India dataset} \citep{banerjee2021long}.
Domain: economic development. Treatment: one-time capital grant to ultra-poor households. Control: no grant. Outcome: total household expenditure.

(3) \emph{National Supported Work (NSW) demonstration dataset} \citep{lalonde1986evaluating,dehejia1999causal,dehejia2002propensity}.
Domain: labor economics. Treatment: assignment to the NSW training program. Control: not assigned. Outcome: annual earnings in 1978.

(4) \emph{Acupuncture dataset} \citep{vickers2004acupuncture}.
Domain: healthcare. Treatment: acupuncture for chronic headaches. Control: usual care. Outcome: headache-severity score one year after randomization. (Treatment lowers the outcome; sign flipped in figures.)

(5) \emph{Postoperative Pain dataset} \citep{mchardy1999postoperative}. Domain: Healthcare. Treatment: licorice gargle prior to endotracheal intubation. Control: placebo/standard care. Outcome: throat-pain score 4 hours post-surgery. (Treatment lowers the outcome; sign flipped in figures.)

For each dataset, we partition the individuals into $M$ groups in different ways using the covariates (e.g., based on student performance, baseline poverty, age, etc.), ensuring that each group is somewhat balanced in terms of control vs treatment numbers
Each different grouping method yields a partition $\cU$ of $M$ units. For each unit $u$, we define $\tau(u)$ to be the average treatment effect within the unit across the entire dataset. Then, we run our semi-synthetic experiments as follows: each time we sample from the population, we choose a unit $u$ uniformly at random, and then receive a sample i.i.d. from the distribution $\Bern(\tau(u))$.
After taking some fixed number of samples,
this process produces an estimate $\htau(u)$ for each $u$. This semi-synthetic setting allows us to know the ground-truth of the value realized by the optimal allocation. We repeat the whole sampling procedure $50$ times, in order to obtain confidence bounds.
For the plots in Figure~\ref{fig:main-summary-RCTs}, for various fixed sample sizes, we compute the corresponding estimates $\htau$, and then the realized allocation value for various budgets $K \leq M$. 

For the plots in Figure~\ref{fig:flexible-budget-summary}, we repeat the sampling simulation for different fixed values of $\epsilon$ (and in turn repeating each simulation 50 times).
For each, we call Algorithm~\ref{alg:non-adaptive} and obtain a set of units $\cU_{\LTK}$ for each budget $K$.
For \emph{all} budgets $K \in [1, M]$, we check whether or not $\cU_{\LTK}$ is a $(1-\epsilon)$-optimal allocation, and compute the percentage of budgets (out of a total of $M$ possible budgets) that fail to be $(1-\epsilon)$-optimal (top row in Figure~\ref{fig:main-summary-RCTs}).
For the budgets that fail, we compute the closest budget $K'$ that yields a $(1-\epsilon)$-optimal allocation (blue line, bottom row in Figure~\ref{fig:main-summary-RCTs}), and the closest budget $K'$ satisfying $K' \leq K$; i.e., if we only consider underspending (orange line, bottom row in Figure~\ref{fig:main-summary-RCTs}). 
We see that, on average, the failure rate always says below $5\%$, and that $|K'-K|$ is typically 1, even if we are only allowed to underspend ($K' \leq K$).
We also try the resource augmentation approach of adding more units, and in all failed budget cases we always realize a $(1-\epsilon)$-optimal value by adding just one extra unit. 

We provide all the details of the experiments and datasets along with the rest of figures (which include all of our grouping methods for each dataset, thus replicating Figures \ref{fig:main-summary-RCTs} and \ref{fig:flexible-budget-summary} extensively across all datasets and groupings) in the appendix in Sections \ref{sec:app:experiments} and \ref{sec:app:further-results}.

\newlength{\introfigH}
\setlength{\introfigH}{1.85cm}

\begin{figure}
    \centering
    \begin{subfigure}{0.18\textwidth}
        \centering
        \includegraphics[height=\introfigH,keepaspectratio]{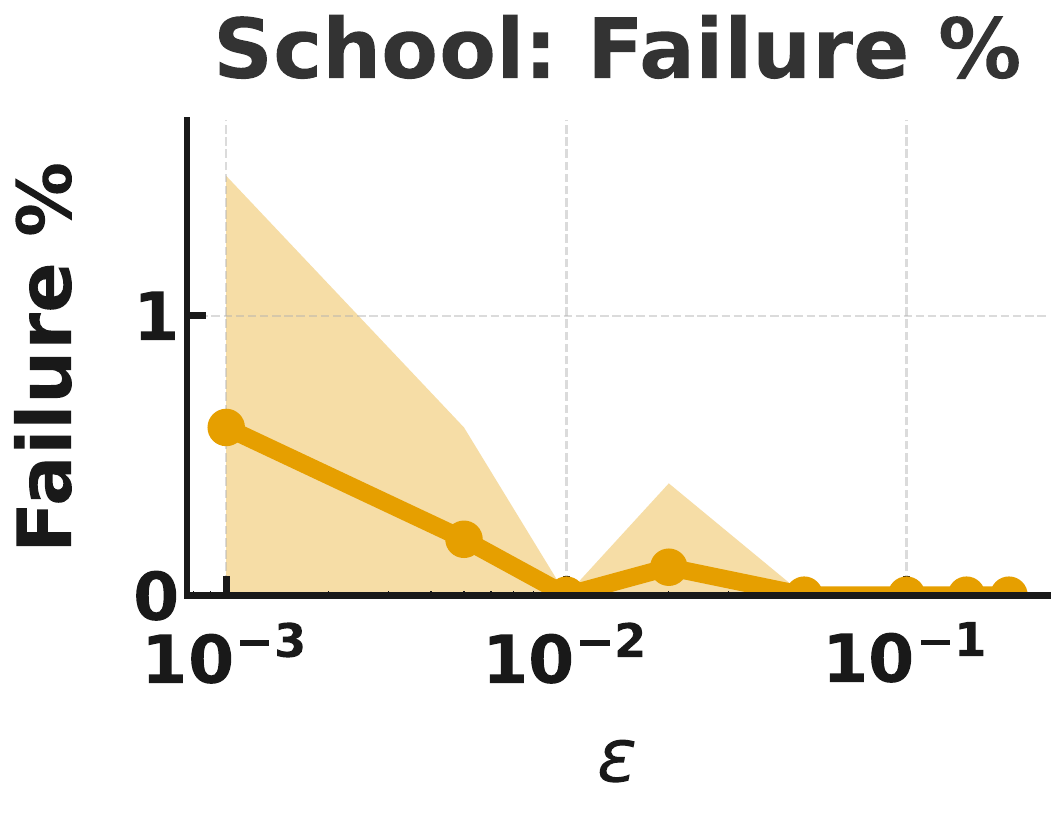}
        \caption{STAR, schools.}
        \label{fig:sub1}
    \end{subfigure}\hfill
    \begin{subfigure}{0.18\textwidth}
        \centering
        \includegraphics[height=\introfigH,keepaspectratio]{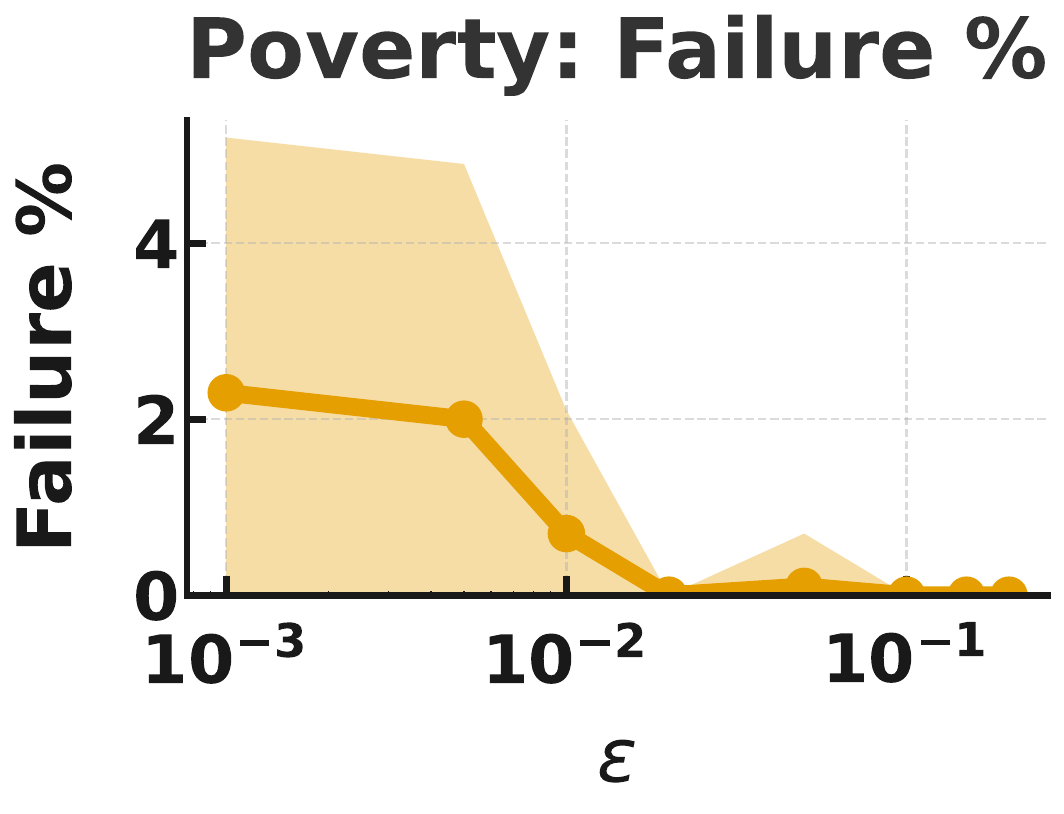}
        \caption{TUP, poverty.}
        \label{fig:sub2}
    \end{subfigure}\hfill
    \begin{subfigure}{0.18\textwidth}
        \centering
        \includegraphics[height=\introfigH,keepaspectratio]{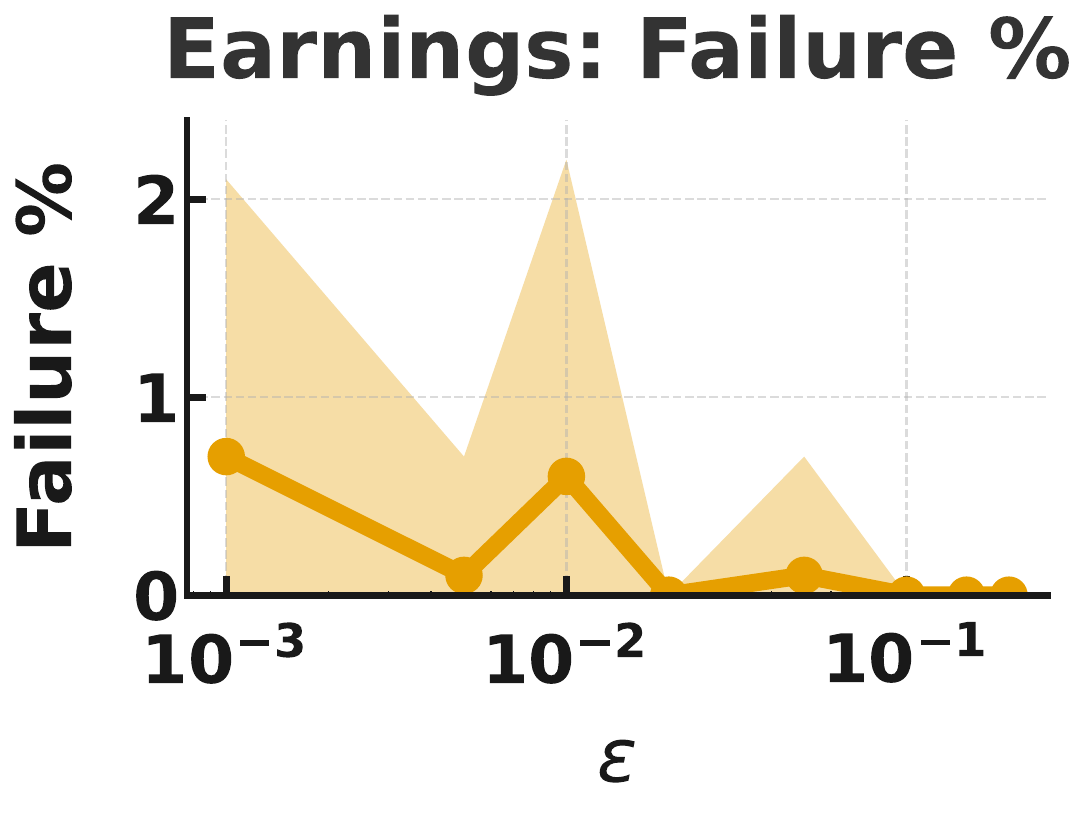}
        \caption{NSW, earnings.}
        \label{fig:sub3}
    \end{subfigure}\hfill
    \begin{subfigure}{0.18\textwidth}
        \centering
        \includegraphics[height=\introfigH,keepaspectratio]{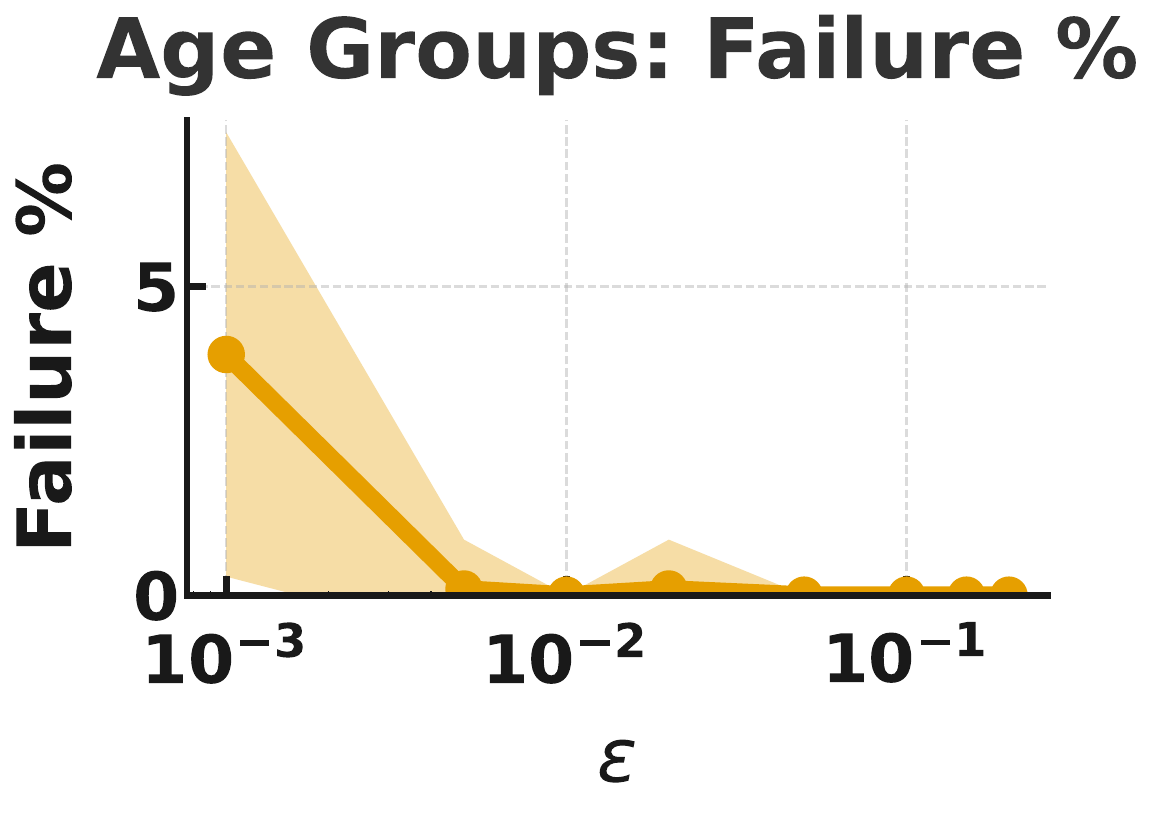}
        \caption{Acup., age.}
        \label{fig:sub4}
    \end{subfigure}\hfill
    \begin{subfigure}{0.18\textwidth}
        \centering
        \includegraphics[height=\introfigH,keepaspectratio]{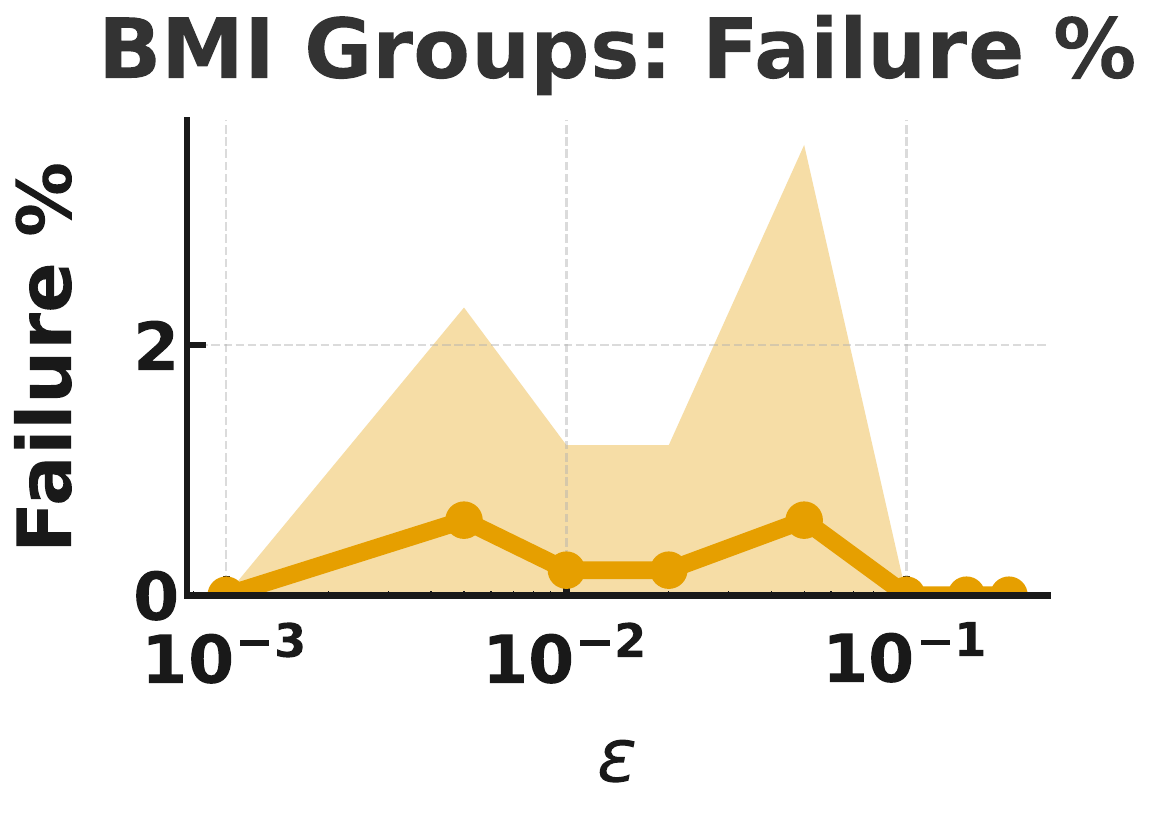}
        \caption{Post-op, BMI.}
        \label{fig:sub5}
    \end{subfigure}

    \vspace{4pt}

    \begin{subfigure}{0.18\textwidth}
        \centering
        \includegraphics[height=\introfigH,keepaspectratio]{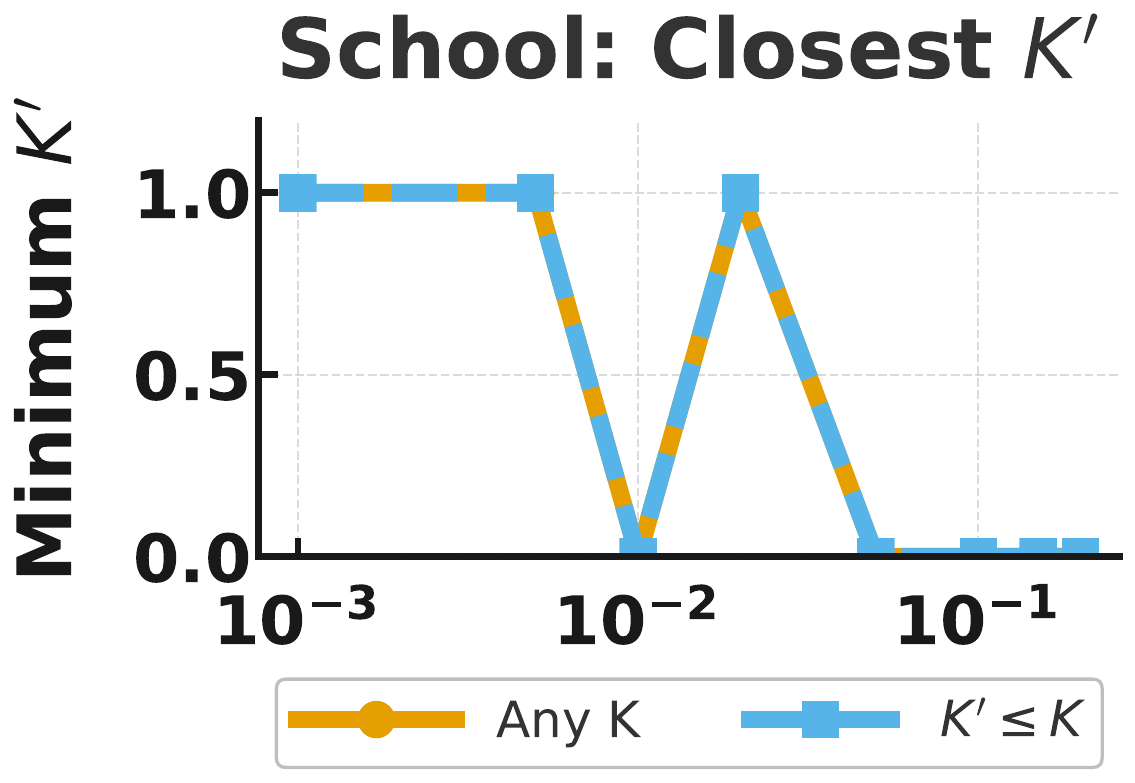}
        \caption{STAR, schools.}
        \label{fig:sub6}
    \end{subfigure}\hfill
    \begin{subfigure}{0.18\textwidth}
        \centering
        \includegraphics[height=\introfigH,keepaspectratio]{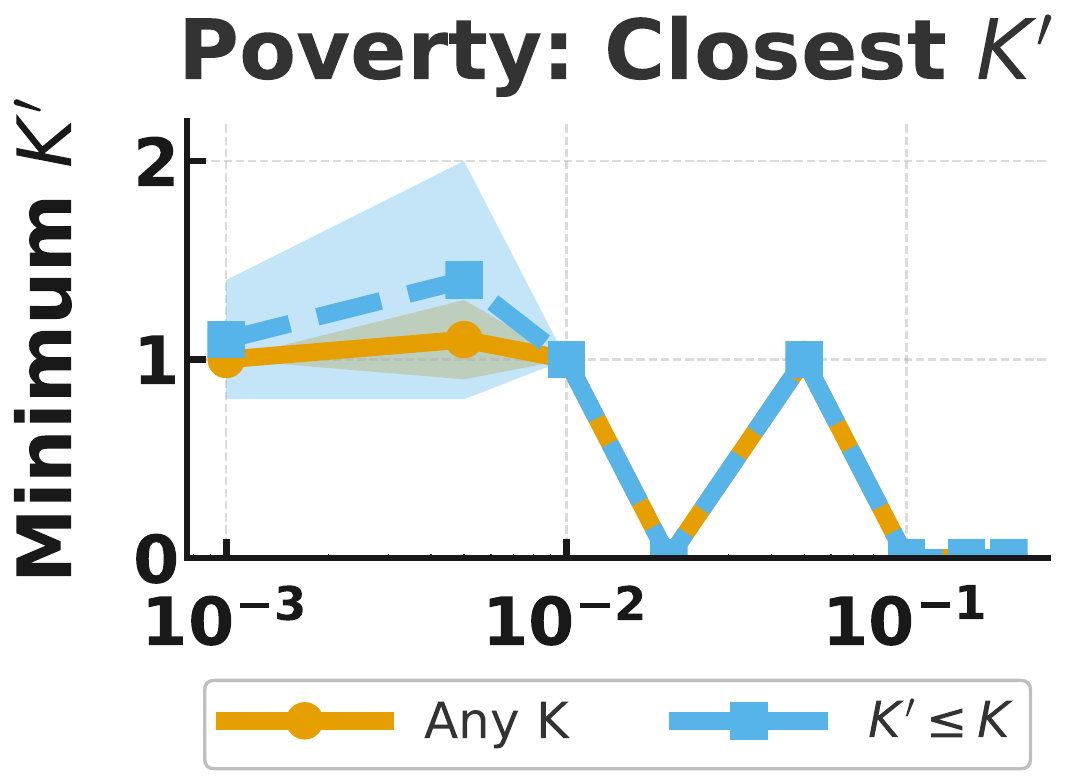}
        \caption{TUP, poverty.}
        \label{fig:sub7}
    \end{subfigure}\hfill
    \begin{subfigure}{0.18\textwidth}
        \centering
        \includegraphics[height=\introfigH,keepaspectratio]{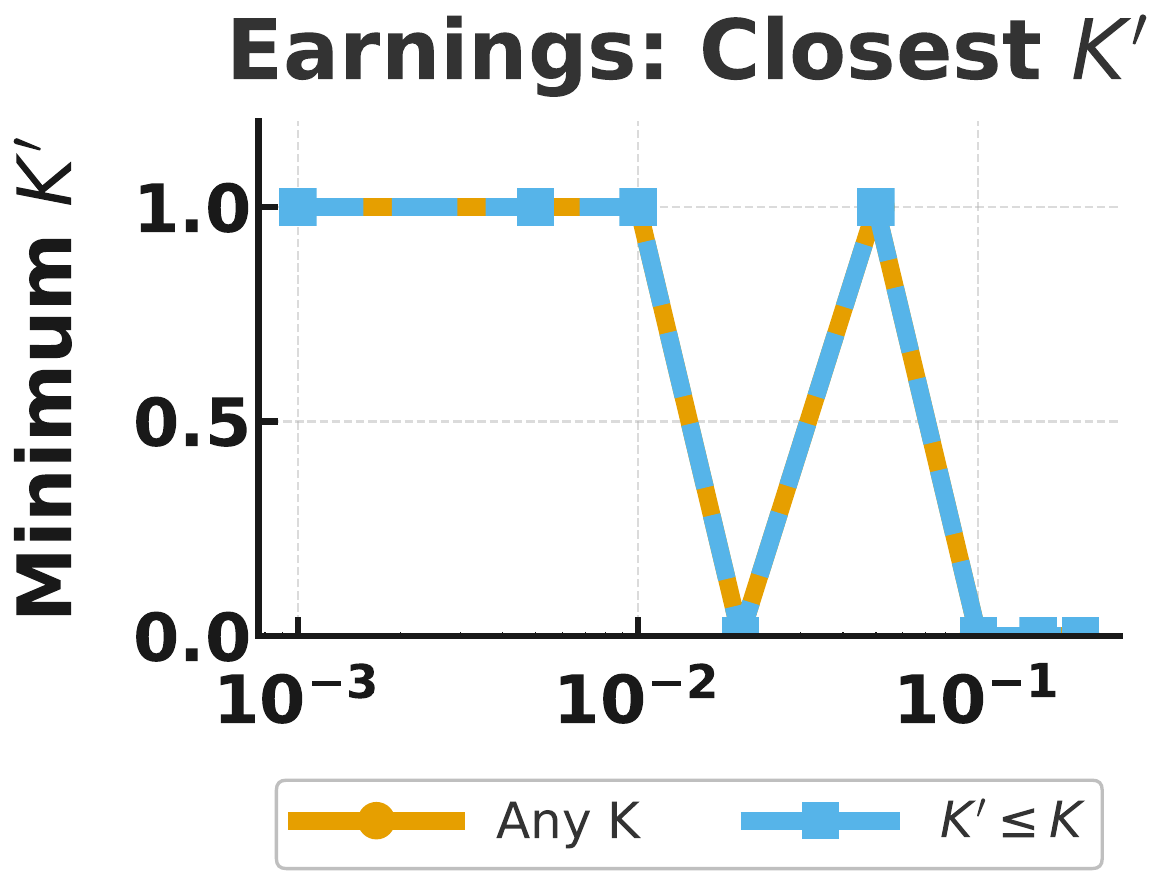}
        \caption{NSW, earnings.}
        \label{fig:sub8}
    \end{subfigure}\hfill
    \begin{subfigure}{0.18\textwidth}
        \centering
        \includegraphics[height=\introfigH,keepaspectratio]{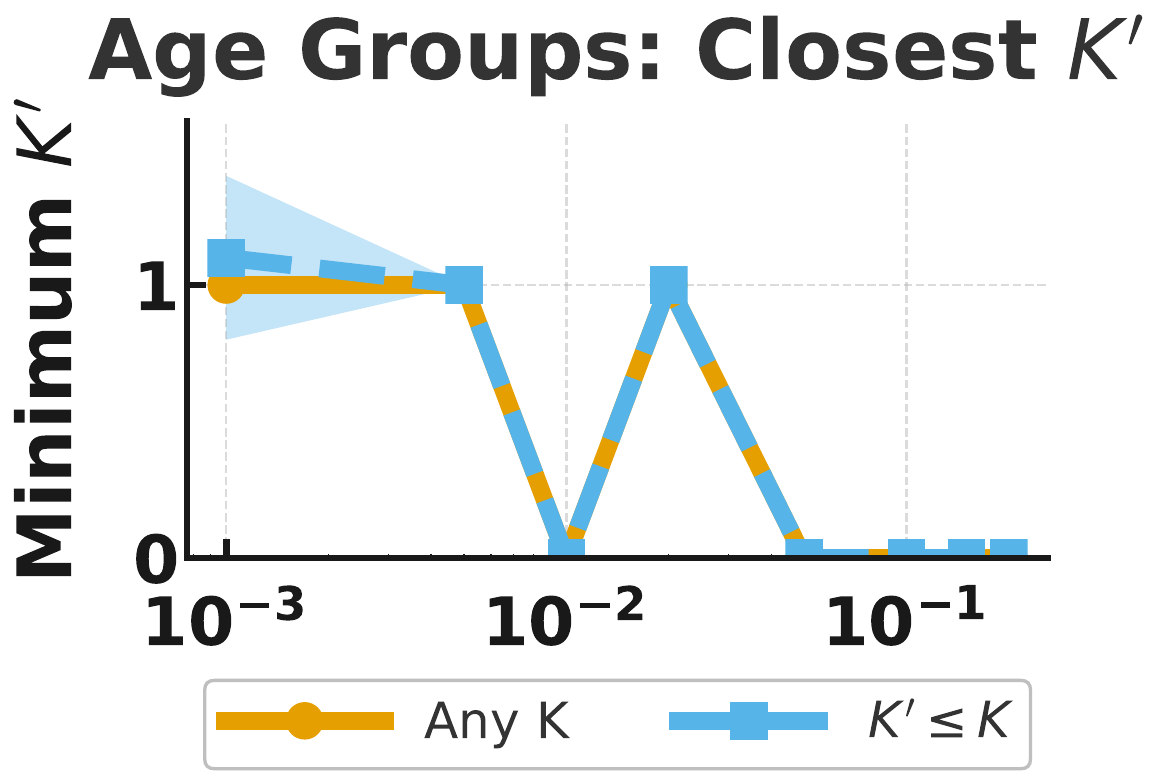}
        \caption{Acup., age.}
        \label{fig:sub9}
    \end{subfigure}\hfill
    \begin{subfigure}{0.18\textwidth}
        \centering
        \includegraphics[height=\introfigH,keepaspectratio]{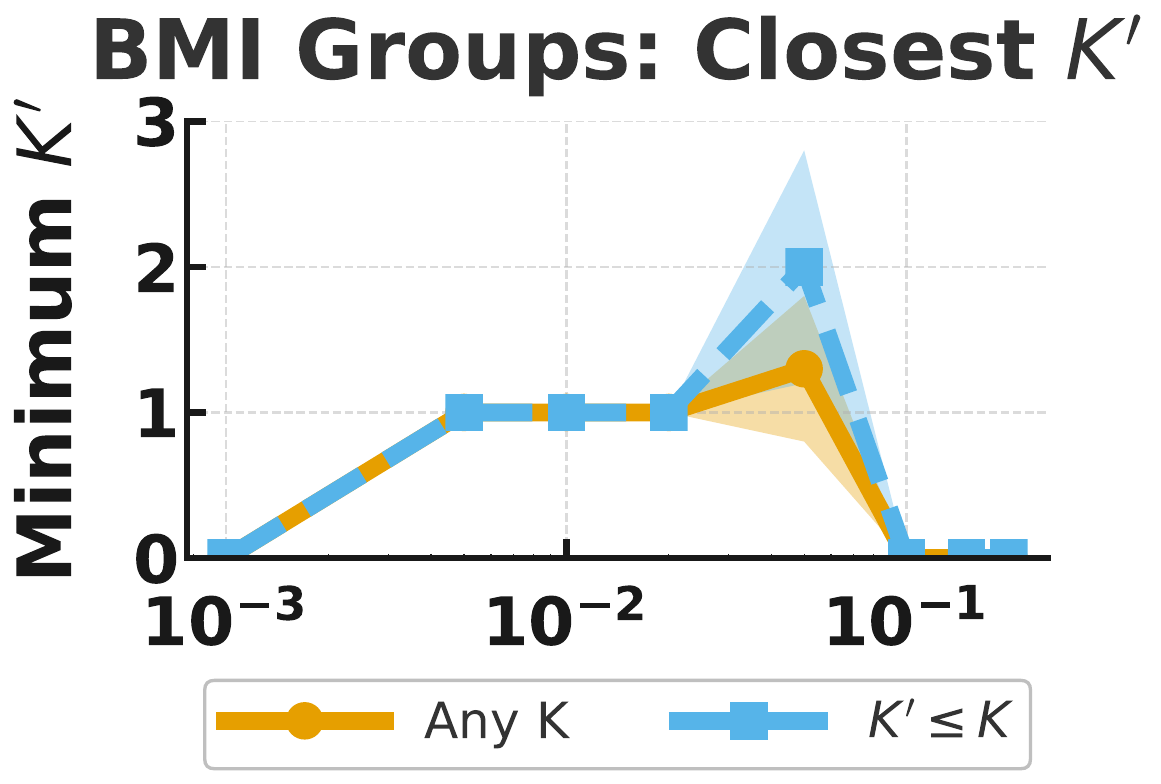}
        \caption{Post-op, BMI.}
        \label{fig:sub10}
    \end{subfigure}

    \caption{Top row: failure rate vs $\epsilon$. Bottom row: closest $K'$ value vs $\epsilon$. Each row represents one dataset (STAR, TUP, NSW, Acup., Post-op) with one of the unit grouping methods that we study.}
    \label{fig:flexible-budget-summary}
\end{figure}

\section*{Acknowledgments}

We thank Rediet Abebe, Florian Dorner, Kevin Jamieson, Juan Carlos Perdomo, Ali Shirali, and Bryan Wilder for helpful discussions, pointers, and feedback.

\newpage

\bibliographystyle{alpha}
\bibliography{refs}

\appendix

\newpage

\section{Experimental details}\label{sec:app:experiments}

\subsection{RCT data}

We test our algorithm on real-world RCT data in order to test its efficacy and study the $\rho$-regularity smoothness condition in practice. 
For robustness, we test our algorithm on data from five different real-world RCTs, which have been conducted across multiple domains.
These datasets have recently been analyzed in the work of \cite{sharma2024comparing}, and tested for different policies, which is why we choose these five datasets.
We summarize each RCT dataset, following the descriptions given in \cite{sharma2024comparing} and in the original papers.

(1) \emph{Tennessee’s Student Teacher Achievement Ratio (STAR) project dataset} \citep{pate1997star, achilles2008tennessee_star}.
Domain: education.
This is a 4-year RCT conducted by the Tennessee State Department of Education to examine class size effects on student
performance.
The study involved $11{,}601$ students across 79
schools.
Students and teachers were randomly assigned to class types beginning in kindergarten and followed through grade 3.
The original study considered three possible treatments: small class ($\approx$13–17 students), regular class ($\approx$22–25 students), or regular class with a full-time aide.
Following the analysis performed in \cite{sharma2024comparing}, in order to keep the binary treatment/control study, treatment corresponds to having been assigned to a small class, whereas control corresponds to having been assigned to a regular class.
The outcome is a cumulative test-score measure in kindergarten (which should be positively affected by the treatment),
which is a composite outcome from four standardized test scores: reading, math, listening, and word skills.

We filter out observations with missing test scores, treatment assignments, or school IDs, leaving a total of $3{,}712$ students.
Out of these, $1{,}989$ are control and $1{,}723$ are treated.

(2) \emph{Targeting the Ultra Poor (TUP) in India dataset} \citep{banerjee2021long}.
Domain: economic development. 
This RCT studies the
long-term effects of providing large one-time capital grants to low-income households.
The treatment given was a one-time capital grant to ultra-poor households, and the control was to give no grant.
The study measured how family income and overall consumption evolved over a period of 7 years.
The outcome variable that we use in our experiments is the total 
household expenditure (which should be positively affected by the treatment).
This is the difference between the endline consumption and the baseline consumption.

After filtering the dataset, we have 864 households, with 410 control and 
454 treated.

(3) \emph{National Supported Work (NSW) demonstration dataset} \citep{lalonde1986evaluating,dehejia1999causal,dehejia2002propensity}.
Domain: labor economics.
This study was carried out to analyze the impact of the National Supported Work Demonstration, which was a job training program, on the income of the participants in the year 1978.
This job-training program targeting disadvantaged workers.
The dataset also collected the participants' income in the year 1975 as a baseline.
The treatment was assignment to the NSW training program, and the control was no assignment.
The outcome variable that we use is the annual earnings in 1978 (which should be positively affected by the treatment).

After filtering the dataset, we have 722 individuals, out of which 425 are control and 297 are treatment.

(4) \emph{Acupuncture dataset} \citep{vickers2004acupuncture}.
Domain: healthcare. 
This RCT evaluates the effect of acupuncture therapy on patients with chronic headache, with assessments at randomization, 3 months, and 1 year.
The treatment is acupuncture for chronic headaches, and control is the usual care.
The outcome variable that we use is the headache-severity score one year after randomization. 
Headache severity is measured using a discrete 0-5 Likert scale.
Treatment is thus expected to lower the outcome, which is why we flip the sign in our figures for coherence with the other RCTs.

After filtering the dataset, we have 301 patients, with 140 control and 161 treatment. 

(5) \emph{Postoperative Pain dataset} \citep{mchardy1999postoperative}. Domain: Healthcare. 
This RCT investigates whether gargling a licorice solution prior to endotracheal intubation reduces postoperative sore throat, a common side effect of thoracic surgery using double-lumen tubes.
Hence, treatment corresponds to receiving licorice gargle prior to endotracheal intubation, and control corresponds to placebo/standard care.
The outcome variable we use is throat-pain score 4 hours post-surgery, measured on a discrete 0-7 Likert scale.
As in the case of the acupuncture dataset, treatment is expected to lower the outcome, and so we also flip the sign in our figures for coherence with the other RCTs.

After filtering the dataset, we have 233 patients, with 116 control and 117 treatment.

\subsection{Semi-synthetic experiments}

\paragraph{Grouping methods.}
For each of the five RCT datasets, we create a set of units $\cU$ by partitioning the individuals into groups from their covariates.
For each dataset, we describe the various grouping methods that we test in our algorithm.
For any grouping strategy, we require at least 3 treated individuals and 3 control individuals, and that the treatment rate is between $15\%$ and $85 \%$, to ensure balance within each unit.
We only include the grouping methods that succeeded in creating a set of units satisfying these requirements, and discarded several other grouping methods based on the dataset variables.

(1) \emph{STAR dataset}. 
We use the following grouping methods:
\begin{itemize}
    \item \emph{School groups.}
    We cluster the students by school ID.
    \item \emph{Performance groups.}
    We group the students based on baseline performance percentiles (test score averages).
    \item \emph{Causal forest groups.}
    We use random forest regressors to predict treatment effects (one for control and one for treatment), and then cluster the students based on predicted CATE values and covariates using $K$-means clustering.
    We cluster aiming for different number of groups (typically 30 and 50).
    We note that we use the name ``causal forest'' in the figures for short, but we are referring to CATE-based grouping obtained via random-forest and clustering; not through a causal forest estimator.
    \item \emph{Propensity score groups.}
    We perform a stratification based on propensity scores using cross-validated logistic regression, dividing the propensity score distribution into equal-sized quantile-based strata (thus grouping students with similar propensity score).
\end{itemize}

(2) \emph{TUP dataset.}
We use the following grouping methods:
\begin{itemize} 
    \item \emph{Baseline poverty groups.}
    We group the households using baseline consumption as a poverty proxy, clustering them using $K$-means.
    \item \emph{Demographic groups.}
    We search for various demographic variables (such as ``gender'', ``education'', ``age'') and take their intersections.
    \item \emph{Assets}.
    We identify asset-related variables (such as ``land'' or ``livestock'') and group them using $K$-means.
    \item \emph{Causal forest groups.}
    We use random forest regressors to predict treatment effects, and then cluster the households based on the predicted CATE values and covariates using $K$-means.
    \item \emph{Propensity score groups.}
    We perform a stratification based on propensity scores using cross-validated logistic regression.
\end{itemize}

(3) \emph{NSW dataset}.
We use the following grouping methods:
\begin{itemize}
    \item \emph{Baseline earnings groups.}
    We divide the employed individuals into percentile-based earnings brackets.
    \item \emph{Age groups.}
    We create a percentile-based stratification of the age distribution.
    \item \emph{Causal forest groups.}
    We use random forest regressors to predict treatment effects, and then cluster the individuals based on the predicted CATE values and covariates using $K$-means.
    \item \emph{Propensity score groups.}
    We perform a stratification based on propensity scores using cross-validated logistic regression.
\end{itemize}

(4) \emph{Acupuncture dataset}.
We use the following grouping methods.
\begin{itemize}
    \item \emph{Age groups.}
    We create percentile-based age brackets.
    \item \emph{Age-Chronicity interaction groups.}
    We compute a score for each patient combining their age and their chronicity variables, and then group them into equally-sized groups (in order of the scores).
    \item \emph{Baseline headache.}
    We group the patients by their initial headache score.
    \item \emph{Chronicity groups.}
    We group the patients by the length of chronic headache history.
    \item \emph{Covariate forest groups.}
    We perform a $K$-means clustering based on the covariate profiles.
    \item \emph{Multidimensional composite groups.}
    We use the variables age, chronicity, baseline headache score, and combine them into an average score.
\end{itemize}

(5) \emph{Postoperative dataset.}
We use the following grouping methods.
\begin{itemize}
    \item \emph{BMI groups}.
    We divide the patients into 30 equal-sized BMI brackets.
    \item \emph{Age groups}.
    We perform a percentile-based stratification on the age variable.
    \item \emph{Demographics}.
    We combine different preoperative patient characteristics.
    \item \emph{Covariate forest}.
    We perform $K$-means clustering on the patient characteristics.
\end{itemize}

Note that for the acupuncture and postoperative dataset we have less data than for the other datasets, which is why the group sizes that we obtain are smaller and thus prone to higher error.

\paragraph{Obtaining the CATE values.}
Each grouping method for each dataset yields a partition of the dataset into a set $\cU$ of $M$ units.
For each unit, we compute $\tau(u)$ as follows: we compute the treated outcome mean (i.e., the mean of the outcome variable among the treated members of group $u$) and the control outcome mean (i.e., the mean of the outcome variable among the control members of group $u$).
Subtracting the control outcome mean from the treatment outcome mean yields the treatment effect value for each unit.
Lastly, we normalize all of the treatment effect values across the $M$ units into $[0,1]$, yielding the final $\tau(u)$ values.

Once we have the $\tau(u)$ values, which allow us to compute the value of the optimal allocation, we simulate the sampling as follows.
For every drawn sample of the population, we select one of the units $u$ uniformly at random and draw a sample from the distribution $\mathrm{Bern}(\tau(u))$.
That is, each group represents an ``arm'' with a Bernoulli reward equal to its normalized CATE.
In all experiments, we set the failure probability $\delta$ to 0.05.

\subsection{First approach: generating Figure~\ref{fig:main-summary-RCTs}}

For this approach, we do not select a specific value of $\epsilon$.
Instead, we try different sample sizes, from $N=100$ to $N=20{,}000$ across different budget constraints.
Specifically, we try budgets $K$ that are $10\%, 20\%, 30\%, 50\%, 70\%$, and $90\%$ of $M$.
For each sample size and each budget, we sample from the population for that many samples and compute the estimates $\htau(u)$ for each of the $M$ units $u$.
Then, we select the $K$ units with the highest $\htau$ estimates. 
Using the ground-truth values $\tau(u)$, we compute the value of this allocation.
We then plot the realized value of the allocation ($y$-axis) for each sample size ($x$-axis), having one plot for each fixed budget and grouping method.
We repeat this sampling process 50 times, obtaining an average value of the allocation for each sample size and confidence bounds.

In each figure, we also plot the value that we would expect from the worst-case bound of $O(M/\epsilon^2)$ and our bound of $O(M/\epsilon)$.
Specifically, from Lemma~\ref{lemma:fullCATE-to-ALLOC}, we have that the expected value of the allocation using the worst-case bound is
\[
    \Bigg(1 - \sqrt{\dfrac{M \ln(2M/\delta)}{N}} \Bigg) \cdot V_{\cU^*}.
\]
From Theorem~\ref{thm:main-regular}, we obtain that the expected value of the allocation using our theoretic bound for $\rho$-regular distributions is
\[
    \Bigg(1 - \dfrac{M \ln(2M/\delta)}{N} \Bigg) \cdot V_{\cU^*}.
\]
In Section~\ref{sec:app:further-results}, we display these plots for each of the grouping methods for each of the datasets.
For each, we plot the relationship between the normalized allocation value and the sample size for four choices of a budget.

\subsection{Second approach: generating Figure~\ref{fig:flexible-budget-summary}}

For Figure~\ref{fig:flexible-budget-summary} and its further displays in Section~\ref{sec:app:further-results} (the last two plots for each subsection), we carry out our analysis differently.

Here, we do choose a value of $\epsilon$, which ranges from $0.001$ to $0.2$.
Based on our computation of the value of $\gamma$ in Section~\ref{sec:examples-gamma-dists} for various distributions, we choose to call Algorithm~\ref{alg:non-adaptive} with $\gamma =0.5$.
This bound is best decided using an informed guess of how the $\tau$ values are distributed; if $\gamma$ is too high then Algorithm~\ref{alg:non-adaptive} will have a higher error rate.
For each value of $\epsilon$, we compute the corresponding number of samples in total that we use to construct our estimates $\htau$ using our theoretical bound of $N = M\ln(2M/\delta)/\epsilon$. 
Note that each grouping method yields a fixed number of units $M$.
Sampling that many units, we obtain the coarse estimates $\htau(u)$, and for each possible budget $K \in \{1, \ldots, M\}$ we compute the value realized by our allocation (where we compute the value using the ground-truth estimates $\tau$, but they are selected using the low-accuracy estimates $\htau$).
This yields the value $V_{\cU_{\LTK}}$ for each budget $K$.

Separately, for each possible budget $K \in \{1, \ldots, M\}$, we compute the value $V_{\cU^*}$ of the optimal allocation.
Then, we check whether or not $V_{\cU_{\LTK}}/V_{\cU^*} \geq 1-\epsilon$, for each value of $K$ and for the fixed value of $\epsilon$.
We compute the rate of failed budgets (i.e., the proportion out of $M$) that do not achieve a $(1-\epsilon)$-optimal allocation.
We repeat this process 50 times for each combination of $K$ and $\epsilon$, giving us a robust average failure rate with confidence bounds.
We do this for every grouping method and for every RCT dataset.
The first row in Figure~\ref{fig:flexible-budget-summary} plots some of these failure rates.
The second-to-last plot in each subsection of Section~\ref{sec:app:further-results} shows these plots for all of the grouping strategies for each RCT dataset.
In all cases, we see that the average failure rate is below $5\%$.
This is higher for the acupuncture and postoperative datasets, since we work with fewer groups in all grouping strategies (because the dataset is smaller).
Moreover, we remark that in our experiments we test \emph{all} budgets, and we do not discard budgets that are very small (e.g., $K=1$).
As per the $V_{\cU^*}$ term in Theorem~\ref{thm:general-dist-main}, very small values of $K$ increase the failure rate.
The looser choice of $\gamma$ also increases the failure rate.
For all these reasons, the failure rate that we obtain is higher than what it would be if we discarded small budgets and decreased $\gamma$ further, or if we had a higher number $M$ of units.

Second, we test our flexible budget strategies discussed in Section~\ref{sec:flexible-budget}.
In the cases where a budget $K$ fails to obtain a $(1-\epsilon)$-optimal allocation (recall that we are always using the fewer $O(M/\epsilon)$ many samples to compute the allocation), we find the closest value of $K'$ such that we obtain a $(1-\epsilon)$-optimal allocation, and then we report $|K-K'|$. 
We also compute such a $|K-K'|$ for $K' \leq K$; i.e., if we are only willing to underspend. 
We repeat this process 50 times for each $\epsilon$, grouping method, and RCT.
Some of the results are reported in the second row of Figure~\ref{fig:flexible-budget-summary}; the comprehensive set of plots is in Section~\ref{sec:app:further-results}.
The last plot of each subsection of Section~\ref{sec:app:further-results} shows these plots for all of the grouping strategies for each RCT dataset.
In all cases, including the subcases where we only underspend, we see that, on average, $|K-K'| \leq 2$.

We also test our overspending idea of adding extra units to our allocation. 
We test it on each of the failed budget settings.
In all cases, without exception, we find that adding one extra unit (i.e., the unit that corresponds to $\htau_{K+1}$), realizes an optimal allocation.

The code for this paper can be found at: \url{https://github.com/silviacasac/alloc-vs-cate}.

\emph{Use of LLMs.}
We have used LLMs to aid in exploring the five RCTs and their various features, in order to
find sensible grouping strategies for each RCT. After implementing the code for the base simulation
of our algorithm, we used LLMs to help adapt the algorithm to the specifics of each dataset. Lastly, we have used LLMs to double-check the integrals computed for the Beta and Gaussian distributions.

\newpage

\section{Further empirical results}\label{sec:app:further-results}

\subsection{STAR dataset}

\begin{figure}[H]
    \centering
    \includegraphics[width=1\textwidth]{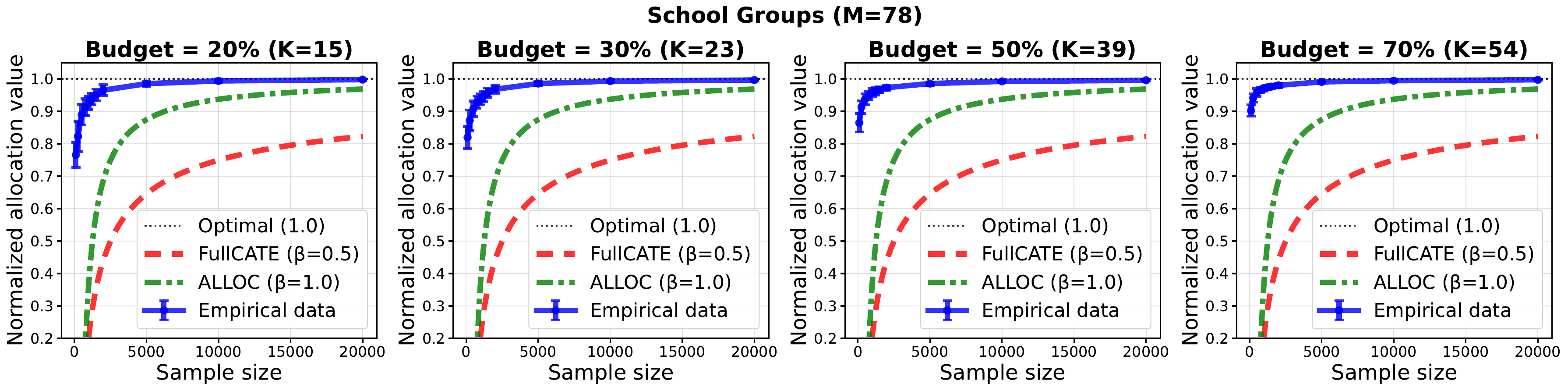}
    \caption{STAR dataset, school groups.}
\end{figure}

\begin{figure}[H]
    \centering
    \includegraphics[width=1\textwidth]{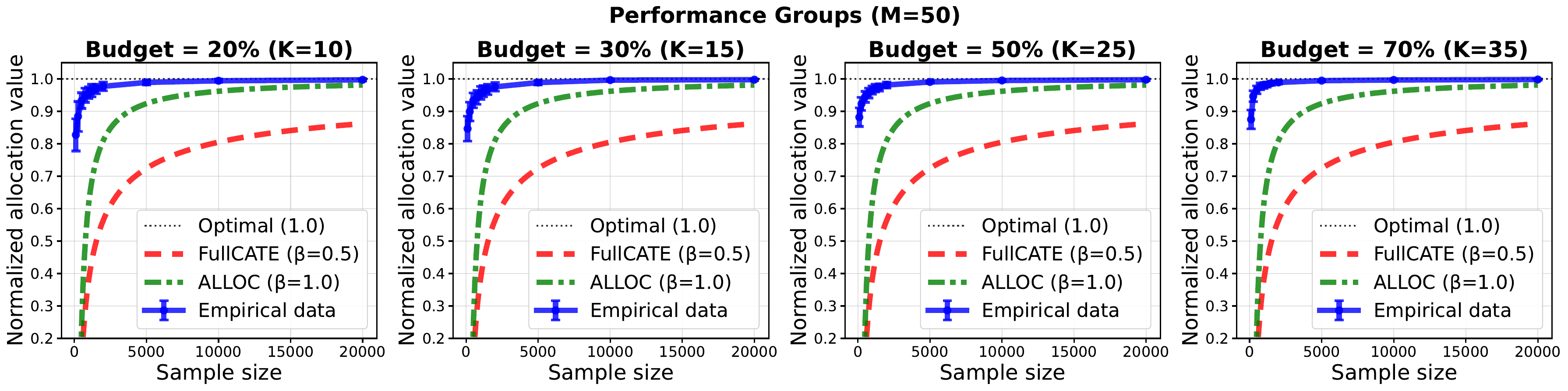}
    \caption{STAR dataset, performance groups.}
\end{figure}

\begin{figure}[H]
    \centering
    \includegraphics[width=1\textwidth]{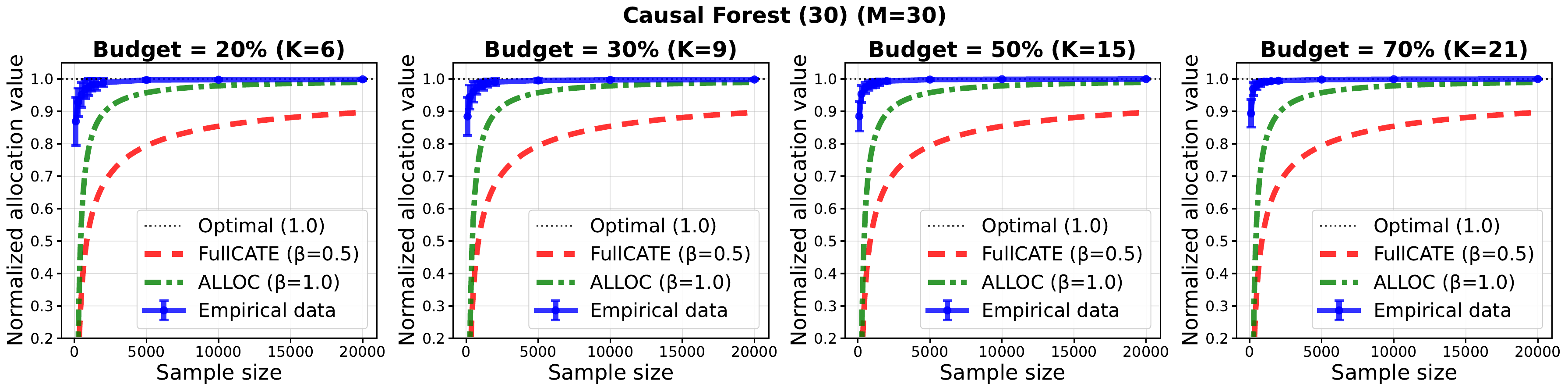}
    \caption{STAR dataset, causal forest 30.}
\end{figure}

\begin{figure}[H]
    \centering
    \includegraphics[width=1\textwidth]{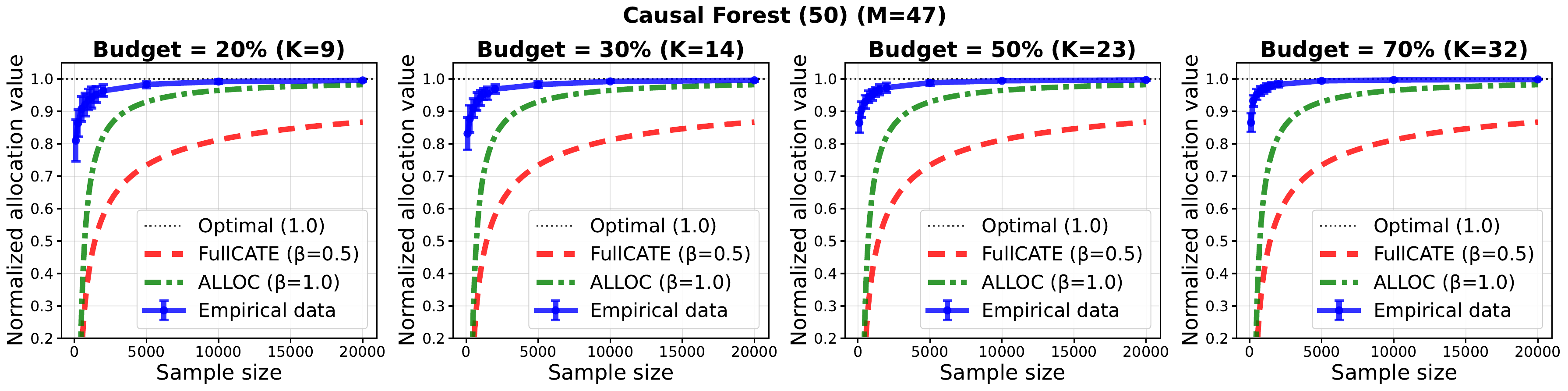}
    \caption{STAR dataset, causal forest 50.}
\end{figure}

\begin{figure}[H]
    \centering
    \includegraphics[width=1\textwidth]{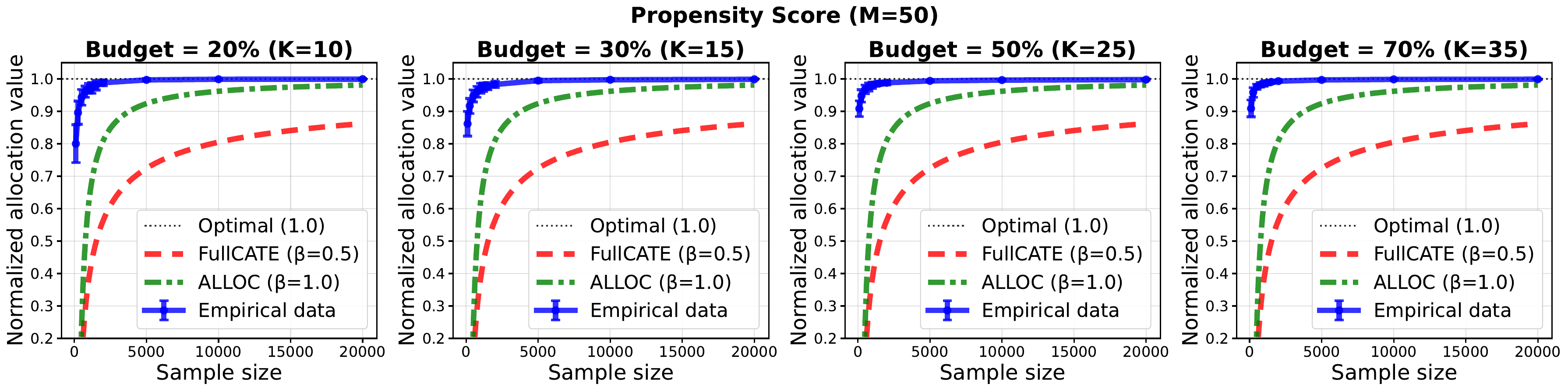}
    \caption{STAR dataset, propensity score.}
\end{figure}

\begin{figure}[htbp]
    \centering
    \begin{subfigure}{0.31\textwidth}
        \centering
        \includegraphics[height=3.2cm]{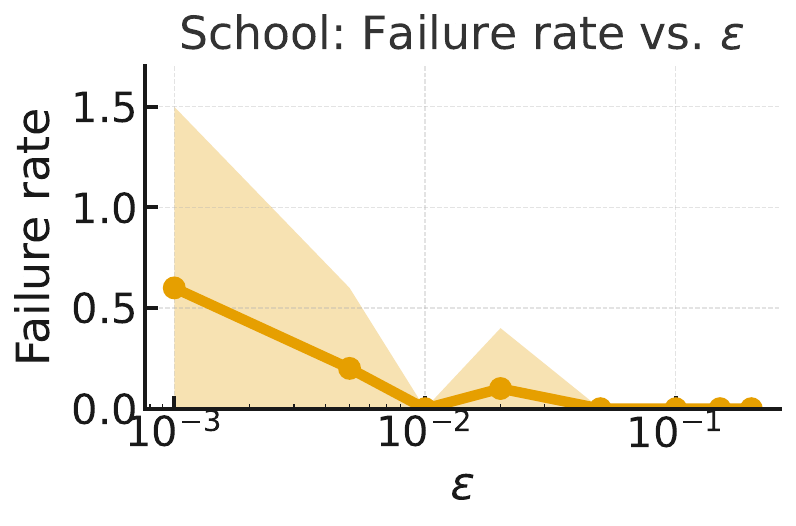}
        \caption{STAR, schools.}
        \label{fig:sub1}
    \end{subfigure}\hfill
    \begin{subfigure}{0.31\textwidth}
        \centering
        \includegraphics[height=3.2cm]{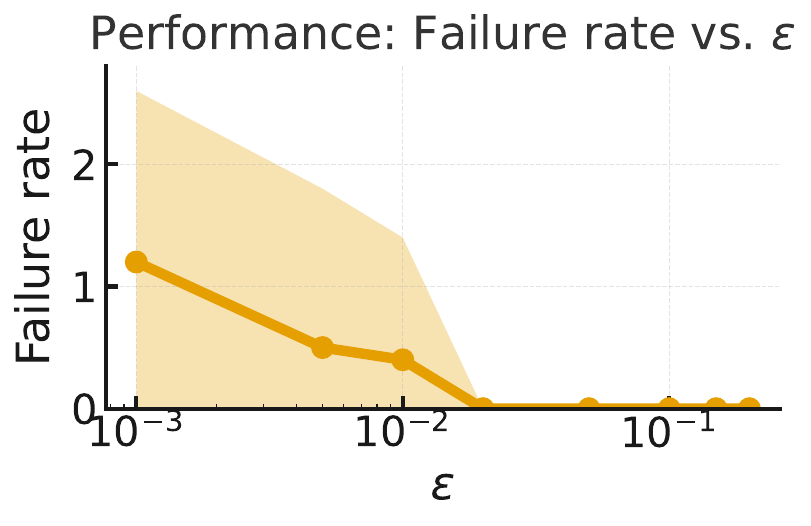}
        \caption{STAR, performance.}
        \label{fig:sub2}
    \end{subfigure}\hfill
    \begin{subfigure}{0.31\textwidth}
        \centering
        \includegraphics[height=3.2cm]{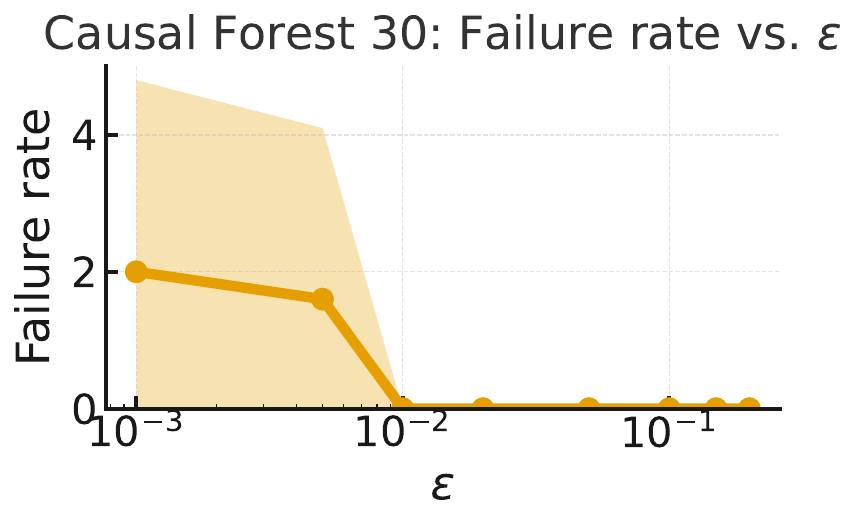}
        \caption{STAR, causal forest 30.}
        \label{fig:sub3}
    \end{subfigure}

    \medskip

    \begin{minipage}{0.64\textwidth}
        \centering
        \begin{subfigure}{0.48\linewidth}
            \centering
            \includegraphics[height=3.2cm]{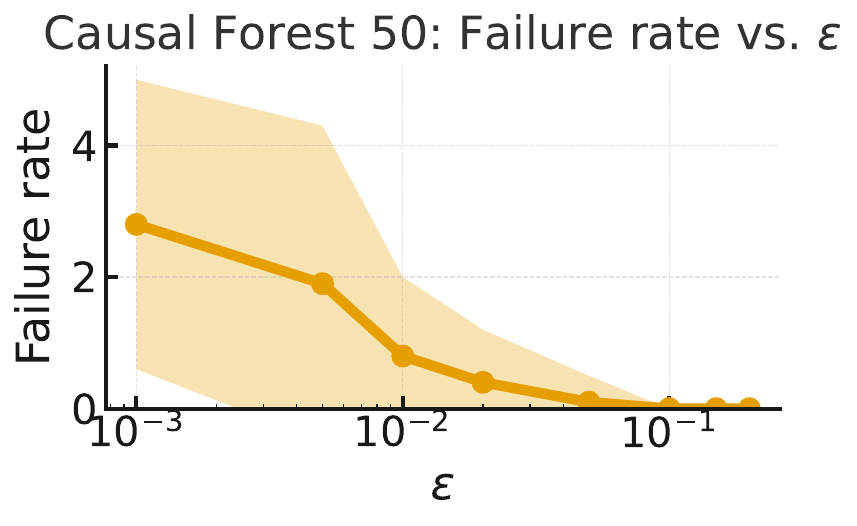}
            \caption{STAR, causal forest 50.}
            \label{fig:sub4}
        \end{subfigure}\hfill
        \begin{subfigure}{0.48\linewidth}
            \centering
            \includegraphics[height=3.2cm]{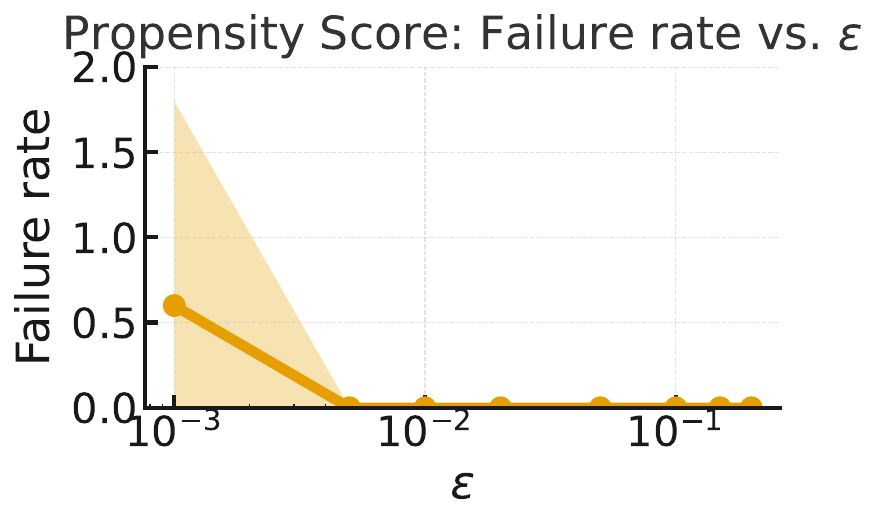}
            \caption{STAR, propensity score.}
            \label{fig:sub5}
        \end{subfigure}
    \end{minipage}
\end{figure}

\begin{figure}[htbp]
    \centering
    \begin{subfigure}{0.31\textwidth}
        \centering
        \includegraphics[height=3.2cm]{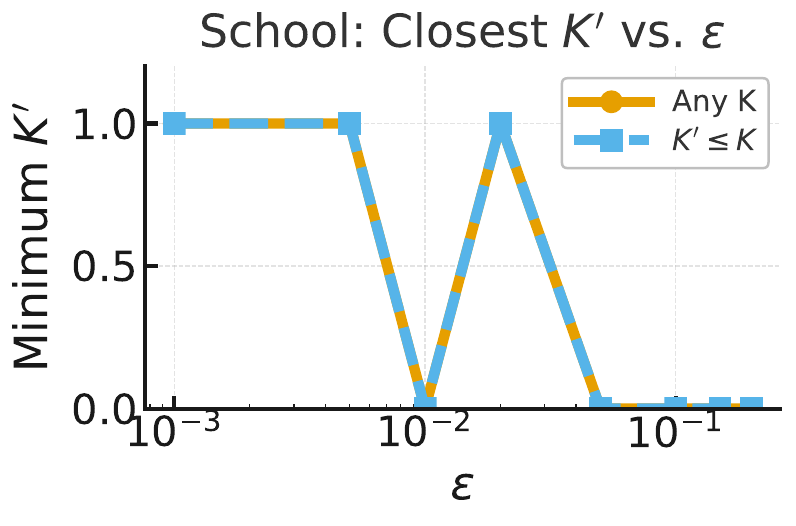}
        \caption{STAR, schools.}
        \label{fig:sub1}
    \end{subfigure}\hfill
    \begin{subfigure}{0.31\textwidth}
        \centering
        \includegraphics[height=3.2cm]{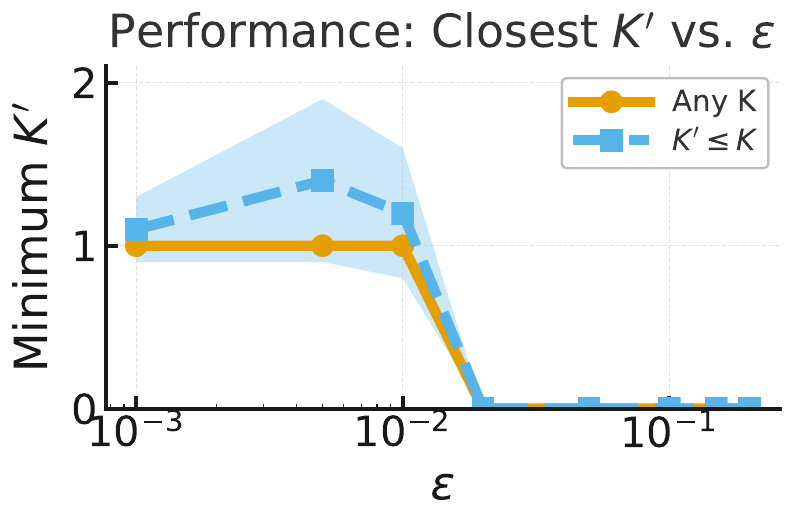}
        \caption{STAR, performance.}
        \label{fig:sub2}
    \end{subfigure}\hfill
    \begin{subfigure}{0.31\textwidth}
        \centering
        \includegraphics[height=3.2cm]{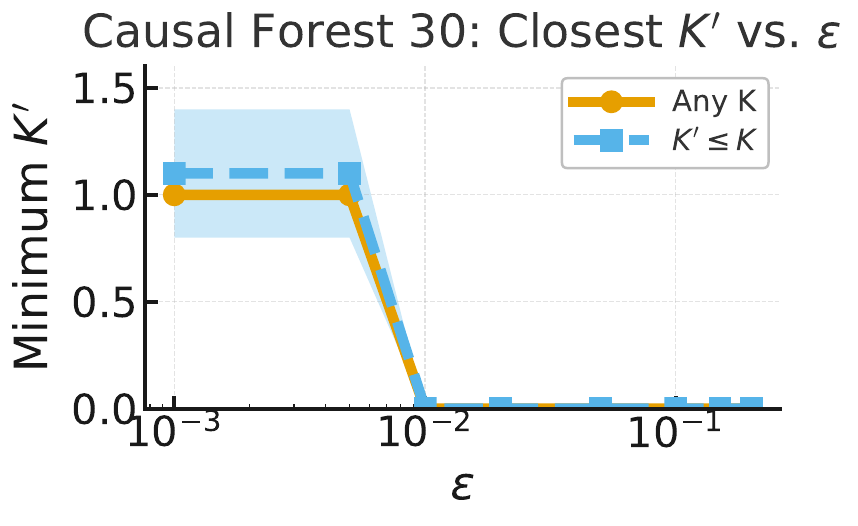}
        \caption{STAR, causal forest 30.}
        \label{fig:sub3}
    \end{subfigure}

    \medskip

    \begin{minipage}{0.64\textwidth}
        \centering
        \begin{subfigure}{0.48\linewidth}
            \centering
            \includegraphics[height=3.2cm]{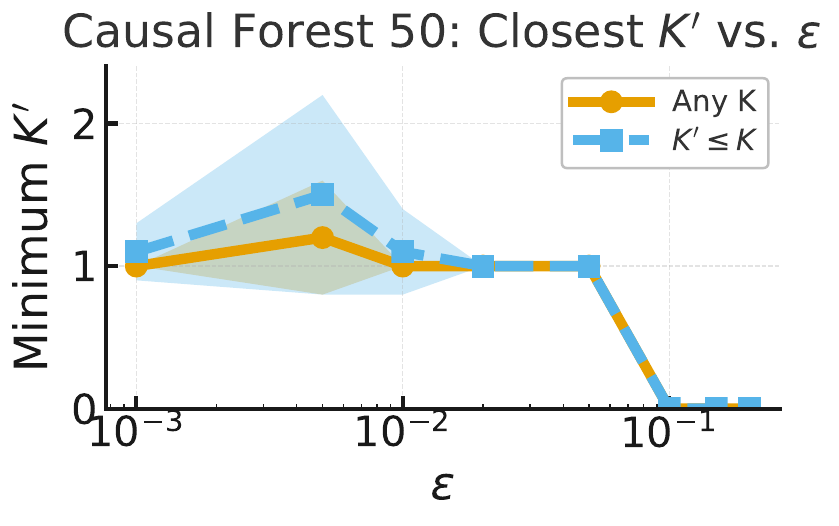}
            \caption{STAR, causal forest 50.}
            \label{fig:sub4}
        \end{subfigure}\hfill
        \begin{subfigure}{0.48\linewidth}
            \centering
            \includegraphics[height=3.2cm]{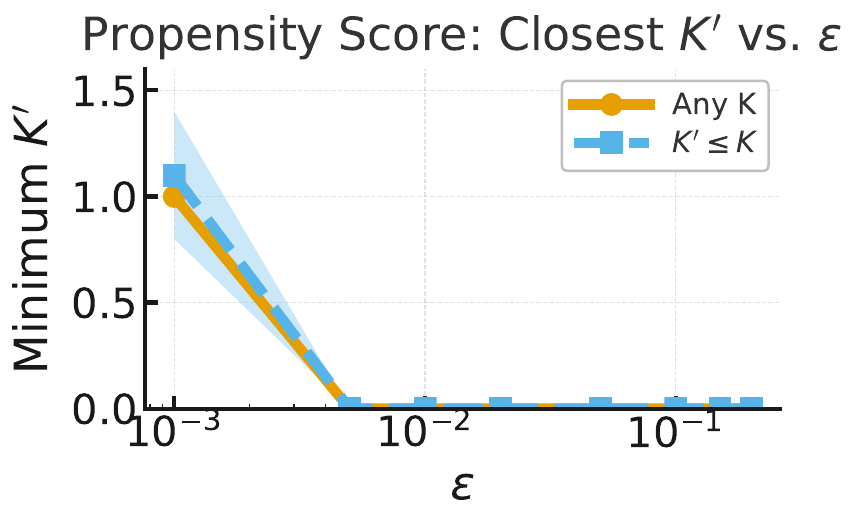}
            \caption{STAR, propensity score.}
            \label{fig:sub5}
        \end{subfigure}
    \end{minipage}
\end{figure}


\subsection{TUP dataset}

\begin{figure}[H]
    \centering
    \includegraphics[width=1\textwidth]{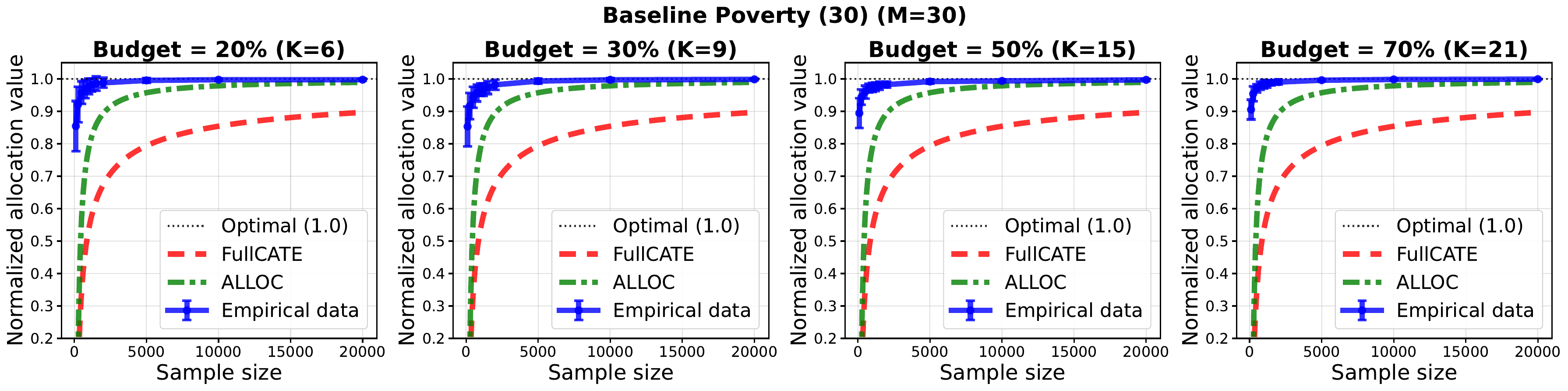}
    \caption{TUP dataset, baseline poverty 30.}
\end{figure}

\begin{figure}[H]
    \centering
    \includegraphics[width=1\textwidth]{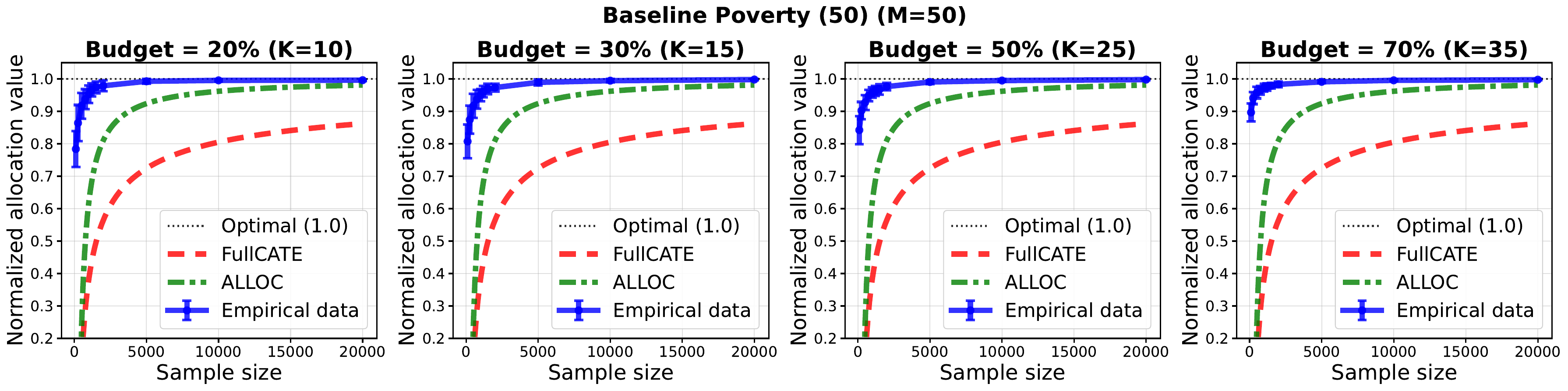}
    \caption{TUP dataset, baseline poverty 50.}
\end{figure}

\begin{figure}[H]
    \centering
    \includegraphics[width=1\textwidth]{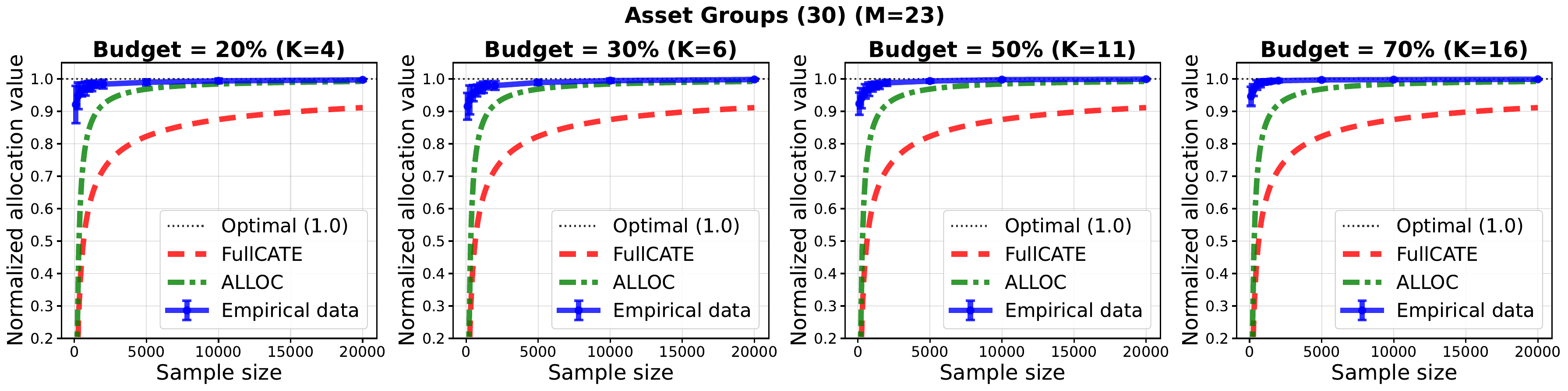}
    \caption{TUP dataset, asset groups 30.}
\end{figure}

\begin{figure}[H]
    \centering
    \includegraphics[width=1\textwidth]{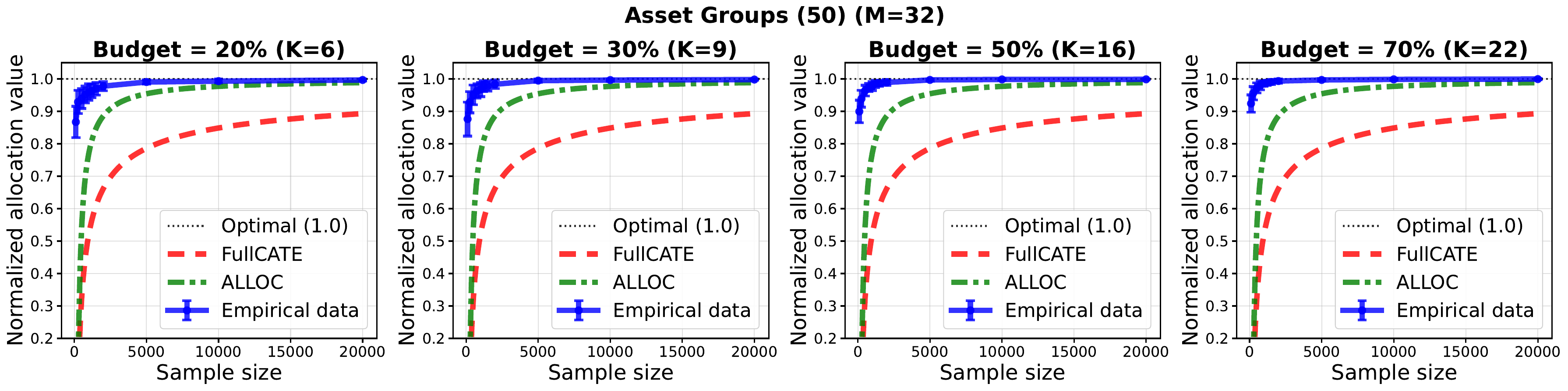}
    \caption{TUP dataset, asset groups 50.}
\end{figure}

\begin{figure}[H]
    \centering
    \includegraphics[width=1\textwidth]{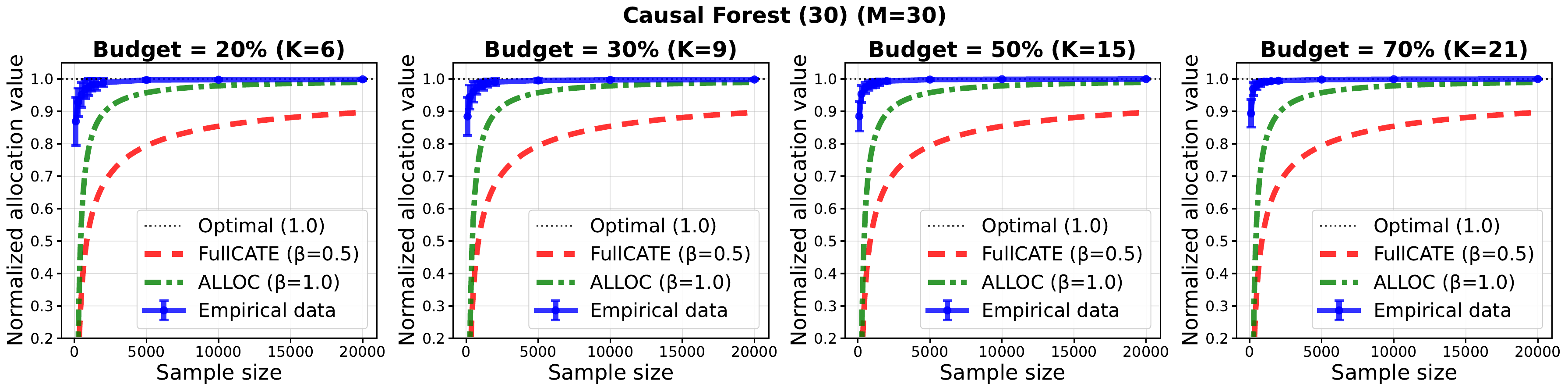}
    \caption{TUP dataset, causal forest 30.}
\end{figure}

\begin{figure}[H]
    \centering
    \includegraphics[width=1\textwidth]{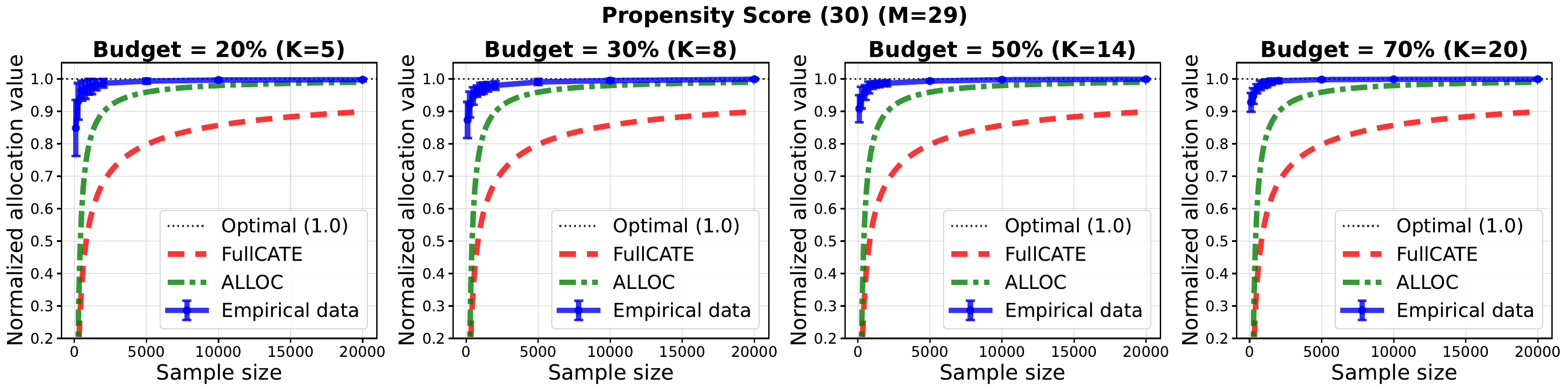}
    \caption{TUP dataset, propensity score.}
\end{figure}

\begin{figure}[htbp]
    \centering
    \begin{subfigure}{0.31\textwidth}
        \centering
        \includegraphics[height=3.2cm]{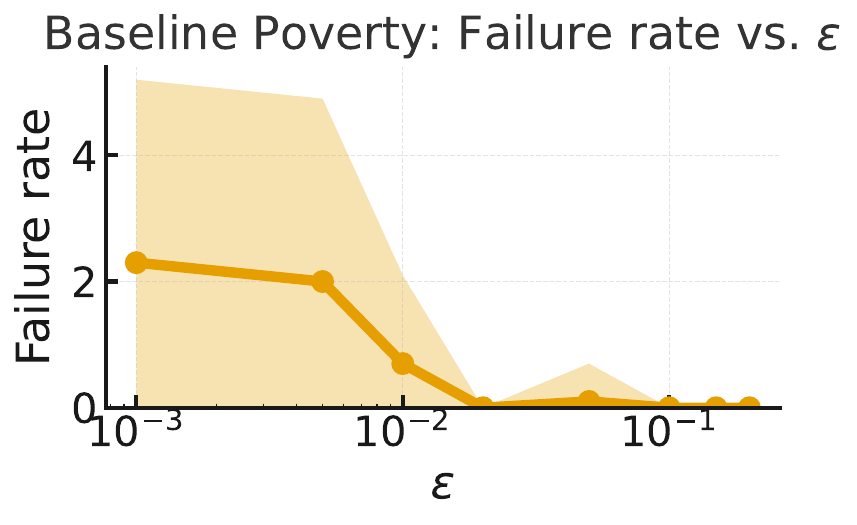}
        \caption{TUP, baseline poverty.}
        \label{fig:sub1}
    \end{subfigure}\hfill
    \begin{subfigure}{0.31\textwidth}
        \centering
        \includegraphics[height=3.2cm]{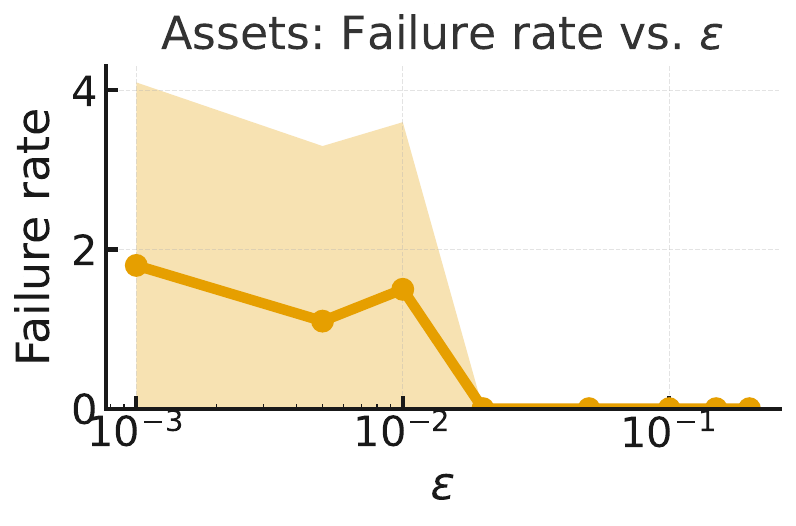}
        \caption{TUP, assets.}
        \label{fig:sub2}
    \end{subfigure}\hfill
    \begin{subfigure}{0.31\textwidth}
        \centering
        \includegraphics[height=3.2cm]{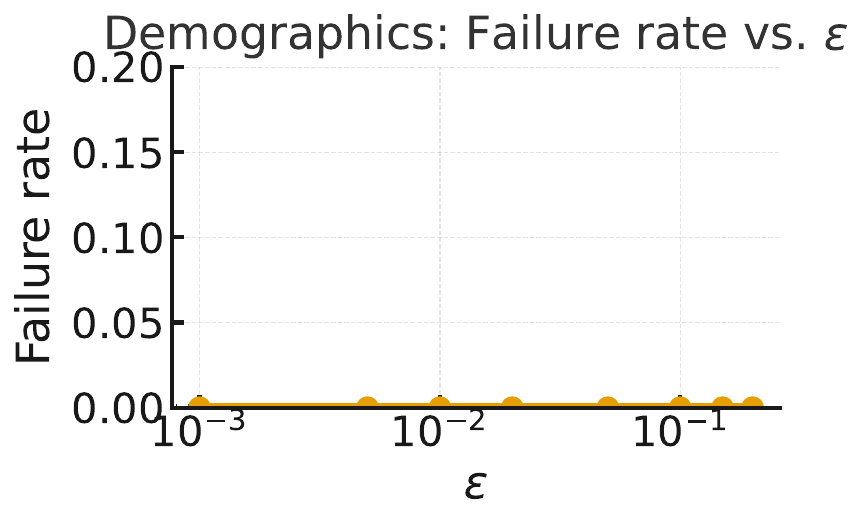}
        \caption{TUP, demographics.}
        \label{fig:sub3}
    \end{subfigure}

    \medskip

    \begin{minipage}{0.64\textwidth}
        \centering
        \begin{subfigure}{0.48\linewidth}
            \centering
            \includegraphics[height=3.2cm]{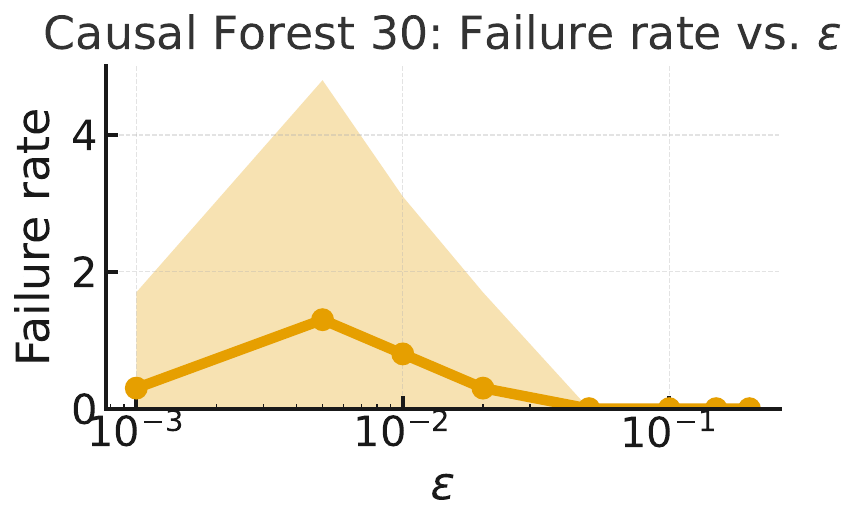}
            \caption{TUP, causal forest 30.}
            \label{fig:sub4}
        \end{subfigure}\hfill
        \begin{subfigure}{0.48\linewidth}
            \centering
            \includegraphics[height=3.2cm]{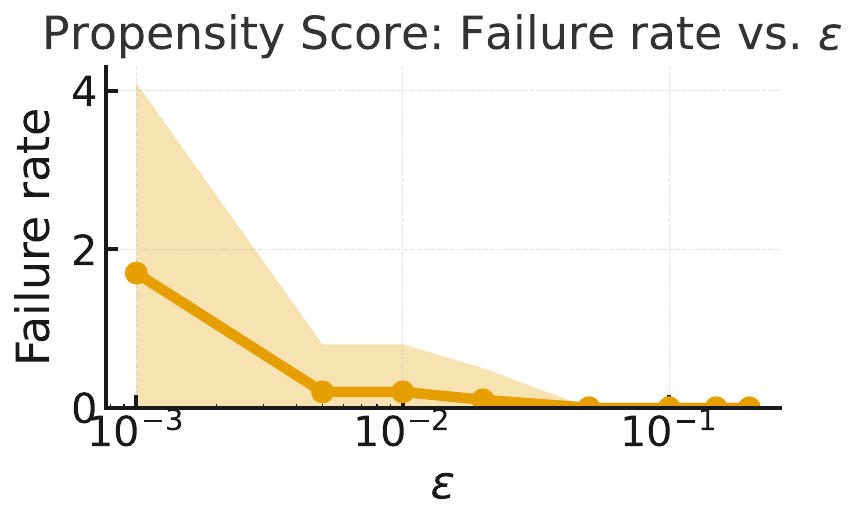}
            \caption{TUP, propensity score.}
            \label{fig:sub5}
        \end{subfigure}
    \end{minipage}
\end{figure}

\begin{figure}[htbp]
    \centering
    \begin{subfigure}{0.31\textwidth}
        \centering
        \includegraphics[height=3.2cm]{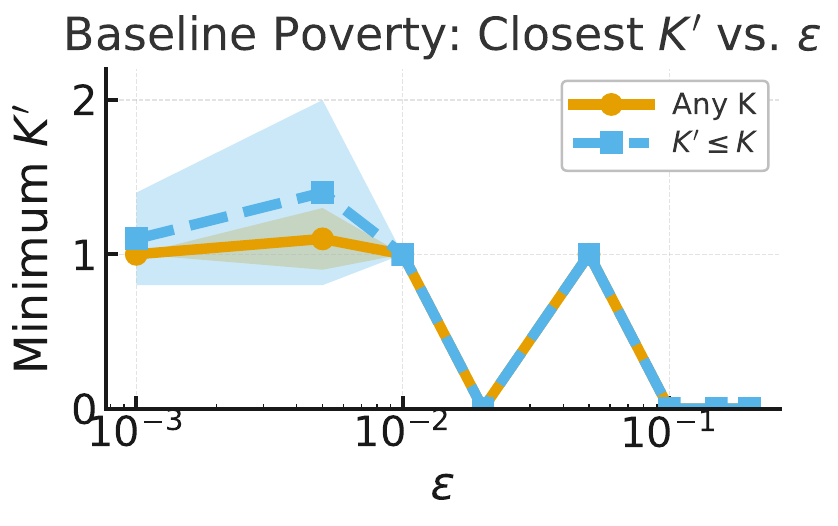}
        \caption{TUP, baseline poverty.}
        \label{fig:sub1}
    \end{subfigure}\hfill
    \begin{subfigure}{0.31\textwidth}
        \centering
        \includegraphics[height=3.2cm]{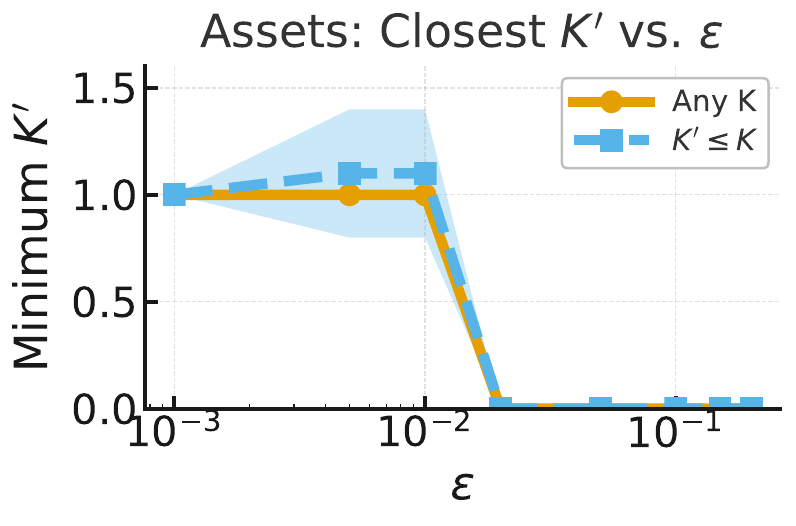}
        \caption{TUP, assets.}
        \label{fig:sub2}
    \end{subfigure}\hfill
    \begin{subfigure}{0.31\textwidth}
        \centering
        \includegraphics[height=3.2cm]{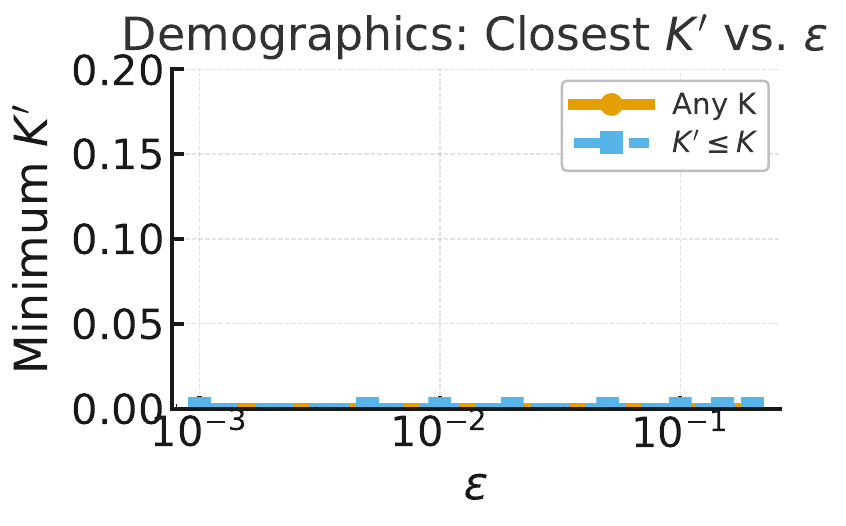}
        \caption{TUP, demographics.}
        \label{fig:sub3}
    \end{subfigure}

    \medskip

    \begin{minipage}{0.64\textwidth}
        \centering
        \begin{subfigure}{0.48\linewidth}
            \centering
            \includegraphics[height=3.2cm]{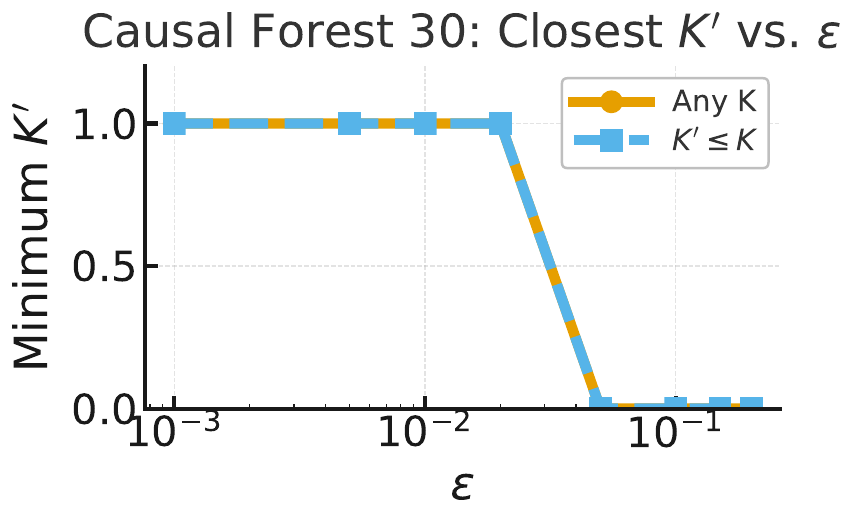}
            \caption{TUP, causal forest 30.}
            \label{fig:sub4}
        \end{subfigure}\hfill
        \begin{subfigure}{0.48\linewidth}
            \centering
            \includegraphics[height=3.2cm]{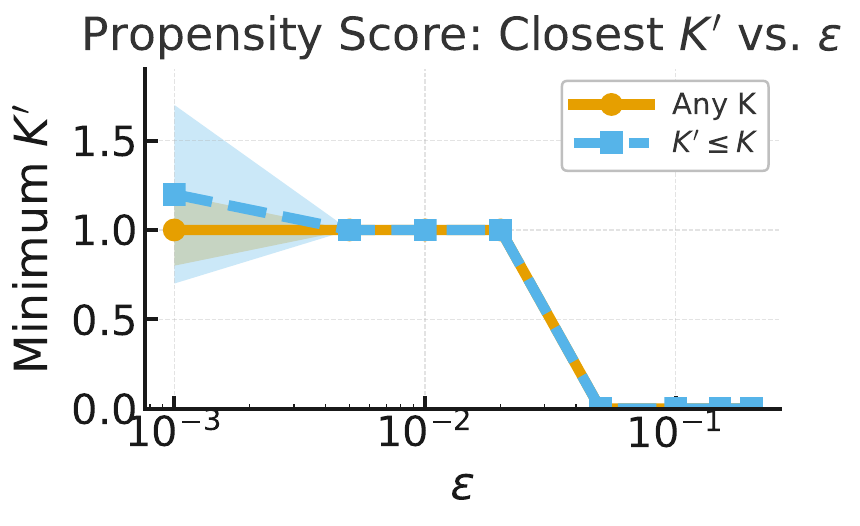}
            \caption{TUP, propensity score.}
            \label{fig:sub5}
        \end{subfigure}
    \end{minipage}
\end{figure}

\subsection{NSW dataset}

\begin{figure}[H]
    \centering
    \includegraphics[width=1\textwidth]{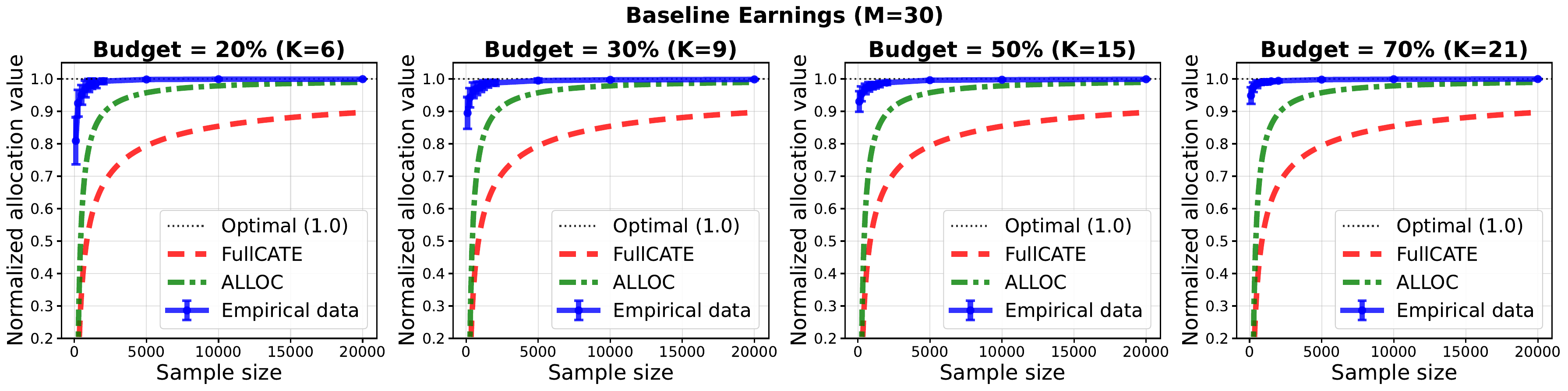}
    \caption{NSW dataset, baseline earnings.}
\end{figure}

\begin{figure}[H]
    \centering
    \includegraphics[width=1\textwidth]{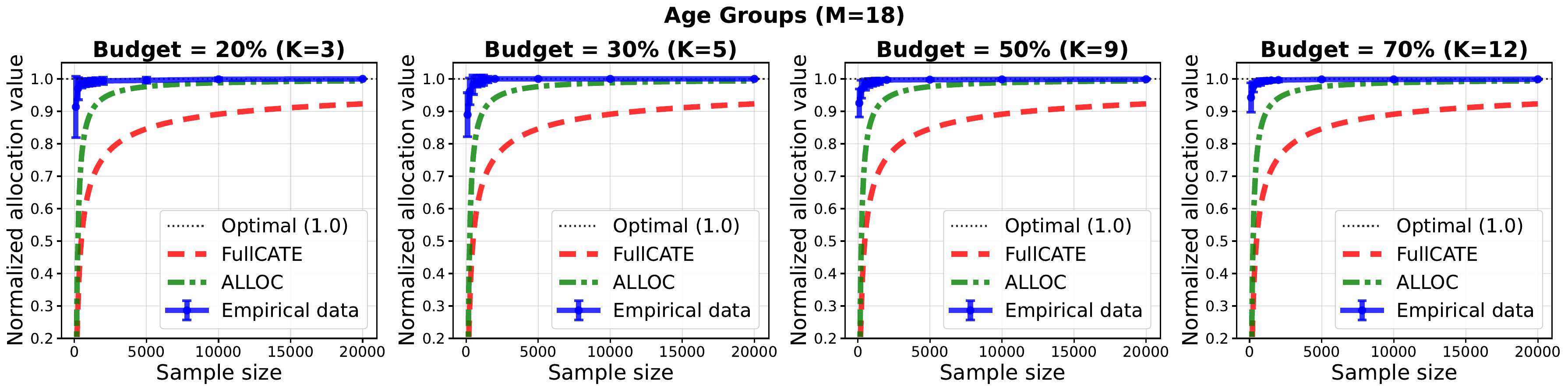}
    \caption{NSW dataset, age groups.}
\end{figure}

\begin{figure}[H]
    \centering
    \includegraphics[width=1\textwidth]{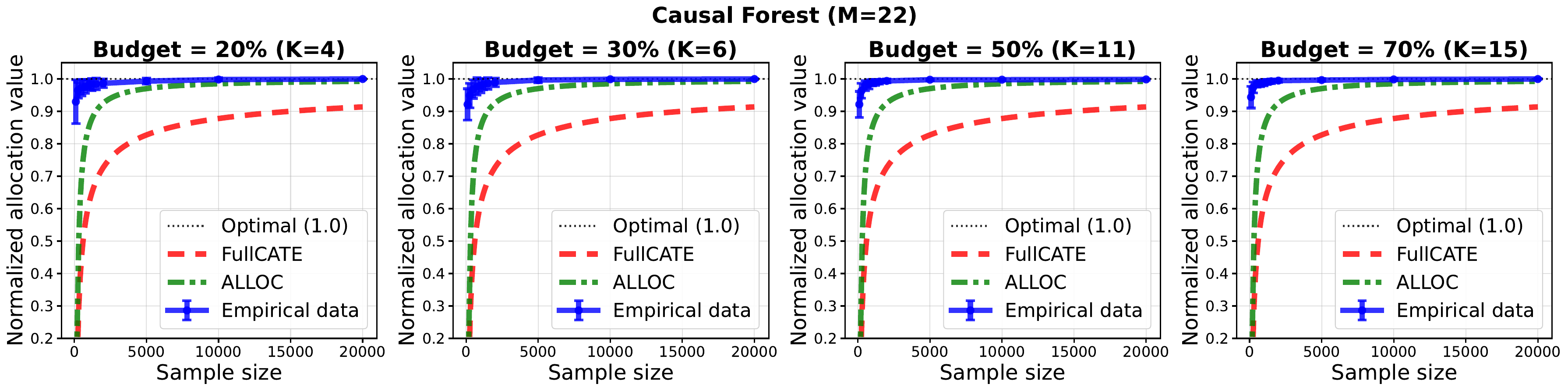}
    \caption{NSW dataset, causal forest.}
\end{figure}

\begin{figure}[H]
    \centering
    \includegraphics[width=1\textwidth]{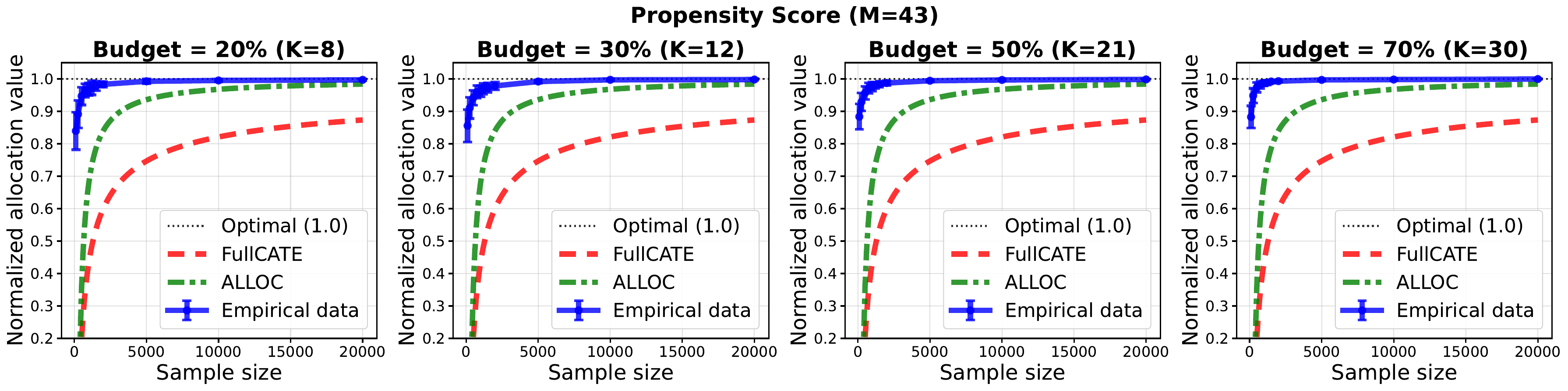}
    \caption{NSW dataset, propensity score.}
\end{figure}

\begin{figure}[htbp]
    \centering
    \begin{subfigure}{0.31\textwidth}
        \centering
        \includegraphics[height=2.8cm]{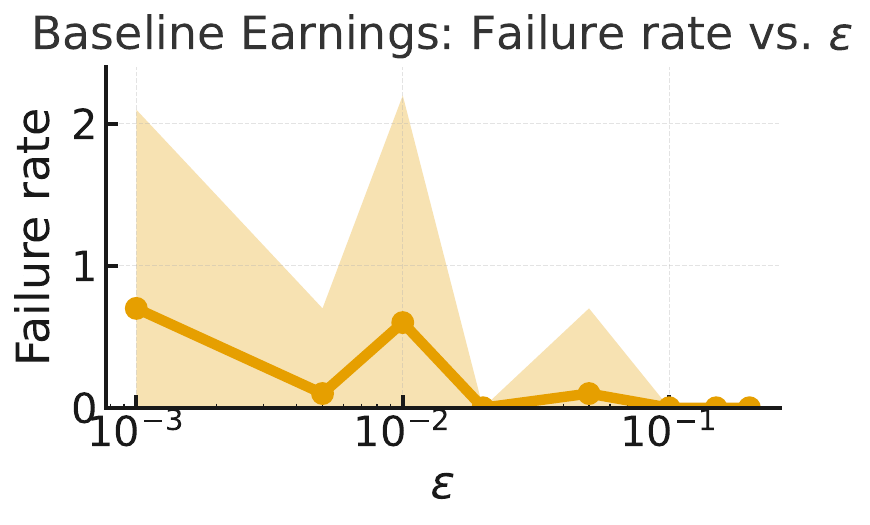}
        \caption{NSW, baseline earnings.}
        \label{fig:sub1}
    \end{subfigure}\hfill
    \begin{subfigure}{0.31\textwidth}
        \centering
        \includegraphics[height=2.8cm]{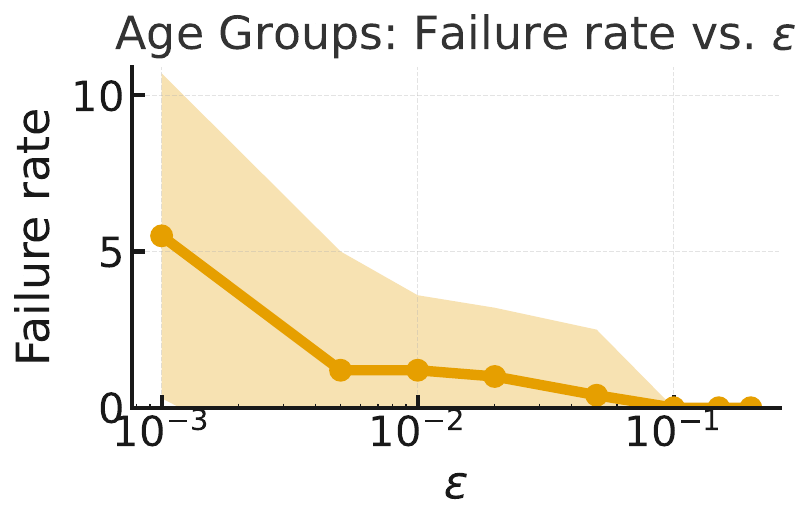}
        \caption{NSW, age groups.}
        \label{fig:sub2}
    \end{subfigure}\hfill
    \begin{subfigure}{0.31\textwidth}
        \centering
        \includegraphics[height=2.8cm]{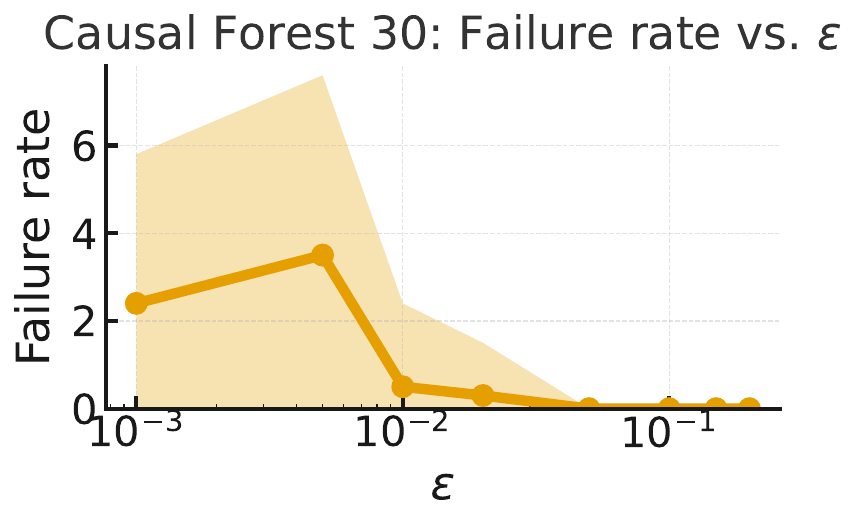}
        \caption{NSW, causal forest 30.}
        \label{fig:sub3}
    \end{subfigure}

    \smallskip

    \begin{minipage}{0.64\textwidth}
        \centering
        \begin{subfigure}{0.48\linewidth}
            \centering
            \includegraphics[height=2.8cm]{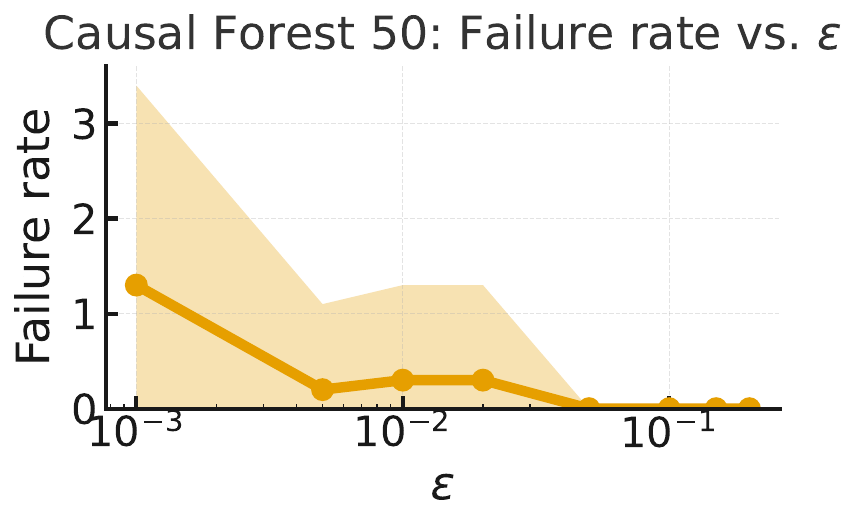}
            \caption{NSW, causal forest 50.}
            \label{fig:sub4}
        \end{subfigure}\hfill
        \begin{subfigure}{0.48\linewidth}
            \centering
            \includegraphics[height=2.8cm]{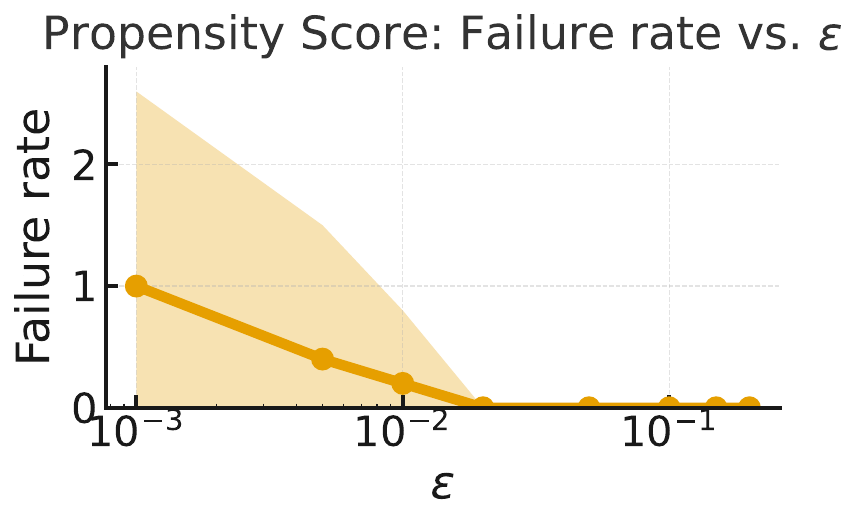}
            \caption{NSW, propensity score.}
            \label{fig:sub5}
        \end{subfigure}
    \end{minipage}

    \medskip

    \begin{subfigure}{0.31\textwidth}
        \centering
        \includegraphics[height=2.8cm]{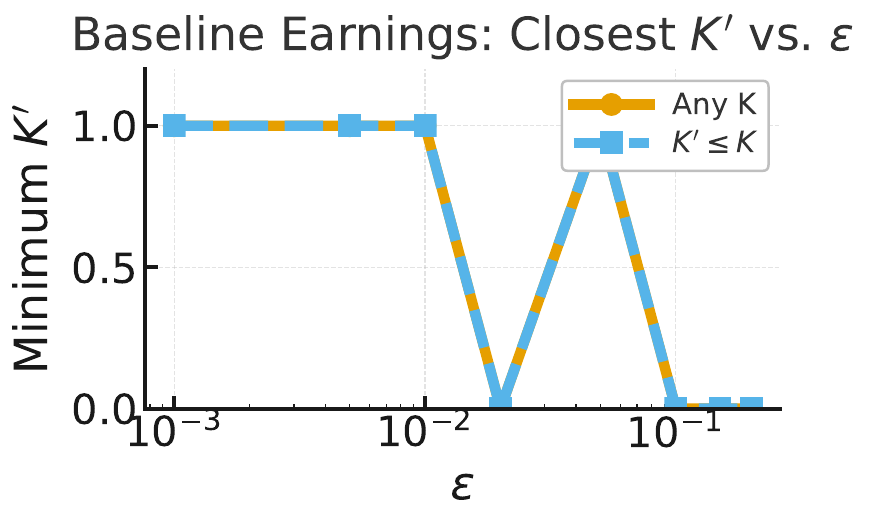}
        \caption{NSW, baseline earnings.}
        \label{fig:sub6}
    \end{subfigure}\hfill
    \begin{subfigure}{0.31\textwidth}
        \centering
        \includegraphics[height=2.8cm]{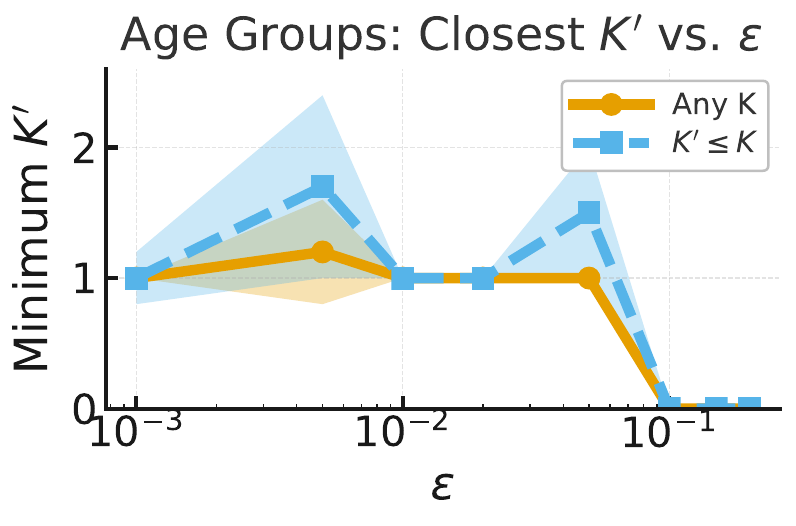}
        \caption{NSW, age groups.}
        \label{fig:sub7}
    \end{subfigure}\hfill
    \begin{subfigure}{0.31\textwidth}
        \centering
        \includegraphics[height=2.8cm]{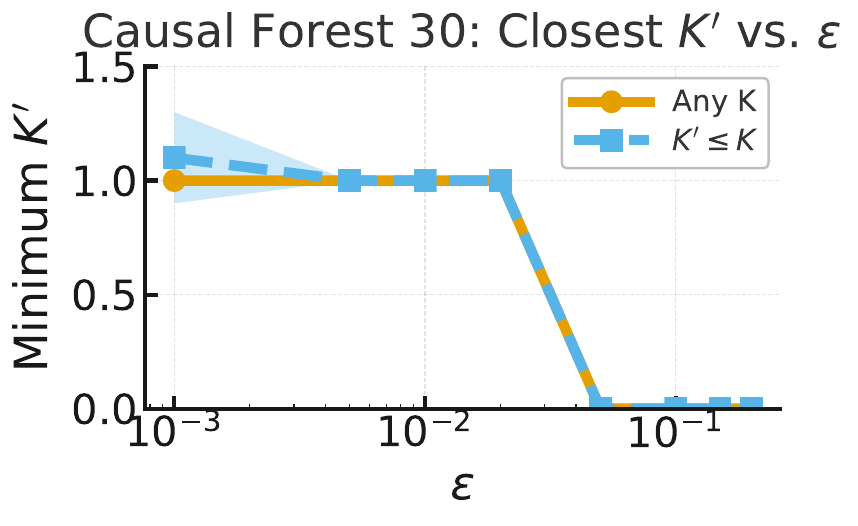}
        \caption{NSW, causal forest 30.}
        \label{fig:sub8}
    \end{subfigure}

    \smallskip

    \begin{minipage}{0.64\textwidth}
        \centering
        \begin{subfigure}{0.48\linewidth}
            \centering
            \includegraphics[height=2.8cm]{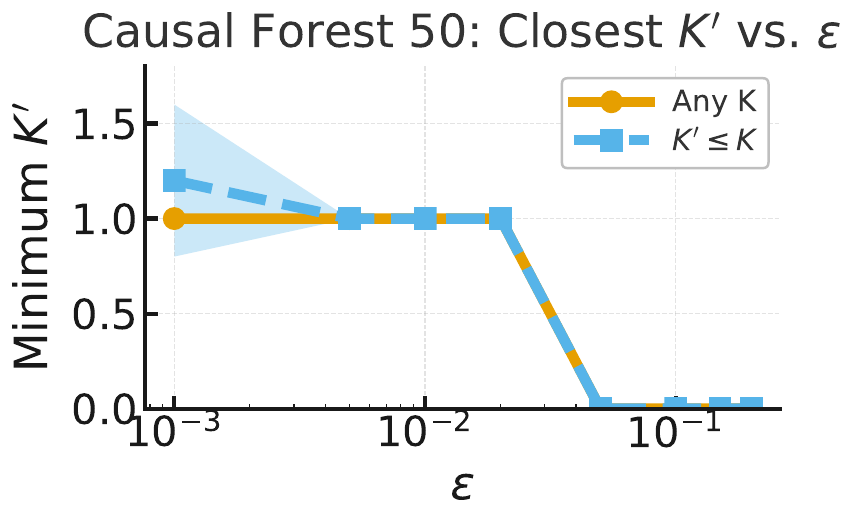}
            \caption{NSW, causal forest 50.}
            \label{fig:sub9}
        \end{subfigure}\hfill
        \begin{subfigure}{0.48\linewidth}
            \centering
            \includegraphics[height=2.8cm]{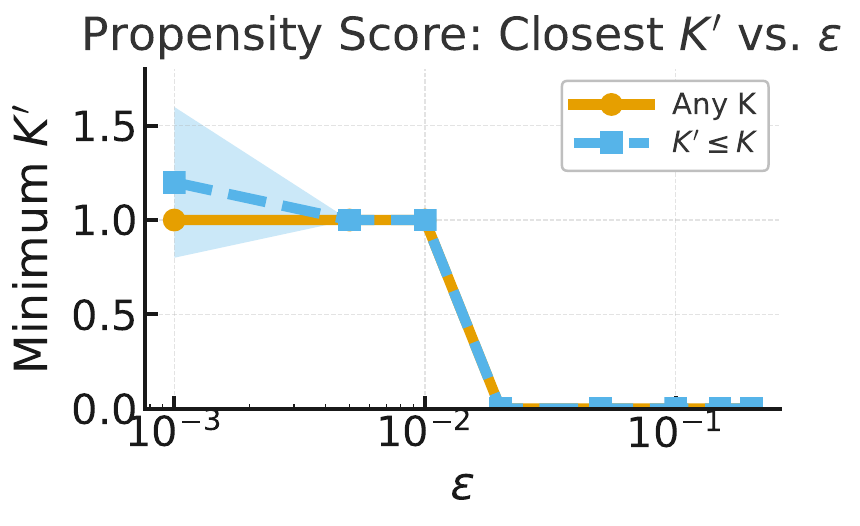}
            \caption{NSW, propensity score.}
            \label{fig:sub10}
        \end{subfigure}
    \end{minipage}
\end{figure}


\subsection{Acupuncture dataset}

\begin{figure}[H]
    \centering
    \includegraphics[width=1\textwidth]{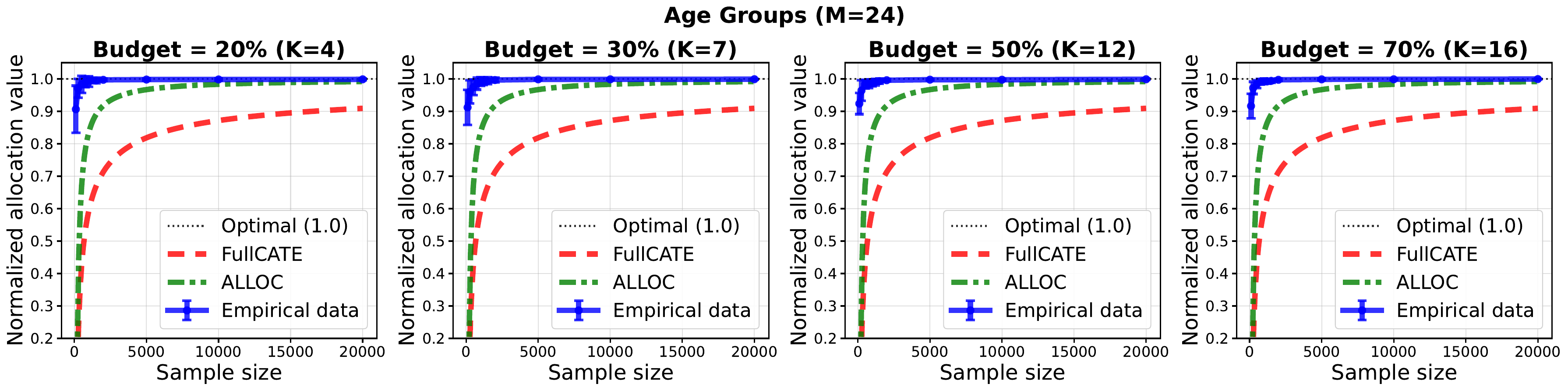}
    \caption{Acupuncture dataset, age groups.}
\end{figure}

\begin{figure}[H]
    \centering
    \includegraphics[width=1\textwidth]{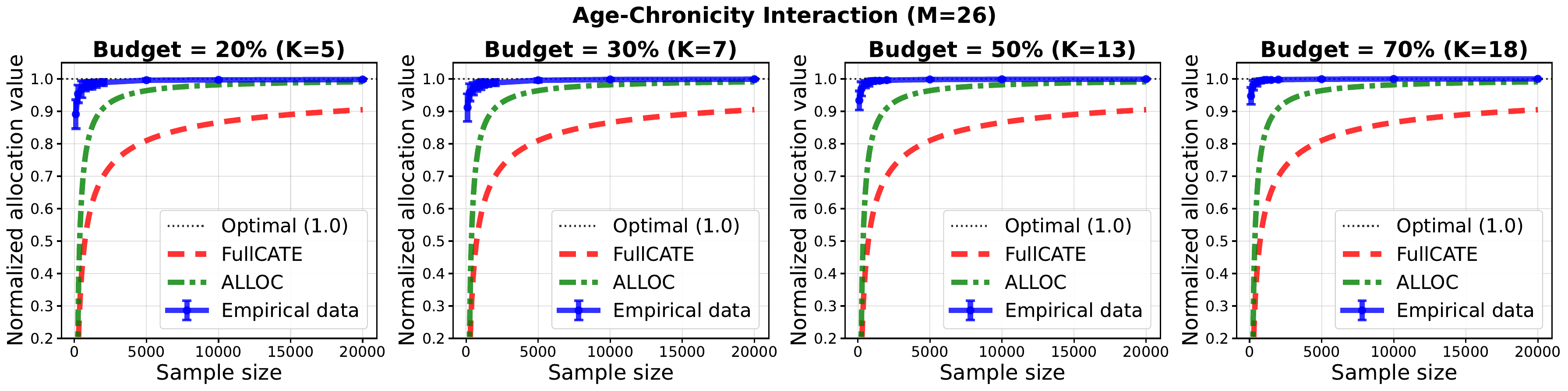}
    \caption{Acupuncture dataset, age-chronicity interaction.}
\end{figure}

\begin{figure}[H]
    \centering
    \includegraphics[width=1\textwidth]{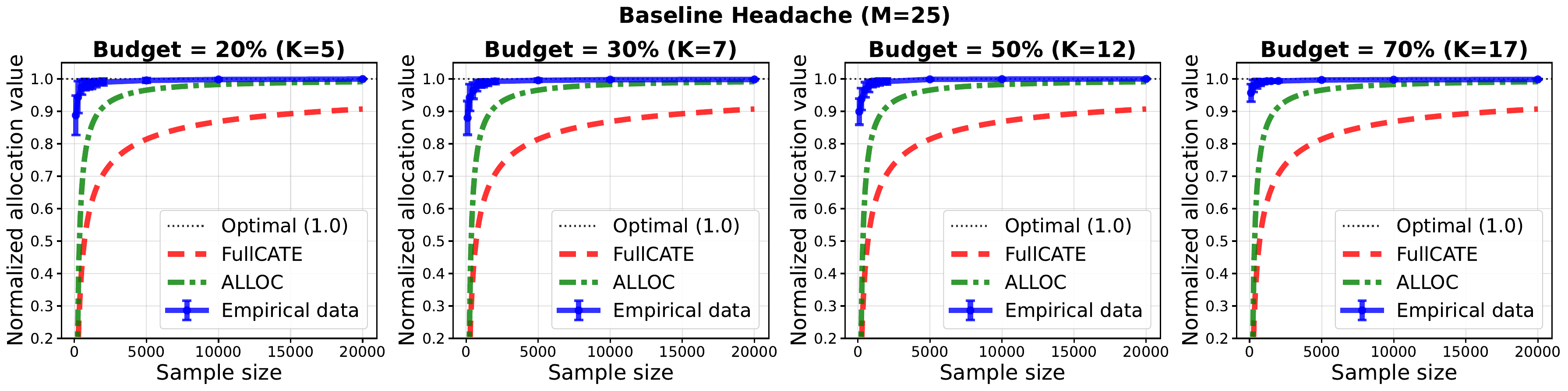}
    \caption{Acupuncture dataset, baseline headache.}
\end{figure}

\begin{figure}[H]
    \centering
    \includegraphics[width=1\textwidth]{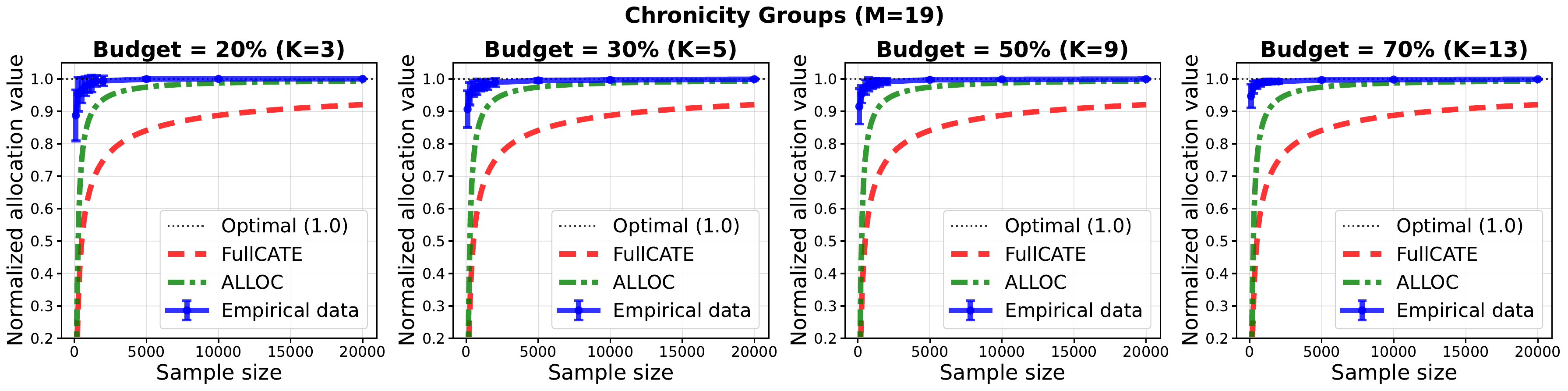}
    \caption{Acupuncture dataset, chronicity groups.}
\end{figure}

\begin{figure}[H]
    \centering
    \includegraphics[width=1\textwidth]{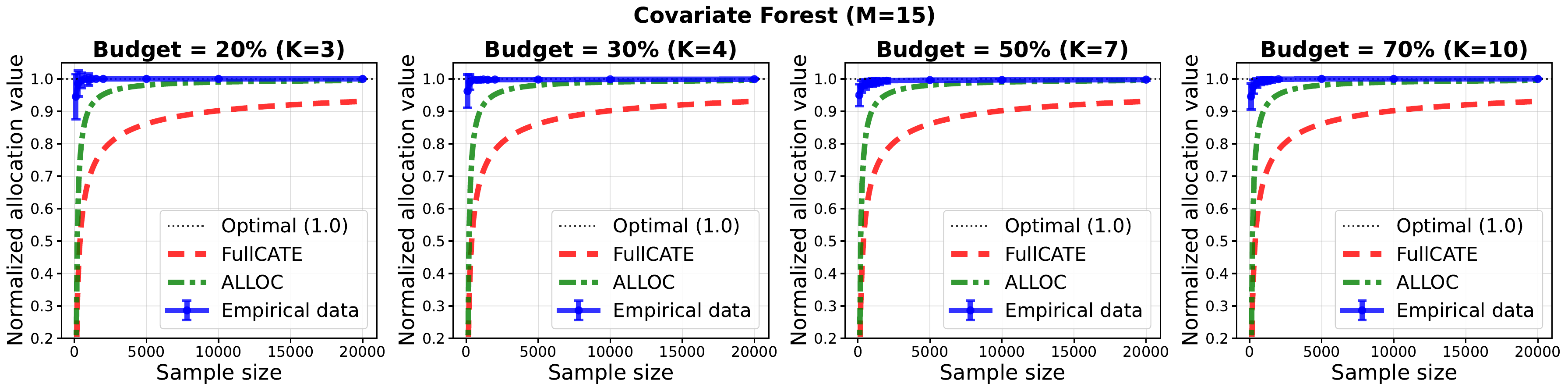}
    \caption{Acupuncture dataset, covariate forest.}
\end{figure}

\begin{figure}[H]
    \centering
    \includegraphics[width=1\textwidth]{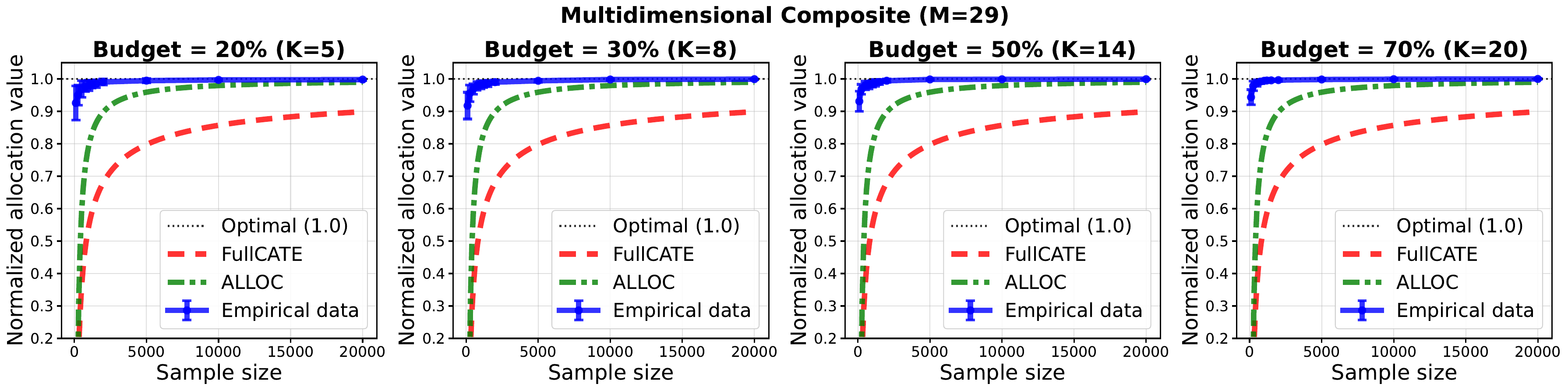}
    \caption{Acupuncture dataset, multidimensional.}
\end{figure}

\begin{figure}[htbp]
    \centering
    \begin{subfigure}{0.30\textwidth}
        \centering
        \includegraphics[height=2.8cm]{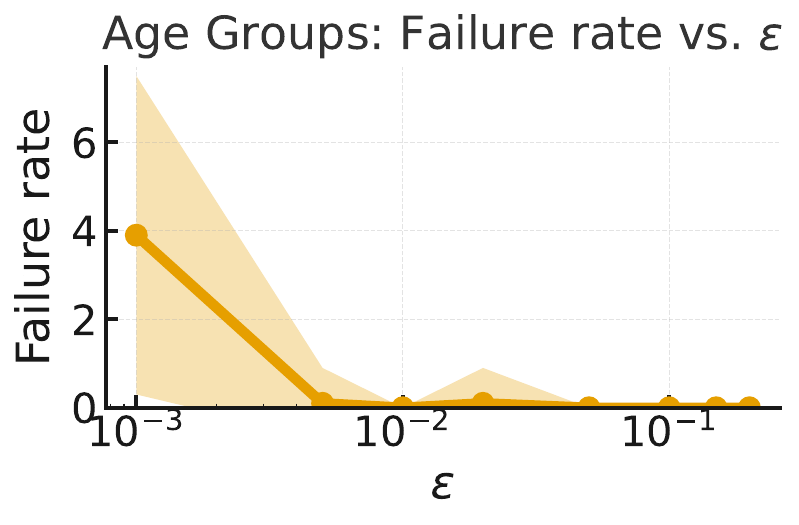}
        \caption{Acup., age groups.}
        \label{fig:sub1}
    \end{subfigure}\hfill
    \begin{subfigure}{0.30\textwidth}
        \centering
        \includegraphics[height=2.8cm]{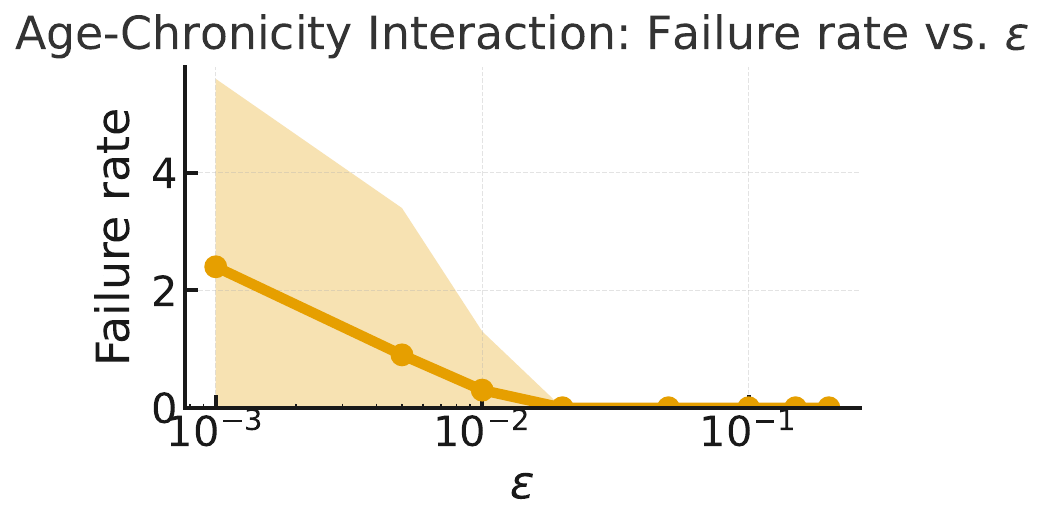}
        \caption{Acup., age chronicity interaction.}
        \label{fig:sub2}
    \end{subfigure}\hfill
    \begin{subfigure}{0.30\textwidth}
        \centering
        \includegraphics[height=2.8cm]{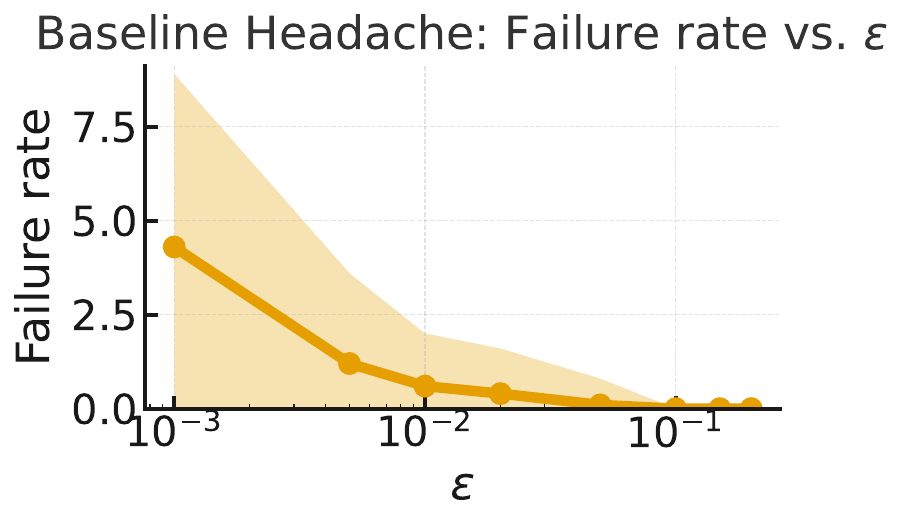}
        \caption{Acup., baseline headache.}
        \label{fig:sub3}
    \end{subfigure}

    \medskip

    \begin{minipage}{0.62\textwidth}
        \centering
        \begin{subfigure}{0.48\linewidth}
            \centering
            \includegraphics[height=2.8cm]{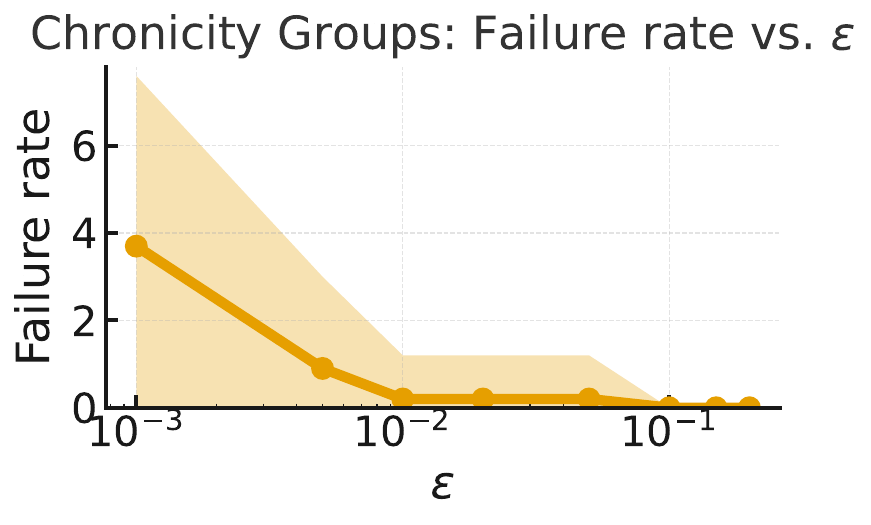}
            \caption{Acup., chronicity groups.}
            \label{fig:sub4}
        \end{subfigure}\hfill
        \begin{subfigure}{0.48\linewidth}
            \centering
            \includegraphics[height=2.8cm]{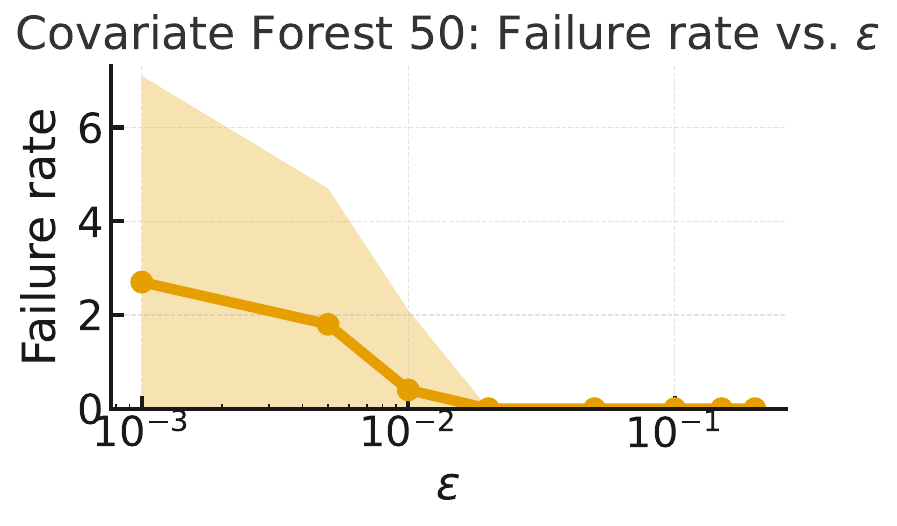}
            \caption{Acup., covariate forest 50.}
            \label{fig:sub5}
        \end{subfigure}
    \end{minipage}
\end{figure}

\begin{figure}[htbp]
    \centering
    \begin{subfigure}{0.30\textwidth}
        \centering
        \includegraphics[height=2.8cm]{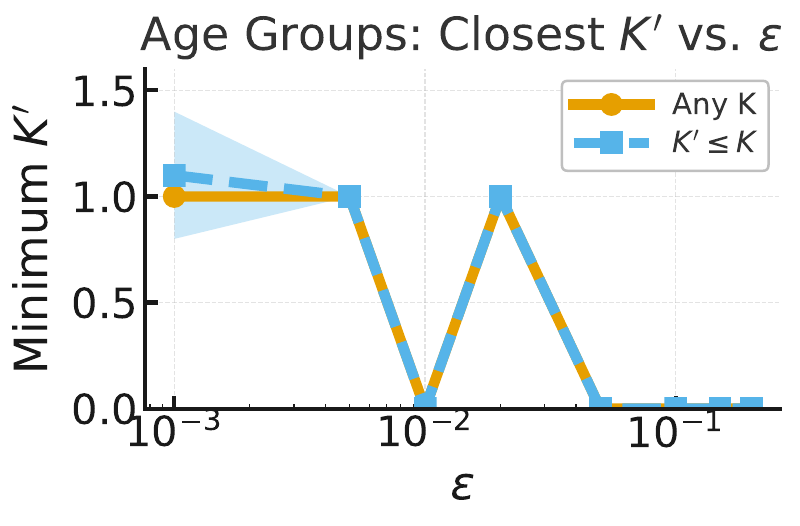}
        \caption{Acup., age groups.}
        \label{fig:sub1}
    \end{subfigure}\hfill
    \begin{subfigure}{0.30\textwidth}
        \centering
        \includegraphics[height=2.8cm]{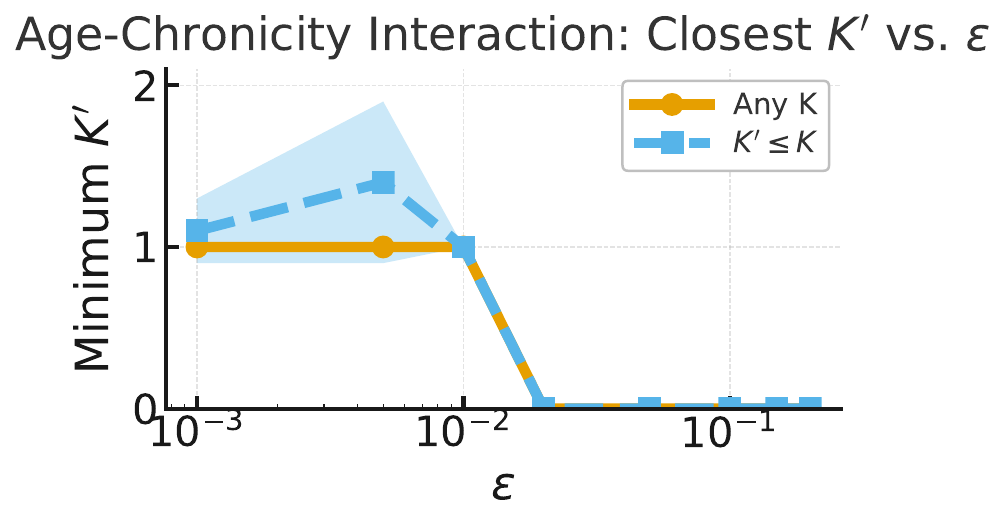}
        \caption{Acup., age chronicity interaction.}
        \label{fig:sub2}
    \end{subfigure}\hfill
    \begin{subfigure}{0.30\textwidth}
        \centering
        \includegraphics[height=2.8cm]{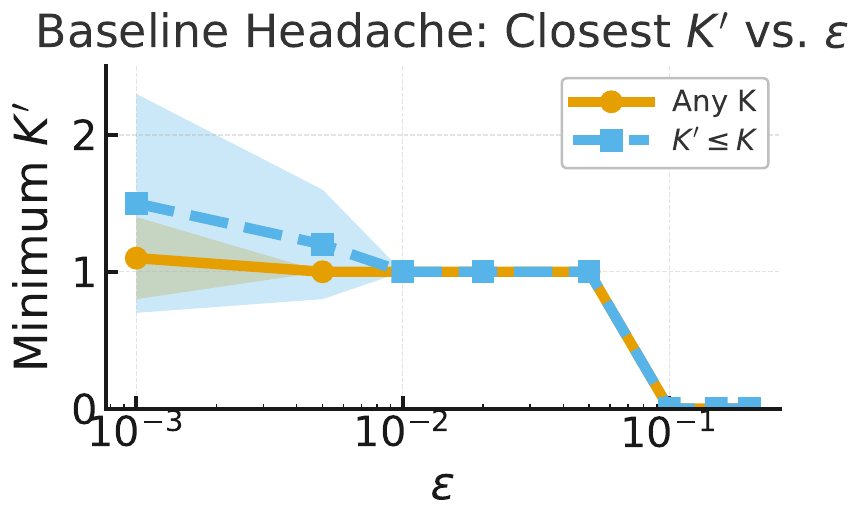}
        \caption{Acup., baseline headache.}
        \label{fig:sub3}
    \end{subfigure}

    \medskip

    \begin{subfigure}{0.30\textwidth}
        \centering
        \includegraphics[height=2.8cm]{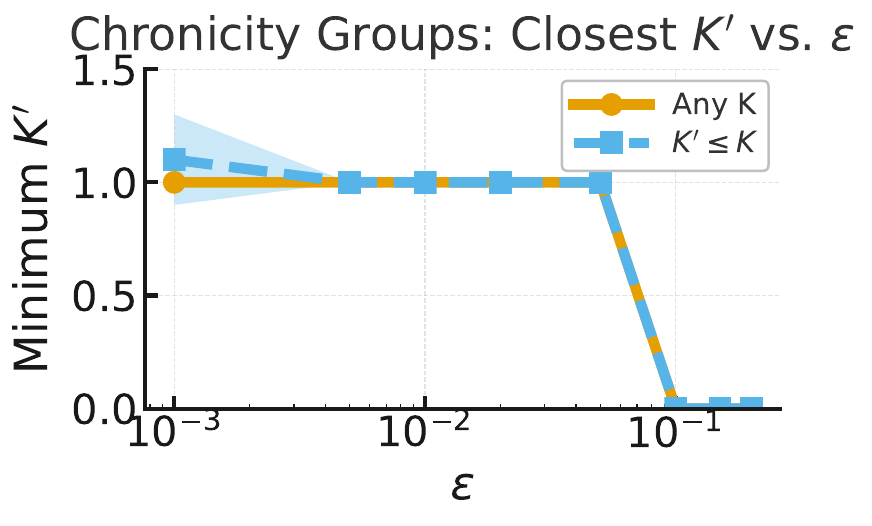}
        \caption{Acup., chronicity groups.}
        \label{fig:sub4}
    \end{subfigure}\hfill
    \begin{subfigure}{0.30\textwidth}
        \centering
        \includegraphics[height=2.8cm]{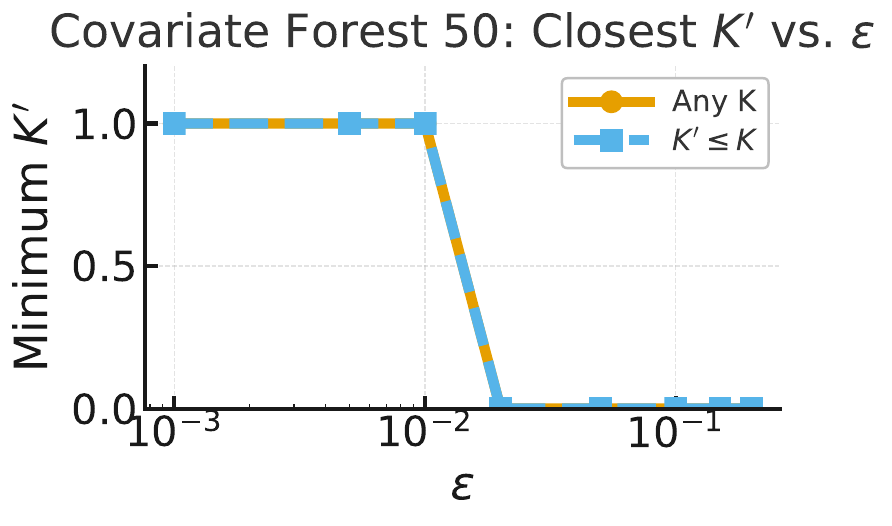}
        \caption{Acup., covariate forest 50.}
        \label{fig:sub5}
    \end{subfigure}\hfill
    \begin{subfigure}{0.30\textwidth}
        \centering
        \includegraphics[height=2.8cm]{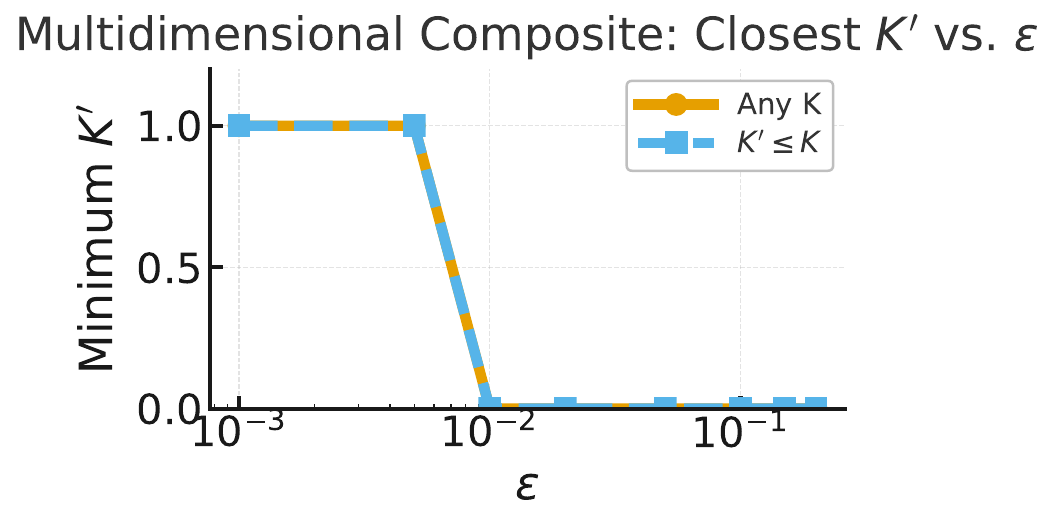}
        \caption{Acup., multidimensional composite.}
        \label{fig:sub6}
    \end{subfigure}
\end{figure}

\clearpage

\subsection{Postoperative dataset}

\begin{figure}[H]
    \centering
    \includegraphics[width=1\textwidth]{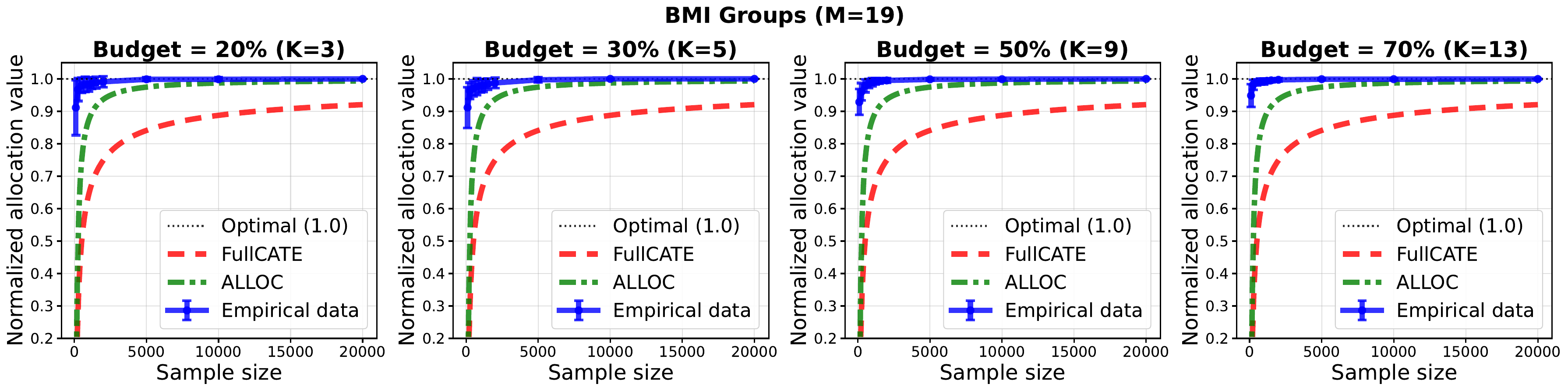}
    \caption{Postoperative dataset, BMI.}
\end{figure}

\begin{figure}[H]
    \centering
    \includegraphics[width=1\textwidth]{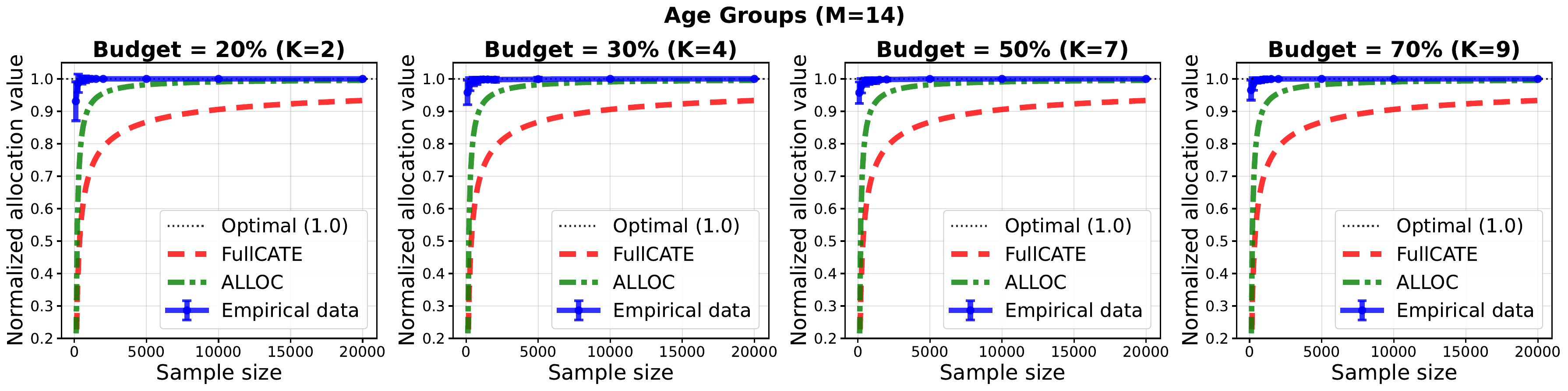}
    \caption{Postoperative dataset, age groups.}
\end{figure}

\begin{figure}[H]
    \centering
    \includegraphics[width=1\textwidth]{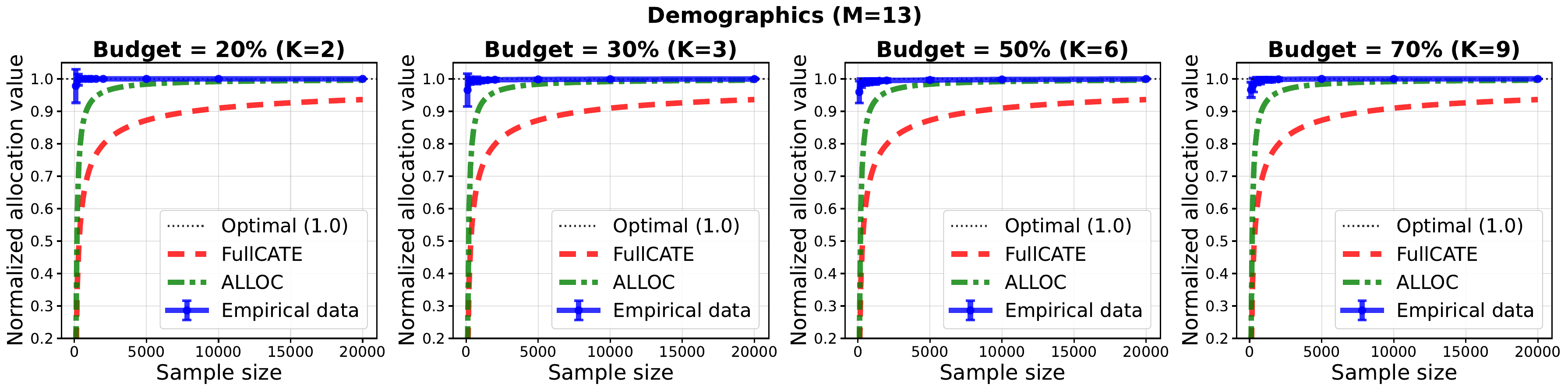}
    \caption{Postoperative dataset, demographics.}
\end{figure}

\begin{figure}[H]
    \centering
    \includegraphics[width=1\textwidth]{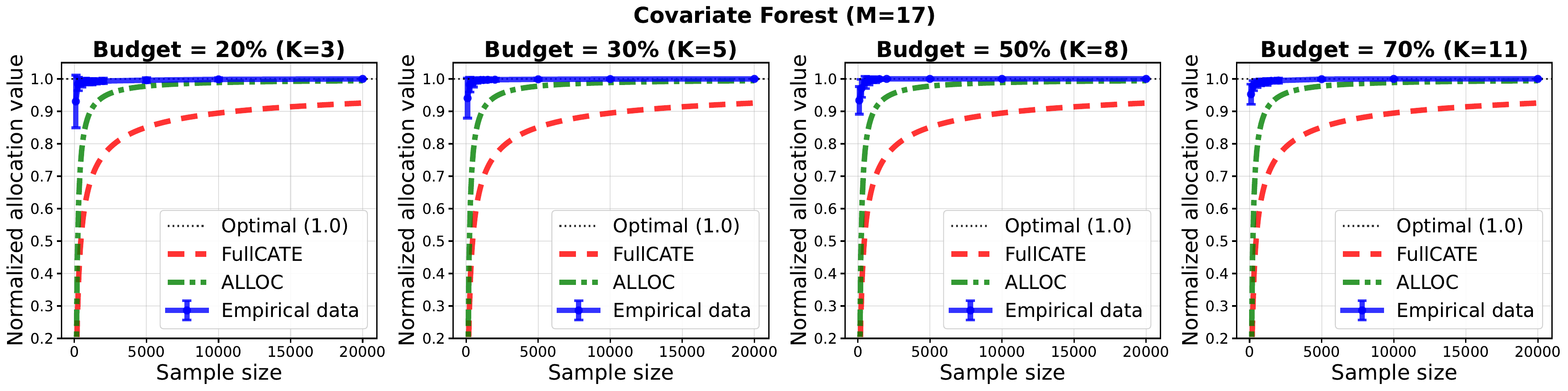}
    \caption{Postoperative dataset, covariate forest.}
\end{figure}

\begin{figure}[htbp]
    \centering
    \begin{minipage}{0.64\textwidth}
        \centering
        \begin{subfigure}{0.48\linewidth}
            \centering
            \includegraphics[height=2.8cm]{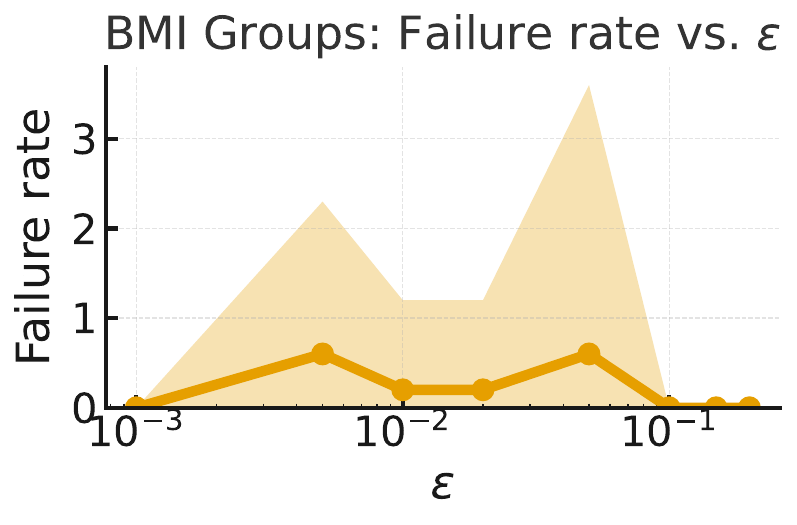}
            \caption{Post-op, BMI.}
            \label{fig:sub1}
        \end{subfigure}\hfill
        \begin{subfigure}{0.48\linewidth}
            \centering
            \includegraphics[height=2.8cm]{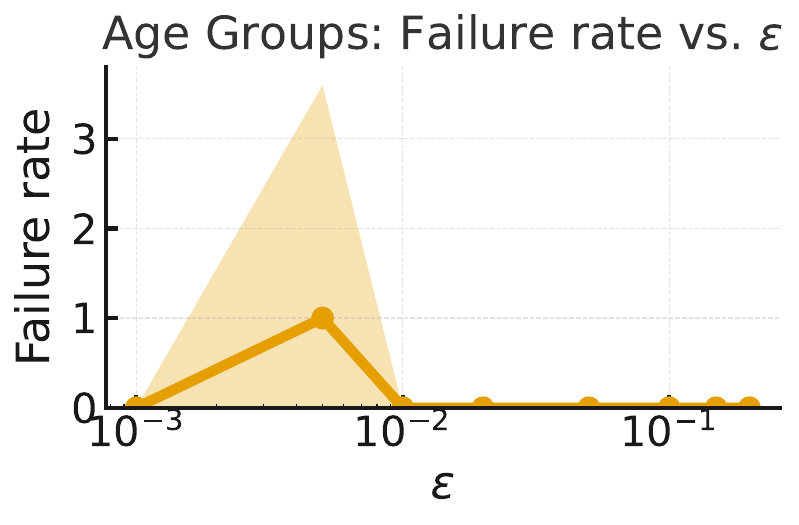}
            \caption{Post-op, age groups.}
            \label{fig:sub2}
        \end{subfigure}
    \end{minipage}

    \medskip

    \begin{minipage}{0.64\textwidth}
        \centering
        \begin{subfigure}{0.48\linewidth}
            \centering
            \includegraphics[height=2.8cm]{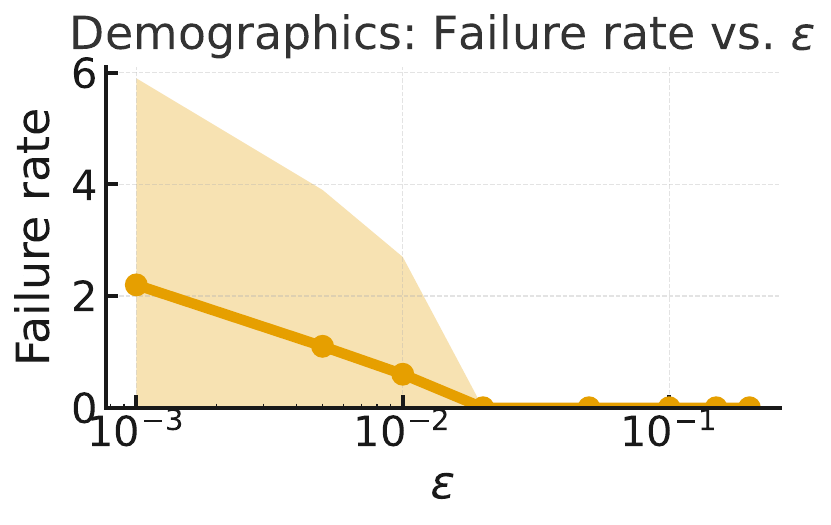}
            \caption{Post-op, demographics.}
            \label{fig:sub3}
        \end{subfigure}\hfill
        \begin{subfigure}{0.48\linewidth}
            \centering
            \includegraphics[height=2.8cm]{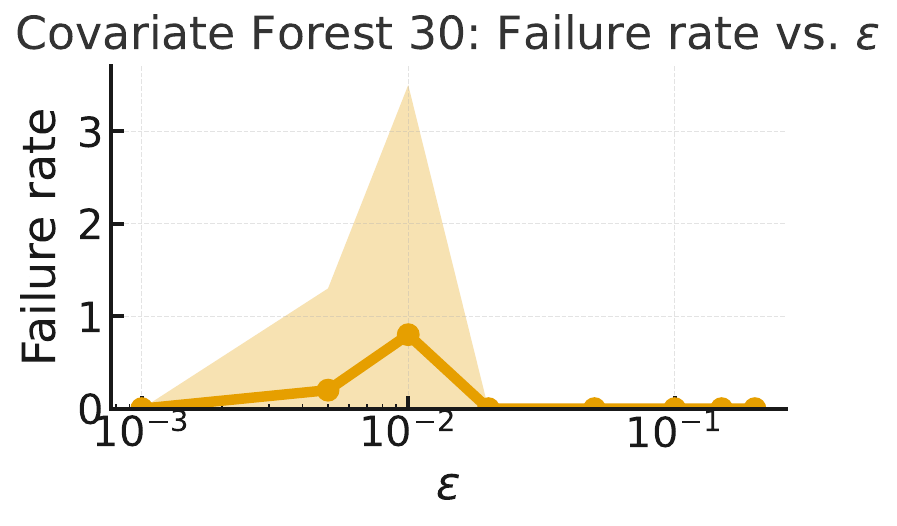}
            \caption{Post-op, covariate forest 30.}
            \label{fig:sub4}
        \end{subfigure}
    \end{minipage}

    \bigskip

    \begin{minipage}{0.64\textwidth}
        \centering
        \begin{subfigure}{0.48\linewidth}
            \centering
            \includegraphics[height=2.8cm]{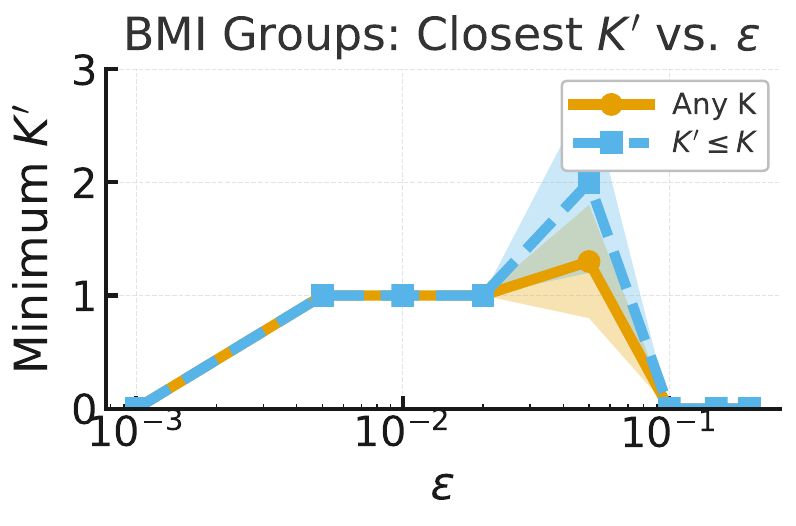}
            \caption{Post-op, BMI.}
            \label{fig:sub5}
        \end{subfigure}\hfill
        \begin{subfigure}{0.48\linewidth}
            \centering
            \includegraphics[height=2.8cm]{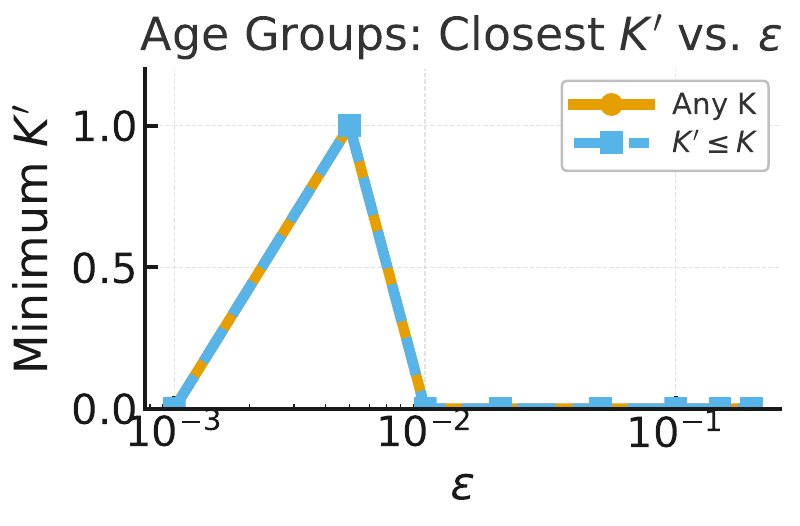}
            \caption{Post-op, age groups.}
            \label{fig:sub6}
        \end{subfigure}
    \end{minipage}

    \medskip

    \begin{minipage}{0.64\textwidth}
        \centering
        \begin{subfigure}{0.48\linewidth}
            \centering
            \includegraphics[height=2.8cm]{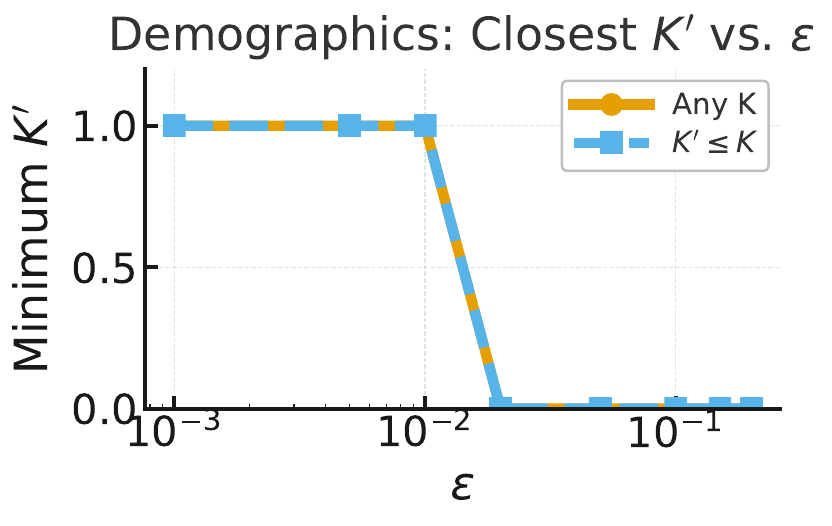}
            \caption{Post-op, demographics.}
            \label{fig:sub7}
        \end{subfigure}\hfill
        \begin{subfigure}{0.48\linewidth}
            \centering
            \includegraphics[height=2.8cm]{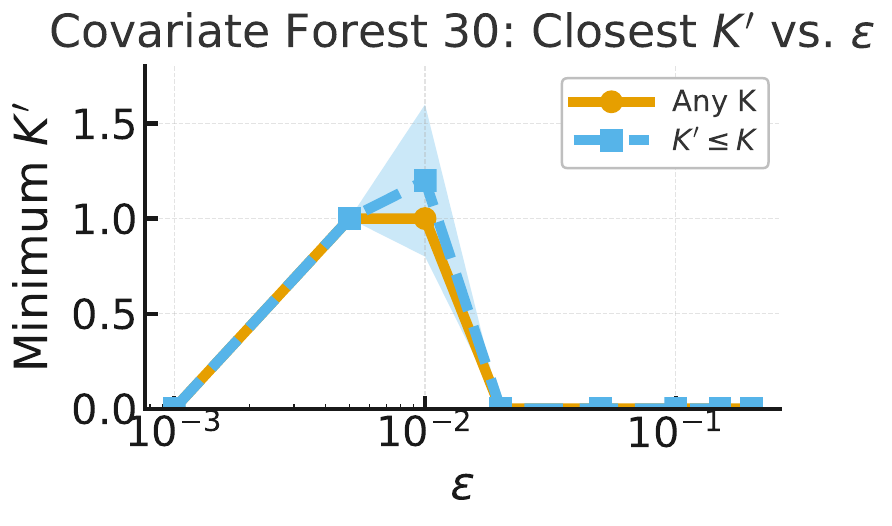}
            \caption{Post-op, covariate forest 30.}
            \label{fig:sub8}
        \end{subfigure}
    \end{minipage}
\end{figure}

\end{document}